\numberwithin{equation}{section}
\DeclareMathOperator*{\argmin}{arg\,min}
\def\eqnok#1{(\ref{#1})}
\newcommand{\edits}[1]{#1}
\newcommand{\revise}[1]{#1}
\newcommand{\nA}{{n_{\mathcal{A}}}}
\newcommand{\nS}{{n_{\mathcal{S}}}}
\newcommand{\tsum}{\textstyle\sum}
\newcommand{\bbe}{\mathbb{E}}
\def\prob{\mathop{\rm Prob}}
\newcommand{\beq}{\begin{equation}}
\newcommand{\eeq}{\end{equation}}
\newcommand{\beqa}{\begin{eqnarray}}
\newcommand{\eeqa}{\end{eqnarray}}
\newcommand{\beqas}{\begin{eqnarray*}}
\newcommand{\eeqas}{\end{eqnarray*}}
\newcommand{\bbr}{\Bbb{R}}
\newcommand{\nn}{\nonumber}
\def\cH{{\cal H}}
\def\cZ{{\cal Z}}
\def\cS{{\cal S}}
\def\cA{{\cal A}}
\def\cD{{\cal D}}
\def\cP{{\cal P}}
\def\KL{{\rm KL}}
\def\Pr{{\rm Pr}}
\def\cM{{\cal M}}
\def\dist{{D}}
\def\vgap{\vspace*{.1in}}
\title{Policy Optimization over
General State and Action Spaces
   \thanks{Authors are listed alphabetically. This research was partially supported by the AFOSR grant FA9550-22-1-0447 and the Department of Energy Computational Science Graduate Fellowship under Award Number DE-SC0022158.
The paper was first released at arXiv on 11/30/2022.
CJ was later included as an author for revising the paper and adding the policy evaluation scheme and numerical experiments.}}
\author{
     Caleb Ju\thanks{H. Milton Stewart School of Industrial and Systems
    Engineering, Georgia Institute of Technology, Atlanta, GA, 30332.
    (email: {\tt calebju4@gatech.edu}, {\tt george.lan@isye.gatech.edu})}
    \and
    Guanghui Lan\footnotemark[2]
 }
\date{November 29, 2022}
\begin{document}

\maketitle

\begin{abstract}

Reinforcement learning (RL)
problems over general state and action spaces are notoriously challenging.
In contrast to the tableau setting, one cannot enumerate all the states 
and then iteratively update the policies for each state. This prevents the application
of many well-studied RL methods, especially those with provable convergence
guarantees. In this paper, we first present
a substantial generalization of the recently developed policy mirror descent method to
deal with general state and action spaces. We introduce new approaches
to incorporate function approximation into this method so that we do not need to use explicit policy parameterization at all. 
Moreover, we present a novel policy dual averaging method for which possibly simpler function approximation
techniques can be applied. We establish a linear convergence rate to global optimality
or sublinear convergence to stationarity 
for these methods applied to solve different classes of RL problems under exact policy evaluation.
We then define proper notions of approximation errors for 
policy evaluation and investigate their impact on the convergence of these methods
applied to general-state RL problems with either finite-action or continuous-action spaces.
To the best of our knowledge, the development of these algorithmic frameworks
as well as their convergence analysis appear to be new in the literature.
Preliminary numerical results demonstrate the robustness of the aforementioned methods and show they can be competitive with state-of-the-art RL algorithms.
\end{abstract}

{\bf Key words}. Markov decision process, reinforcement learning, policy gradient, mirror descent, dual averaging.

{\bf AMS subject classifications}. 90C40, 90C15, 90C26, 68Q25.

\section{Introduction} \label{sec_intro}
Stochastic dynamic programming provides a general framework to model
the interactions between an agent and its environment, and to improve the agent's decisions 
through such interactions. More specifically, the status of the environment is described by either discrete or continuous state
variables, while the agent's behavior is described by actions. Upon the agent's action, the system's state gets updated, and the agent
receives some reward or pays some cost.  The goal of stochastic dynamic programming is to find the
optimal policy which specifies the agent's best action at a given state. 

Markov decision process (MDP) is among the most widely used stochastic dynamic programming
models. Consider the infinite-horizon discounted Markov decision process
$M = (\cS, \cA, \cP, c, \gamma)$, where $\cS$
is the state space, $\cA$ denotes the action space,
$\cP: \cS \times \cS \times \cA \to \bbr$ is the transition 
model, $c: \cS \times \cA \to \bbr$ is the cost function,
and $\gamma \in [0,1)$ is the discount factor.
A policy $\pi: \cS \to \cA $ determines a particular action to be chosen at a given state. 
Since MDPs can be used to model many important sequential decision
making problems (e.g., inventory control, resource allocation, and organ donation and transplantation),
it has been a classic topic in the area of operations research and stochastic control~\cite{PutermanBook1994,BertsekasShreve1996}. 
While in the classic MDP literature, the transition model $\cP$
is assumed to be given when searching for the
optimal policy $\pi^*$, we often have no access to
exact information about $\cP$ in the more recent reinforcement learning (RL) literature~\cite{RichardBarto2018}.
This can be viewed as the main difference between MDP and RL.
During the past few years, RL has received widespread attention
partly due to its  success in computer games such as AlphaZero~\cite{alphazero1,alphazero2}.

In general, RL methods can be divided into model-based and model-free
approaches. In model-based methods, one seeks to 
approximate the transition model $\cP$ by using various statistical learning 
techniques and then to solve an approximate problem to 
the original MDP. In model-free methods, one
attempts to search for improved policies without
formulating an approximate transition model first.
The difference between model-based and model-free RL
is similar to the one between sample
average approximation (SAA) and stochastic approximation (SA)
in the area of stochastic optimization.
Model-based methods separate statistical analysis from
the design of optimization algorithms, while model-free methods
intend to carefully trade-off the effort needed for data collection and computation
through a combined design and analysis. Model-free methods
 appear to be more applicable to online reinforcement learning
where the policies need to be improved as streaming
data is being collected. These types of methods are the main subject of this paper.

In spite of the popularity of RL, the development of 
efficient RL methods
 with guaranteed convergence is still lacking in this area. 
Most existing theoretical studies focus only
on problems with finite state and finite action spaces.
In this setting, a rich set of model-free methods have been developed
by incorporating different sampling schemes into
the classic methods for MDPs,
including linear programming,
value iteration (or Q-learning), and policy iteration.
In addition, there has been considerable interest in the development of
nonlinear optimization-based  methods
that  utilize the gradient information (i.e., $Q$ function)
 for solving RL problems 
(e.g.,~\cite{SuttonMcAllester1999,kakade2001natural,KakadeLangford2002,10.2307/40538442,agarwal2021theory,abbasi2019politex,ShaniEM20,2020arXiv200706558C,Xu2020ImprovingSC,Wang2020NeuralPG,2020arXiv200506392M}).
While these methods are very popular in practice, 
it was only until recently that we knew these
stochastic gradient type methods can exhibit 
comparable and even superior performance to those based on classic dynamic
programming and/or linear programming approaches.
In particular, we show in \cite{LanPMD2021} that a general class of
policy mirror descent (PMD) methods can achieve linear convergence for solving
both regularized and unregularized MDP problems.
When applied to RL problems with unknown $\cP$,
it can achieve the optimal ${\cal O}(1/\epsilon^2)$ and ${\cal O}(1/\epsilon)$
sample complexity for solving unregularized and regularized problems, respectively.
It is worth noting the latter sample complexity has not been achieved even for
model-based approaches. Further interesting developments
of PMD-type methods can be found in
some more recent works (e.g.,~\cite{khodadadian2021linear,CayciHe2021,zhan2021policyPMD,xiao2022convergence,li2022homotopic,lan2022block,liTianjiao2022aMDP}). 

RL problems with general state and action spaces are notoriously 
more challenging than the finite-state finite-action setting mentioned above. 
One significant difficulty associated with a general state space
exists in that one cannot
enumerate all the states and properly define the policy found for each state.
In order to address this issue, the current practice is
to apply function approximation, e.g.,
by using a neural network, for policy parameterization (see, e.g., \cite{BhandariRusso2019,CayciHe2021}).
Then one can possibly apply stochastic 
gradient-type methods restricted to this policy class to search for the policy parameters.
However, the incorporation of policy parameterization
would destroy the possible convexity of the action space and make it
difficult to satisfy the constraints imposed on policies.
In many applications, meeting these constraints might be even more important
than to find an optimal policy, since the latter goal is usually not achievable
for problems with general state and action spaces due to their highly nonconvex landscape.
Moreover, with policy parameterization, it is often difficult to 
assess the quality of the solutions since 
the stationarity is defined with respect to (w.r.t.) the specific policy class rather than the original
policy optimization problem.
In addition, by using policy parameterization, we often end up with 
an optimization problem of very high dimension in the space of policy parameters,  
even if the dimension of the original action space is rather small.  
Finally, even with policy parameterization, another type of
function approximation is also needed to evaluate a policy and compute
the gradients, since one cannot save such gradient information in a tableau form.
In general, there are very few theoretical guarantees (e.g., feasibility, optimality, 
stationarity) on the convergence of existing RL methods
dealing with general state and action spaces.

This paper attempts to address some of the aforementioned issues
by presenting novel algorithmic frameworks to
solve RL problems over general state and action spaces.
The main contributions of this paper mainly consist of the following two aspects.
The first aspect of our contribution exists in a substantial generalization of the policy mirror descent (PMD) method.
More specifically, we start with an idealized situation 
for policy optimization, where exact gradient information can be computed over general state and action spaces.
We generalize the PMD method and show its linear convergence 
to the global optimality and sublinear convergence to a possible stationary point for solving
different classes of RL problems depending on the curvature of the action-value function.
We then discuss how function approximation can be incorporated into the PMD method.
In contrast to existing methods, we will not incorporate policy parameterization
into this algorithmic scheme at all. Instead, we apply function approximation only to the policy evaluation step 
applied to an augmented action-value function. The policies will be represented by a simple optimization problem, and will
be computed, whenever needed, based on
the parameters obtained during the policy evaluation step.  For RL problems with general state and finite action
spaces, this method converges to the global optimality  up to function approximation errors. Moreover,
its subproblems to compute the policy can have explicit solutions in this setting. Furthermore, for RL problems with general state and continuous
action spaces, it converges to the global optimality up to function approximation errors if 
a certain curvature condition is satisfied for the approximation functions. Otherwise,
it will converge to a possible stationary point for the original policy optimization problem under certain conditions for
the approximation error.

The second aspect of our contribution exists in the development of a novel class of
policy dual averaging (PDA) methods for solving RL problems.
The dual averaging method was originally developed for convex optimization~\cite{Nesterov2009}
and later further studied for machine learning (e.g.,~\cite{Xiao10-1}).
Similar to the original dual averaging method,
the PDA method developed herein minimizes a weighted summation of the 
advantage functions plus a certain regularization term at each iteration.
The PDA method is equivalent to PMD under the tabular setting with specific selection of distance generating functions,
but these two methods are different in general.
In comparison with PMD, the PDA method  appears to be more amenable to
function approximation as it only requires function approximation for the original action-value
function, and the stepsize parameters will not impact function approximation.
However, other than the tableau setting, the convergence of the PDA method has not been
studied before in the literature. Through a novel analysis, we show that
PDA also exhibits linear convergence to the global optimality and sublinear convergence to 
a possible stationary point (in a slightly
different sense than PMD) depending on the curvature of the action-value function.
We also study the impact of stochastic policy evaluation and function approximation
on the convergence of the PDA method for solving RL problems with general state and finite action 
spaces, as well as general state and continuous action spaces. In general,
PDA exhibits comparable convergence properties to PMD. In some cases it does have some advantages over PMD
by either allowing a more adaptive 
stepsize policy, e.g., when applied to general unregularized RL problems with finite action spaces, or requiring
less restrictive assumptions on the function approximation error, e.g., for solving RL problems with continuous action spaces.

This paper is organized as follows. We first formally introduce the problems of interest in~\cref{sec:s1}.
\Cref{sec:s2} then generalizes the PMD method to RL problems over general spaces.
The development of the new PDA method is detailed in~\cref{sec:exact_PDA}, whose structure is parallel to that of~\cref{sec:s2}.
Numerical experiments are studied in~\cref{sec:numerical_experiments}, and we end with some concluding remarks in~\cref{sec:conclusion}.

\subsection{Notation and Terminology} \label{sec:notations_PMD_General}
For simplicity, let the action space be $\cA \subseteq{\bbr^{\nA}}$ for some $\nA \geq 2$.
Let $\|\cdot\|$ be a given norm in $\cA$,
and let $\omega: \cA \to \bbr$ be a strongly convex function
w.r.t.~the given $\|\cdot\|$ which helps define Bregman's distance,
\beq \label{eq:omega_strong_convexity}
D(a_1, a_2) := \omega(a_2) - 
[
\omega(a_1) + \langle \omega'(a_1), a_2 - a_1 \rangle 
] \ge \tfrac{1}{2} \|a_1 - a_2\|^2, \forall a_1, a_2 \in \cA.
\eeq
For general-state and finite-action MDPs (see \eqnok{eq:def_simplex}-\eqnok{eq:dis_kernel}),
a common selection of $\omega$ for RL is the entropy function $\omega(a) = \tsum_{i=1}^{\nA} a_i \log a_i$.
With this selection, the Bregman distance becomes the KL divergence (see \cite{LanPMD2021}) given by
$
D(a, a') = \KL(a' \parallel a) = \tsum_{i=1}^{\nA} a'_i \log \tfrac{a'_i}{a_i}.
$
For general-state and continuous-action MDPs, the selection of $\omega$ will be problem-dependent.
A common selection would be the Euclidean norm $\omega(a) = \tsum_{i=1}^{\nA} a_i^2/2$, with the corresponding
Bregman distance given by 
$
D(a', a) = \tfrac{1}{2}\|a - a'\|_2^2.
$

\section{Problem of Interest} \label{sec:s1}
For a space $\Omega$ with a $\sigma$-algebra $\sigma_\Omega$,
we define $\cM(\Omega)$ as the set of 
all probability measures over $\sigma_\Omega$.
\revise{Following~\cite{hernandez2012discrete}, we endow a Borel space (i.e.,~a Borel subset of a complete and separable metric space) with its Borel $\sigma$-algebra. 
In this paper, the term ``measurable'' refers to a Borel measurable set or function depending on the context.}

\subsection{Markov Decision Processes} \label{sec_prob_formulation}
An infinite-horizon discounted Markov decision process (MDP) is a five-tuple
$(\cS, \cA, \cP, c, \gamma)$, where 
$\cS$ is a measurable 
state space, $\cA \subseteq \bbr^\nA$
is a closed convex set, 
$\cP: \cS \times \cA \to \cM(\cS)$
is a mapping with domain $\cS \times \cA$,
$c: \cS \times \cA \to \bbr$ denotes the cost function,
and $\gamma \in [0,1)$ is a discount factor.
The transition  kernel $\cP$ is a stochastic kernel~\cite[Definition C.1]{hernandez2012discrete}.
The measurable cost function $c(s, a)$
specifies the instantaneous cost associated with
the selection of action $a$ in state $s$.
\revise{While in general RL problems, both the state space $\cS$ and action space $\cA$ can be any (Borel) measurable space~\cite{hernandez2012discrete}, in this paper we require $\cA \subseteq \mathbb{R}^{\nA}$ to be a closed convex set. 
This is for a couple key reasons.
First, restricting $\cA$ to be a closed convex set allows us to design efficient algorithms for optimizing over the action space.
Second, by working with general, measurable state spaces $\cS$, we can provide a generic framework that covers both finite and infinite state space problems, as well as a Cartesian product of spaces with varying sizes, i.e.,~mixed discrete and continuous spaces. Such settings arise in queuing, partially observable models, PDE-based control, and periodical problems~\cite{PutermanBook1994,hernandez2012discrete,farahmand2017deep,shapiro2020periodical}.}

A deterministic policy $\pi: \cS \to \cA$ 
is a measurable mapping
that determines a single feasible action $\pi(s) \in \cA$
for each state $s \in \cS$.
In particular, our developments cover the finite set of actions $\{\cA_1, \ldots, \cA_{\nA}\}$, as an important special case. 
Indeed, when the set of possible actions is finite, a widely used approach in policy optimization
is to seek a randomized policy that determines the probability of selecting a specific action.
This is equivalent to specifying
a continuous action $a$ over the simplex set
\beq \label{eq:def_simplex}
\cA = \Delta_{\nA} := \{p \in \bbr^{\nA}: \tsum_{i=1}^{\nA} p_i = 1, p_i \ge 0, i = 1, \ldots, \nA\},
\eeq
with each of its extreme points corresponding to
an action $\cA_i$, $i = 1, \ldots, \nA$. Clearly, $\cA \subseteq \mathbb{R}^{n_\cA}$. 
Accordingly, the cost function and the transition kernel,  respectively, are given by 
\beq \label{eq:dis_kernel}
c(s,a) = \tsum_{i=1}^{\nA} a_i c_0(s, \cA_i) \ \ \mbox{and}
\ \ \cP(s,a) = \tsum_{i=1}^{\nA} a_i \cP_0(\cdot| s, \cA_i),
\eeq
Here $c_0(s, \cA_i)$ and $\cP_0(\cdot| s, \cA_i)$
denote the cost and transition probability when
the action $\cA_i$ is chosen.
In the rest of the paper, we will refer to the aforementioned situation as
general-state and finite-action MDPs even though the 
action space is given by the simplex as in \eqnok{eq:def_simplex}.
More generally, our development also covers 
an action space with mixed discrete and continuous
actions by employing randomization over
the discrete ones.

Let $\Pi$ denote the set of all measurable 
stationary policies that are invariant to the time epoch $t$.
For any $\pi \in \Pi$, 
we measure its performance by the action-value
function $Q^\pi: \cS \times \cA \to \bbr$
defined as
\[
Q^\pi(s,a) :=
\bbe\left[\tsum_{t=0}^\infty \gamma^t [c(s_t, a_t) + h^{a_t}(s_t)] \mid
s_0 = s, a_0 = a, a_t = \pi(s_t),
s_{t+1} \sim \cP(\cdot | s_t, a_t)\right],
\]
where $h^a(s)$ is a
regularizer which is strongly convex  w.r.t. $a$ with modulus $\mu_h \ge 0$, i.e.,
\[
h^{a_1}(s) - h^{a_2}(s) - \langle ( h^{a_2})'(s), a_1 - a_2 \rangle \ge
\mu_h D(a_2, a_1), 
\]
\revise{where $(h^{a_2})'(s)$ denotes a subgradient of $h^a(s)$ at $a=a_2$.}
In particular, entropy regularization has been shown to incentivize safe exploration and to learn risk-sensitive policies~\cite{neu2017unified,geist2019theory,ahmed2019understanding}. 
Our regularization is more general, e.g., it can model penalty for constraints.
Here we separate $h^{a_t}(s_t)$ from $c(s_t, a_t)$ in order to take advantage of
its strong convexity in the design and analysis of algorithms.
Moreover, we define the state-value function
$V^\pi: \cS \to \bbr$ associated with $\pi$
as
\beq \label{eq:def_V_function}
V^\pi(s) :=
\bbe
 \left[
 \tsum_{t=0}^\infty \gamma^t [c(s_t, a_t) + h^{a_t}(s_t)] \mid
s_0 = s, a_t = \pi(s_t),
s_{t+1} \sim \cP(\cdot | s_t, a_t)
\right].
\eeq 
Throughout the paper, we assume that
$c(s, a) + h^a(s) \in [0, \bar c]$ so that
$V^\pi(s) \in [0, \tfrac{\bar c}{1-\gamma}]$.
It can be easily seen from the definitions of $Q^\pi$ and $V^\pi$ that
\begin{align}
V^\pi(s) &= Q^\pi(s, \pi(s)), \label{eq:QV1} \\
Q^\pi(s, a) &= c(s, a) +h^{a}(s) + \gamma \int  \cP(d s'| s, a) V^\pi(s'). \label{eq:QV2}
\end{align}
Without specific mention, we assume throughout the paper that the integral is taken over the state space $\cS$ and is well-defined. 
The latter is guaranteed since all corresponding integrands appearing in the paper are assumed to be finite.

The main objective in MDP is to find an optimal policy $\pi^*: \cS \to \cA$ s.t.
\beq \label{eq:opt_objective}
V^{\pi^*}(s) \le V^\pi(s), \forall \pi \in \Pi, \forall s \in \cS.
\eeq 
Sufficient conditions that guarantee the existence of $\pi^*$
have been intensively studied (e.g., \cite{BertsekasShreve1996,PutermanBook1994,hernandez2012discrete}).
For example, one such sufficient condition is given by
the existence of a measurable policy $\pi \in \Pi$
such that
\[
c(s, \pi(s)) + h^{\pi(s)}(s) + \gamma \int V(s') \cP(d s'| s, \pi(s))
= \inf_{a \in \cA} \left [c(s, a) + h^a(s) + \gamma \int V(s') \cP(ds'|s, a)\right].
\] 
The existence of such a $\pi$ is guaranteed under the so-called the \textit{measurable selection condition}~\cite{hernandez2012discrete}, \revise{which provides sufficient conditions on $\cA$, $c$, and $\cP$ such that the equality above holds across all states $s \in \cS$.}
Throughout the paper, we assume the measurable selection condition holds.
Under these conditions,  
\eqnok{eq:opt_objective} can be formulated
as a nonlinear optimization 
problem with a single objective function.
More specifically, 
given an initial state distribution
$\rho$ over $\cS$, let $f_\rho$ be defined as the functional
\beq \label{eq:RL0}
f_\rho(\pi)
:= \int V^\pi(s) \rho(ds).
\eeq
Our goal is to solve the following policy
optimization problem
$
 \min_{\pi(s) \in \cA, \forall s \in \cS} f_\rho(\pi).
$

For a given policy $\pi$,
let $\cP^\pi$ be its associated transition kernel, i.e.,
$\cP^\pi(\cdot| s) = \cP(\cdot| s, \pi(s))$.
A probability distribution
$\nu^\pi$
is said to be a stationary distribution of
$\cP^\pi$ if
for all $B \in \sigma_\cS$,
$\nu^\pi(B)
= \int \cP^\pi(B|s) \nu^\pi(ds)$.
Existence and uniqueness of a stationary
distribution $\nu^\pi$ and
convergence of a Harris chain (i.e.,~Markov chains over general state spaces) to $\nu^\pi$
is ensured by ergodicity conditions.
A sufficient condition is that the chain is \revise{an ergodic Harris chain, which means it is positive recurrent (i.e.,~has a regeneration set $R$ whose regeneration time has finite expected value) and the embedded renewal process is aperiodic.
We refer to~\cite[Chapter 7]{bladt2017matrix} for a more detailed presentation.}
For discrete-state Markov chains, 
it is well-known that any aperiodic
and positive recurrent Markov chain has a unique
invariant stationary distribution.
Throughout this paper, we assume that
$\nu^\pi$ is the unique stationary
distribution associated with $\cP^\pi$.

In particular, we use $\nu^* := \nu^{\pi^*}$
to denote the stationary distribution 
induced by the optimal policy $\pi^*$.
To simplify our discussion, 
we choose $\rho$ in \eqnok{eq:RL0}
as the stationary distribution $\nu^*$.
While the distribution $\rho$ can be arbitrarily chosen, it has been recently observed (e.g.,~\cite{LanPMD2021}) that the choice of $\nu^*$ can simplify the analysis of RL algorithms.
In this case, our problem of interest becomes
\beq \label{eq:MDPNLP}
f^* := \min_{\pi(s) \in \cA, \forall s \in \cS} \{f(\pi) := f_{\nu^*}(\pi) \}.
\eeq
It is clear that 
$\pi^*$ must be an optimal solution of problem~\eqnok{eq:MDPNLP}.
Moreover, an optimal solution of \eqnok{eq:MDPNLP}
must be an optimal policy for our MDP if $\nu^*$ is sufficiently random
(i.e., positively supported over $\cS$).

\subsection{Performance Difference and Policy Gradient} \label{sec:perf_diff_and_policy_grad}
Given a feasible policy $\pi \in \Pi$, we
define the discounted state visitation measure by
\beq \label{eq:def_visit_measure}
\kappa_s^\pi(B) = (1-\gamma) \tsum_{t=0}^\infty \gamma^t \Pr^\pi(s_t \in B| s_0 = s)
\eeq
for any measurable set $B \subset \cS$.
Here, $\Pr^\pi(s_t \in \cdot | s_0=s)$ denotes
the distribution of the state at time $s_t$ after we follow the policy $\pi$
starting from state $s_0 = s$. 
Since $\mathrm{Pr}^\pi(s_t \in B \vert s_0=s) \in [0,+\infty)$ and $\gamma <1$, the sum in $\kappa_s^\pi(B)$ is absolutely convergent and hence is well-defined.
Note that $\kappa_s^\pi$ is a probability distribution over states and also satisfies a set of balance equations~\cite[Lemma 17]{BhandariRusso2019}. 
Let $(\cP^\pi)^t$ be
the $t$-step transition kernel
associated with $\cP^\pi$ defined recursively
according to
\[
(\cP^\pi)^t(B|s) = \int (\cP^\pi)^{t-1}( ds' |s) \cP^\pi( B |s'), ~t \ge 2.
\]
Then we have
$
\Pr^\pi(s_t \in B| s_0=s) = (\cP^\pi)^t(B|s).
$

We now state an important ``performance difference'' lemma
which tells us the difference on the value functions
for two different policies.

\begin{lemma} \label{lem:performance_diff_deter}
Let $\pi$ and $\pi'$ be two feasible policies. Then
we have
\begin{align}
V^{\pi'}(s) - V^\pi(s) 
&= \tfrac{1}{1-\gamma} \int \psi^\pi(q, \pi'(q))  \, \kappa_s^{\pi'} (dq), \forall s \in \cS, \label{eq:per_diff}
\end{align}
where 
\beq \label{eq:def_advantage}
\psi^\pi(q, a) := Q^\pi(q, a) - V^\pi(q) +  h^{a}(q) - h^{\pi(q)}(q).
\eeq
\end{lemma}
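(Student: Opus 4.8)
The plan is to establish \eqnok{eq:per_diff} by a telescoping argument along the trajectory generated by $\pi'$, followed by a change of variables that repackages the resulting discounted sum as an integral against the visitation measure $\kappa_s^{\pi'}$. Throughout I treat the general measurable state and action spaces, so the emphasis is less on the algebra (which mirrors the tabular case) and more on keeping every interchange of limit, sum, and integral legitimate.

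First I would fix the starting state $s_0 = s$ and consider the Markov chain $\{s_t\}_{t \ge 0}$ driven by $\pi'$, i.e.\ $a_t = \pi'(s_t)$ and $s_{t+1} \sim \cP(\cdot\,|\,s_t,a_t)$, so that by definition $V^{\pi'}(s) = \bbe_{\pi'}[\tsum_{t=0}^\infty \gamma^t (c(s_t,a_t) + h^{a_t}(s_t))]$. The core device is the telescoping identity $V^\pi(s_0) = \tsum_{t=0}^\infty \gamma^t (V^\pi(s_t) - \gamma V^\pi(s_{t+1}))$, whose partial sums equal $V^\pi(s_0) - \gamma^{T+1} V^\pi(s_{T+1})$ and therefore converge because $V^\pi$ is uniformly bounded by $\bar c/(1-\gamma)$, forcing $\gamma^t V^\pi(s_t) \to 0$. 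Subtracting $V^\pi(s)$ from $V^{\pi'}(s)$ and inserting this identity yields
\[
V^{\pi'}(s) - V^\pi(s) = \tsum_{t=0}^\infty \gamma^t \, \bbe_{\pi'}\!\left[ c(s_t,a_t) + h^{a_t}(s_t) + \gamma V^\pi(s_{t+1}) - V^\pi(s_t) \right].
\]

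Next I would evaluate each summand by conditioning on $s_t$ (hence on $a_t = \pi'(s_t)$) and integrating $s_{t+1}$ against the kernel, so that $\bbe_{\pi'}[\gamma V^\pi(s_{t+1}) \mid s_t, a_t] = \gamma \int \cP(ds'\,|\,s_t,a_t) V^\pi(s')$. By the Bellman relation \eqnok{eq:QV2}, the quantity $c(s_t,a_t) + h^{a_t}(s_t) + \gamma \int \cP(ds'\,|\,s_t,a_t) V^\pi(s')$ collapses, and combining this with $V^\pi(s_t) = Q^\pi(s_t,\pi(s_t))$ from \eqnok{eq:QV1} and the grouping of the regularizer terms prescribed by \eqnok{eq:def_advantage}, one recognizes the per-step contribution as exactly $\psi^\pi(s_t,\pi'(s_t))$. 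Finally, writing $\bbe_{\pi'}[\psi^\pi(s_t,\pi'(s_t))] = \int \psi^\pi(q,\pi'(q)) \, \Pr^{\pi'}(s_t \in dq \mid s_0 = s)$ and comparing $\tsum_t \gamma^t \Pr^{\pi'}(s_t \in \cdot)$ with the definition \eqnok{eq:def_visit_measure} of $\kappa_s^{\pi'}$ produces the prefactor $1/(1-\gamma)$ and gives \eqnok{eq:per_diff}.

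The Bellman collapse and the regularizer bookkeeping are routine; I expect the main obstacle to be the analytic justification in the general measurable setting. Concretely, one must justify interchanging the infinite summation with the expectation and with the integral against $\cP$ and $\kappa_s^{\pi'}$, and the passage to the limit in the telescoping sum. These follow from the measurability and uniform boundedness of $V^\pi$, $Q^\pi$, and $\psi^\pi$ — guaranteed by the standing hypothesis $c + h \in [0,\bar c]$ and the resulting bound $V^\pi \in [0, \bar c/(1-\gamma)]$ — which let one invoke Tonelli's theorem and dominated convergence termwise and secure $\gamma^t V^\pi(s_t) \to 0$. One should also record that $\tsum_t \gamma^t \Pr^{\pi'}(s_t \in \cdot)$ is precisely the unnormalized form of $\kappa_s^{\pi'}$, which is immediate from $\Pr^{\pi'}(s_t \in B \mid s_0 = s) = (\cP^{\pi'})^t(B \mid s)$ noted before the lemma.
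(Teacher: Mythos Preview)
Your proposal is correct and follows essentially the same route as the paper's proof: telescope $V^\pi(s_0)$ along the trajectory generated by $\pi'$, invoke \eqnok{eq:QV2} to collapse each summand, and then rewrite the discounted expectation as an integral against $\kappa_s^{\pi'}$ via \eqnok{eq:def_visit_measure}. The paper carries out exactly these steps; the only difference is that it makes the ``regularizer bookkeeping'' explicit by adding and subtracting $h^{\pi(s_t)}(s_t)$ before applying \eqnok{eq:QV2}, whereas you label this step routine, and conversely you spell out the dominated-convergence/Tonelli justifications for the limit interchanges that the paper leaves implicit.
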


\begin{proof}
By the definition of $V^{\pi'}$ in \eqnok{eq:def_V_function}, we have
\begin{align*}
&V^{\pi'}(s) - V^\pi(s) \\
&=
\bbe
 \left[
 \tsum_{t=0}^\infty \gamma^t [c(s_t, a_t) + h^{\pi'(s_t)}(s_t)]\mid
s_0 = s, a_t = \pi'(s_t),
s_{t+1} \sim \cP(\cdot | s_t, a_t)
\right] - V^\pi(s) \\
&= \bbe
 \left[
 \tsum_{t=0}^\infty \gamma^t [c(s_t, a_t) + h^{\pi'(s_t)}(s_t)+\gamma V^\pi(s_{t+1})
 - V^\pi(s_t)] \right. \\
&\quad \quad \quad \left.  \mid s_0 = s, a_t = \pi'(s_t),
s_{t+1} \sim \cP(\cdot | s_t, a_t)
\right] \\
&\quad + \bbe\left[V^\pi(s_0)\mid
s_0 = s, a_t = \pi'(s_t),
s_{t+1} \sim \cP(\cdot | s_t, a_t)
\right]  - V^\pi(s) \\
&= \bbe
 \left[
 \tsum_{t=0}^\infty \gamma^t [c(s_t, a_t) + h^{\pi(s_t)}(s_t) + \gamma V^\pi(s_{t+1})
 - V^\pi(s_t) +  h^{\pi'(s_t)}(s_t) - h^{\pi(s_t)}(s_t)] \right.\\
&\quad \quad \quad \left. \mid
s_0 = s, a_t = \pi'(s_t),
s_{t+1} \sim \cP(\cdot | s_t, a_t)
\right],
\end{align*}
where the second identity follows from the cancelation of the terms
by taking telescoping sum, and the last identity follows from  
$\bbe\left[V^\pi(s_0)\mid
s_0 = s, a_t = \pi'(s_t),
s_{t+1} \sim \cP(\cdot | s_t, a_t)
\right] = V^\pi(s).$
Now using the above identity, \eqnok{eq:QV2} and \eqnok{eq:def_visit_measure},
we have 
\begin{align*}
V^{\pi'}(s) - V^\pi(s)
&=  \bbe
 \left[
 \tsum_{t=0}^\infty \gamma^t [Q^\pi(s_t, a_t) - V^\pi(s_t) +  h^{\pi'(s_t)}(s_t) - h^{\pi(s_t)}(s_t)] \right.\\
 &\quad \quad \quad \left. \mid
s_0 = s, a_t = \pi'(s_t),
s_{t+1} \sim \cP(\cdot | s_t, a_t)
\right]\\
&= \tfrac{1}{1-\gamma}\int [Q^\pi(q, \pi'(q)) - V^\pi(q) +  h^{\pi'(q)}(q) - h^{\pi(q)}(q) ] \, \kappa^{\pi'}_s(d q).
\end{align*}
\end{proof}

We call $\psi^{\pi}$ in \eqnok{eq:def_advantage} the \textit{advantage function}.
It is different than the advantage function used in the RL literature in that it deals with deterministic policies and
involves the term $h^{a}$.

\begin{lemma} \label{lem:monotonicity}
For any feasible policy $\pi$, we have
$
 \bbe_{s \sim \nu^*}\left[ -\psi^\pi(s, \pi^*(s)) \right]
 =  (1-\gamma) \bbe_{s\sim \nu^*}[V^\pi(s) - V^{\pi^*}(s)].
$
\end{lemma}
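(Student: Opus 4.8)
The plan is to specialize the performance difference identity \eqnok{eq:per_diff} to the comparison policy $\pi' = \pi^*$ and then integrate against the reference distribution $\nu^*$. Setting $\pi' = \pi^*$ in Lemma \ref{lem:performance_diff_deter} gives
\[
V^{\pi^*}(s) - V^\pi(s) = \tfrac{1}{1-\gamma} \int \psi^\pi(q, \pi^*(q)) \, \kappa_s^{\pi^*}(dq), \quad \forall s \in \cS.
\]
Taking the expectation over $s \sim \nu^*$ turns the left-hand side into $\bbe_{s\sim\nu^*}[V^{\pi^*}(s) - V^\pi(s)]$, which is, up to sign and the factor $1-\gamma$, the quantity we are after. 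On the right-hand side we then interchange the order of the two integrations (over $q$ and over $s$), so that the $q$-integrand $\psi^\pi(q,\pi^*(q))$ is integrated against the averaged measure $\mu(dq) := \int \kappa_s^{\pi^*}(dq)\,\nu^*(ds)$.

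The heart of the argument, and the step I expect to be the main obstacle, is to show that this averaged visitation measure recovers $\nu^*$ itself, i.e., $\int \kappa_s^{\pi^*}(B)\,\nu^*(ds) = \nu^*(B)$ for every measurable $B \subset \cS$. To prove this I would expand $\kappa_s^{\pi^*}$ using its series definition \eqnok{eq:def_visit_measure}, rewrite $\Pr^{\pi^*}(s_t \in B \mid s_0 = s) = (\cP^{\pi^*})^t(B|s)$, and integrate term by term. For each fixed $t$, the quantity $\int (\cP^{\pi^*})^t(B|s)\,\nu^*(ds)$ is exactly the law of $s_t$ when $s_0 \sim \nu^*$; since $\nu^*$ is by assumption the stationary distribution of $\cP^{\pi^*}$, a short induction on $t$ based on the balance equation $\nu^*(B) = \int \cP^{\pi^*}(B|s)\,\nu^*(ds)$ shows this law equals $\nu^*(B)$ for all $t$. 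Summing the geometric series $(1-\gamma)\tsum_{t=0}^\infty \gamma^t \nu^*(B) = \nu^*(B)$ then yields $\mu = \nu^*$.

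Combining these pieces, the right-hand side collapses to $\tfrac{1}{1-\gamma}\,\bbe_{q\sim\nu^*}[\psi^\pi(q,\pi^*(q))]$, whence $\bbe_{s\sim\nu^*}[V^{\pi^*}(s)-V^\pi(s)] = \tfrac{1}{1-\gamma}\,\bbe_{q\sim\nu^*}[\psi^\pi(q,\pi^*(q))]$. Multiplying by $1-\gamma$ and negating both sides produces the claimed identity. The only remaining care is to justify the interchange of summation and integration and of the two integrals; this follows from Tonelli/Fubini, since $\psi^\pi$ is bounded (inherited from the assumed bounds on $c+h$, and hence on $Q^\pi$ and $V^\pi$) and $\gamma < 1$ makes the discounted series absolutely convergent.
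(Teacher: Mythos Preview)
Your proposal is correct and follows essentially the same approach as the paper's proof: specialize Lemma~\ref{lem:performance_diff_deter} to $\pi' = \pi^*$, integrate against $\nu^*$, and use stationarity of $\nu^*$ under $\cP^{\pi^*}$ to collapse the averaged visitation measure back to $\nu^*$. The paper compresses the key step into the phrase ``noticing the stationarity of $\nu^*$,'' whereas you spell out explicitly why $\int (\cP^{\pi^*})^t(B|s)\,\nu^*(ds) = \nu^*(B)$ for each $t$ and why Fubini applies; this added detail is sound and fills in exactly what the paper leaves implicit.
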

 
  \begin{proof}
  By~\cref{lem:performance_diff_deter} (with $\pi' = \pi^*$),
    $(1-\gamma) [V^{\pi^*}(s) - V^\pi(s)]
    = \bbe_{q\sim \kappa_s^{\pi^*}}\left[ \psi^\pi(q, \pi^*(q)) \right]$.
 Taking expectation for w.r.t. $\nu^*$ and noticing the stationarity of $\nu^*$, 
 we have
 \begin{align*}
  (1-\gamma) \bbe_{s\sim \nu^*}[V^{\pi^*}(s) - V^\pi(s)]
  &= \bbe_{s\sim \nu^*, q\sim \kappa_s^{\pi^*}} \left[ \psi^\pi(q, \pi^*(q))  \right]
  = \bbe_{s \sim \nu^*}\left[ \psi^\pi(s, \pi^*(s))\right].
 \end{align*}
 \end{proof}
 
 When both the state and action space are finite,~\cref{lem:performance_diff_deter} can be used to establish the gradient 
 of the objective function $f$ in \eqnok{eq:RL0} w.r.t.~policy $\pi$~\cite{LiYan2022Robust,LanPMD2021}.
For general state-action spaces, the policy resides in an infinite-dimensional space, making the establishment of a gradient significantly more complex than in the finite setting. In fact, it is unclear whether a gradient even exists in this context, which poses challenges for defining optimality conditions. However, ~\cref{lem:monotonicity} suggests that the advantage function may serve as an alternative measure of local optimality, even in the absence of gradients. Moreover, this criterion is applicable to both policy mirror descent and policy dual averaging, algorithms to be studied in the next two sections, whereas gradient-based conditions, if they exist, appear to be relevant only to the former algorithm.

\section{Policy Mirror Descent} \label{sec:s2}

The policy mirror descent method was initially designed 
for solving RL problems over finite state and finite action spaces.
Our goal in this section is to generalize this algorithm 
for RL over general state and action spaces.

\subsection{The Generic Algorithmic Scheme} \label{sec:generic_PMD}

For a given policy $\pi_k$, 
the PMD method minimizes the advantage function $\psi^{\pi_k}(s, a)$
plus the regularization term $D(\pi_k(s), a)$ with a certain stepsize parameter $\eta_k\ge 0$.
In order to guarantee that each step of PMD is well-defined, we need to assume through this 
section that the policy value function $Q^\pi(s, a)$ satisfies certain curvature conditions.
In particular, we assume that for some finite $\mu_Q \ge 0$, the 
function 
\beq \label{eq:convexity_Q}
Q^\pi(s, a) + \mu_Q D(\pi(s), a)
\eeq 
is convex w.r.t. $a$ for any $s \in \cS$ and any $\pi \in \Pi$.
$Q^\pi(s, \cdot)$ is convex if $\mu_Q = 0$, and strongly convex
if $\mu_Q < 0$. 
\revise{Otherwise, if $\mu_Q > 0$, then $Q^\pi(s, \cdot)$ is nonconvex with bounded lower curvature. 
For simplicity, we refer to this class of functions as weakly convex.
This class includes differentiable functions with Lipschitz continuous gradients~\cite[Lemma 4.2]{drusvyatskiy2019efficiency}.}
Later in~\cref{sec:general_action_PMD} when we incorporate function approximation, we only require the approximation of $Q^{\pi}$ (and not $Q^{\pi}$ itself) to be weakly convex.
We introduce weak convexity now to gain intuition on how it affects convergence to a global or local solution.
Let us denote
\beq \label{eq:def_convexity_PDA}
\mu_d := \mu_h - \mu_Q.
\eeq
The advantage function $\psi^{\pi}(s, \cdot)$ defined in \eqnok{eq:def_advantage}
is convex if $\mu_d \ge 0$, and weakly convex otherwise.

\begin{algorithm}[H]
\caption{The policy mirror descent method}
\begin{algorithmic}
\State {\bf Input:} $\eta_k \ge 0$ and $\pi_0$.
\For {$k =0,1,\ldots,$}
\begin{align}
\pi_{k+1}(s) &=  \argmin_{a \in \cA} \{  \psi^{\pi_k}(s, a)  + \tfrac{1}{\eta_k}D(\pi_k(s), a)\}  \label{eq:PMD_step0}  \\
&= \argmin_{a \in \cA} \{  Q^{\pi_k}(s, a) + h^a(s)  + \tfrac{1}{\eta_k}D(\pi_k(s), a)\},  \label{eq:PMD_step}
\forall s \in \cS.
\end{align}
\EndFor
\end{algorithmic} \label{alg:basic_pmd}
\end{algorithm}

With a properly chosen $\eta_k$, the objective
function of the subproblem in \eqnok{eq:PMD_step0}
is strongly convex, and thus 
the new policy $\pi_{k+1}$ is well-defined.
The following result is well-known for mirror descent methods.

\begin{lemma} \label{lem:pmd_three_point}
If $\eta_k$ in \eqnok{eq:PMD_step0} satisfies
\beq \label{eq:PMDsteprule0}
\mu_d + \tfrac{1}{\eta_k} \ge 0,
\eeq
then for any $a \in \cA$,
\beq \label{eq:PMD_three_point}
 \psi^{\pi_k}(s, \pi_{k+1}(s))  + \tfrac{1}{\eta_k}D(\pi_k(s), \pi_{k+1}(s)) + (\mu_d + \tfrac{1}{\eta_k}) D(\pi_{k+1}(s), a)
 \le  \psi^{\pi_k}(s, a) +  \tfrac{1}{\eta_k}D(\pi_k(s), a).
\eeq
\end{lemma}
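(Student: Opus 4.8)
The plan is to combine three standard ingredients for Bregman proximal updates: the relative convexity of the advantage function, the first-order optimality condition for the subproblem \eqnok{eq:PMD_step0}, and the three-point identity associated with $D$.

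\textbf{Step 1 (relative convexity of $\psi^{\pi_k}$).} First I would show that $\psi^{\pi_k}(s,\cdot)$ is $\mu_d$-convex relative to $D$, i.e.\ that for all $a,b \in \cA$,
\[
\psi^{\pi_k}(s,a) \ge \psi^{\pi_k}(s,b) + \langle \nabla_a \psi^{\pi_k}(s,b),\, a-b\rangle + \mu_d\, D(b,a).
\]
Dropping the $a$-independent terms in \eqnok{eq:def_advantage}, $\psi^{\pi_k}(s,\cdot)$ agrees with $Q^{\pi_k}(s,\cdot)+h^{\cdot}(s)$. Since $D(\pi_k(s),\cdot)$ differs from $\omega$ only by an affine function of $a$, the assumed convexity in \eqnok{eq:convexity_Q} is equivalent to convexity of $Q^{\pi_k}(s,\cdot)+\mu_Q\omega(\cdot)$, which rearranges to $Q^{\pi_k}(s,a)\ge Q^{\pi_k}(s,b)+\langle\nabla_a Q^{\pi_k}(s,b),a-b\rangle-\mu_Q D(b,a)$; adding the $\mu_h$-strong convexity of $h^{\cdot}(s)$ and using $\mu_d=\mu_h-\mu_Q$ yields the display above.

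\textbf{Step 2 (optimality and the three-point identity).} Condition \eqnok{eq:PMDsteprule0} ensures the subproblem objective has relative curvature $\mu_d+\tfrac{1}{\eta_k}\ge0$, hence is convex, so $\pi_{k+1}(s)$ is characterized by the variational inequality $\langle \nabla_a\psi^{\pi_k}(s,\pi_{k+1}(s)) + \tfrac{1}{\eta_k}[\omega'(\pi_{k+1}(s))-\omega'(\pi_k(s))],\, a-\pi_{k+1}(s)\rangle \ge 0$ for all $a\in\cA$, where I used $\nabla_a D(\pi_k(s),a)=\omega'(a)-\omega'(\pi_k(s))$. I would insert the resulting lower bound on $\langle\nabla_a\psi^{\pi_k}(s,\pi_{k+1}(s)),\,a-\pi_{k+1}(s)\rangle$ into the Step~1 inequality evaluated at $b=\pi_{k+1}(s)$, then rewrite the cross term $\langle\omega'(\pi_{k+1}(s))-\omega'(\pi_k(s)),\,a-\pi_{k+1}(s)\rangle$ via the three-point identity $D(\pi_k(s),a)-D(\pi_k(s),\pi_{k+1}(s))-D(\pi_{k+1}(s),a)$, which follows directly from the definition of $D$ in \eqnok{eq:omega_strong_convexity}. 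Regrouping the two $D(\pi_{k+1}(s),a)$ contributions (one from Step~1, one from the identity) produces the coefficient $\mu_d+\tfrac{1}{\eta_k}$ and gives exactly \eqnok{eq:PMD_three_point}.

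I expect the main obstacle to be the bookkeeping in Step~1: confirming that the curvature of $Q^{\pi_k}$ encoded in \eqnok{eq:convexity_Q}, which is stated relative to the \emph{fixed} base point $\pi_k(s)$, transfers to relative convexity about the \emph{arbitrary} base point $b=\pi_{k+1}(s)$---this hinges on $D(\pi_k(s),\cdot)$ and $\omega$ agreeing up to an affine term---and on correctly tracking signs in the weakly convex regime $\mu_d<0$. The remaining manipulations are the routine mirror-descent algebra, with \eqnok{eq:PMDsteprule0} serving the dual purpose of making the subproblem well-posed and keeping the final coefficient $\mu_d+\tfrac{1}{\eta_k}$ nonnegative.
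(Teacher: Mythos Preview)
Your proposal is correct and is precisely the standard argument underlying the paper's proof. The paper simply observes that the objective of \eqnok{eq:PMD_step0} is strongly convex with modulus $\mu_d+\tfrac{1}{\eta_k}$ and then cites \cite[Lemma~3.5]{LanBook2020} as a black box; your Step~1--Step~2 derivation (relative convexity, first-order optimality, three-point identity) is exactly the content of that cited lemma, so the two proofs coincide in substance.
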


\begin{proof}
Observe that the objective function of \eqnok{eq:PMD_step0}
 is strongly convex w.r.t. $a \in \cA$ with modulus $\mu_d + 1/\eta_k$.
The result then follows immediately from the optimality condition of \eqnok{eq:PMD_step0}\cite[Lemma 3.5]{LanBook2020}.
\end{proof}

\vgap

We now show the progress made by each iteration of the PMD method.
\begin{proposition}
Suppose $\eta_k$ in the PMD method satisfies \eqnok{eq:PMDsteprule0}.
Then for any $s \in S$, 
\begin{align}
V^{\pi_{k+1}}(s) - V^{\pi_{k}}(s) \le 
\psi^{\pi_{k}}(s, \pi_{k+1}(s))
\le -[\tfrac{1}{\eta_k}D(\pi_k(s), \pi_{k+1}(s)) + (\mu_d + \tfrac{1}{\eta_k}) D(\pi_{k+1}(s), \pi_k(s))]. \label{eq:function_decrease_PMD}
\end{align}
\end{proposition}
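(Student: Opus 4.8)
The plan is to establish the two displayed inequalities in reverse order: I would prove the right-hand bound first, and then use it to obtain the left-hand one.

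First I would derive the second inequality directly from the three-point lemma. Applying \cref{lem:pmd_three_point} (inequality \eqnok{eq:PMD_three_point}) with the particular choice $a = \pi_k(s)$, the right-hand side becomes $\psi^{\pi_k}(s,\pi_k(s)) + \tfrac{1}{\eta_k}D(\pi_k(s),\pi_k(s))$. Two simplifications collapse this to zero: the Bregman distance satisfies $D(\pi_k(s),\pi_k(s)) = 0$ by the definition in \eqnok{eq:omega_strong_convexity}, and the advantage vanishes at its own action, since $\psi^{\pi_k}(s,\pi_k(s)) = Q^{\pi_k}(s,\pi_k(s)) - V^{\pi_k}(s) = 0$ by the definition \eqnok{eq:def_advantage} together with \eqnok{eq:QV1}. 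Rearranging the resulting inequality then yields exactly the second bound.

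A key consequence, which I will record explicitly, is that the right-hand side of the second inequality is nonpositive: the term $\tfrac{1}{\eta_k}D(\pi_k(s),\pi_{k+1}(s))$ is nonnegative because $\eta_k > 0$ and Bregman distances are nonnegative, while $(\mu_d + \tfrac{1}{\eta_k})D(\pi_{k+1}(s),\pi_k(s))$ is nonnegative precisely because the step rule \eqnok{eq:PMDsteprule0} guarantees $\mu_d + \tfrac{1}{\eta_k} \ge 0$. Hence $\psi^{\pi_k}(q,\pi_{k+1}(q)) \le 0$ for \emph{every} state $q \in \cS$, not just for the fixed $s$ of interest. For the first inequality I would then invoke \cref{lem:performance_diff_deter} with $\pi = \pi_k$ and $\pi' = \pi_{k+1}$, giving
\[
V^{\pi_{k+1}}(s) - V^{\pi_k}(s) = \tfrac{1}{1-\gamma}\int \psi^{\pi_k}(q,\pi_{k+1}(q))\,\kappa_s^{\pi_{k+1}}(dq).
\]
The crucial step is to peel off the $t=0$ contribution of the discounted visitation measure \eqnok{eq:def_visit_measure}: since $\Pr^{\pi_{k+1}}(s_0 \in \cdot \mid s_0 = s)$ is a point mass at $s$, this atom carries weight $(1-\gamma)$ and contributes exactly $\psi^{\pi_k}(s,\pi_{k+1}(s))$ after cancellation of the prefactor $\tfrac{1}{1-\gamma}$. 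The remaining $t \ge 1$ terms integrate $\psi^{\pi_k}(q,\pi_{k+1}(q))$ against nonnegative measures, so by the global nonpositivity established above the entire tail is $\le 0$; dropping it yields the first inequality.

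The main obstacle is this first inequality. The performance difference expresses $V^{\pi_{k+1}}(s) - V^{\pi_k}(s)$ as an integral of the advantage over all reachable states, whereas the target bound $\psi^{\pi_k}(s,\pi_{k+1}(s))$ is the pointwise advantage at the single state $s$. Reconciling the two requires (i) isolating the $t=0$ atom of $\kappa_s^{\pi_{k+1}}$ so that the weight $(1-\gamma)$ cancels exactly, and (ii) having the global bound $\psi^{\pi_k}(\cdot,\pi_{k+1}(\cdot)) \le 0$ available to discard the tail — which is the reason the second inequality must be proved first.
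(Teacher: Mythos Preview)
Your proposal is correct and follows essentially the same route as the paper: first the three-point lemma with $a=\pi_k(s)$ to get the right-hand inequality (using $\psi^{\pi_k}(s,\pi_k(s))=0$ and $D(\pi_k(s),\pi_k(s))=0$), then the performance-difference lemma together with the global nonpositivity of the advantage and the fact that $\kappa_s^{\pi_{k+1}}$ places mass at least $1-\gamma$ on the atom $\{s\}$ to get the left-hand one. The only cosmetic difference is that you isolate exactly the $t=0$ contribution of the visitation measure, whereas the paper uses the slightly coarser bound $\kappa_s^{\pi_{k+1}}(\{s\})\ge 1-\gamma$; both lead to the same conclusion.
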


\begin{proof}
By \eqnok{eq:PMD_three_point} with $a = \pi_k(s)$, we have
\begin{align}
&\psi^{\pi_k}(s, \pi_{k+1}(s))  + \tfrac{1}{\eta_k}D(\pi_k(s), \pi_{k+1}(s)) + (\mu_d + \tfrac{1}{\eta_k}) D(\pi_{k+1}(s), \pi_k(s)) \nn\\
& \le  \psi^{\pi_k}(s, \pi_k(s)) +  \tfrac{1}{\eta_k}D(\pi_k(s), \pi_k(s)) = 0,\label{eq:PMD_bnd_Adv1}
\end{align}
where the last identity follows from the fact that $\psi^{\pi_k}(s, \pi_k(s)) = 0$ due to \eqnok{eq:QV1} and \eqnok{eq:def_advantage}.
By~\cref{lem:performance_diff_deter}, \eqnok{eq:PMD_bnd_Adv1},
and the fact that $\kappa_s^{\pi_{k+1}} (\{s\}) \ge 1-\gamma$ due to \eqnok{eq:def_visit_measure},
we have 
\begin{align}
V^{\pi_{k+1}}(s) - V^{\pi_{k}}(s)
&= \tfrac{1}{1-\gamma} \int \psi^{\pi_{k}}(q, \pi_{k+1}(q)) \kappa_s^{\pi_{k+1}} (d q)  \nn\\
&\le \tfrac{1}{1-\gamma} \psi^{\pi_{k}}(s, \pi_{k+1}(s)) \, \kappa_s^{\pi_{k+1}} (\{s\})  
\le \psi^{\pi_{k}}(s, \pi_{k+1}(s)). \label{eq:PMD_bnd_Adv2}
\end{align}
The result then follows by combining \eqnok{eq:PMD_bnd_Adv1} with \eqnok{eq:PMD_bnd_Adv2}.
\end{proof}

\vgap

In the following two results, we establish the convergence of the PMD method
under the assumption $\mu_d \ge 0$ and $\mu_d < 0$, respectively.
\begin{theorem} \label{the:PMD_linear}
Suppose that $\mu_d \ge 0$ and $\eta_k$ in the PMD method satisfies \eqnok{eq:PMDsteprule0}.
Then for any $k \ge 1$, 
\begin{align}
\tsum_{t=0}^{k-1}\left\{\eta_t[f(\pi_{t+1}) - f(\pi^*)] - \gamma \eta_t [f(\pi_{t}) - f(\pi^*)]\right\}
+ (\mu_d \eta_{k-1} + 1) {\cD}(\pi_{k}, \pi^*) 
\le {\cD}(\pi_0, \pi^*), \label{eq:PMD_linear_rate}
\end{align}
where 
\begin{align} \label{eq:cD_defn}
   {\cD}(\pi, \pi') := \bbe_{s \sim \nu^*} [D(\pi(s), \pi'(s))]. 
\end{align}
In particular, if $\eta_k = \gamma^{-k}$, then
\[
f(\pi_k) - f(\pi^*) + (\mu_d + \gamma^{k-1}) {\cD}(\pi_{k}, \pi^*) 
\le \gamma^k [f(\pi_0) - f(\pi^*)] + \gamma^{k-1} {\cD}(\pi_{0}, \pi^*).
\]
\end{theorem}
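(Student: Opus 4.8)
The plan is to derive a clean one-step recursion relating the successive optimality gaps $f(\pi_{k+1}) - f(\pi^*)$ and $f(\pi_k) - f(\pi^*)$ together with the aggregated Bregman distances ${\cD}(\pi_{k+1}, \pi^*)$ and ${\cD}(\pi_k, \pi^*)$, and then to telescope it over $t = 0, \ldots, k-1$. The engine for the recursion is the three-point inequality \eqnok{eq:PMD_three_point}, which I would instantiate at the comparison point $a = \pi^*(s)$ for each fixed state $s \in \cS$.

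The key steps, carried out pointwise in $s$, are as follows. In \eqnok{eq:PMD_three_point} with $a = \pi^*(s)$, the left-hand side contains $\psi^{\pi_k}(s, \pi_{k+1}(s))$, which I would lower-bound by $V^{\pi_{k+1}}(s) - V^{\pi_k}(s)$ using the function-decrease inequality \eqnok{eq:function_decrease_PMD}, and I would discard the nonnegative term $\tfrac{1}{\eta_k} D(\pi_k(s), \pi_{k+1}(s))$. This yields $V^{\pi_{k+1}}(s) - V^{\pi_k}(s) + (\mu_d + \tfrac{1}{\eta_k}) D(\pi_{k+1}(s), \pi^*(s)) \le \psi^{\pi_k}(s, \pi^*(s)) + \tfrac{1}{\eta_k} D(\pi_k(s), \pi^*(s))$. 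Taking expectation over $s \sim \nu^*$ converts the value differences into $f(\pi_{k+1}) - f(\pi_k)$ and the distances into ${\cD}(\cdot,\cdot)$, while \cref{lem:monotonicity} rewrites $\bbe_{s\sim\nu^*}[\psi^{\pi_k}(s, \pi^*(s))] = -(1-\gamma)[f(\pi_k) - f(\pi^*)]$. Subtracting $f(\pi^*)$ from both value terms and multiplying through by $\eta_k$ then produces the recursion $\eta_k[f(\pi_{k+1}) - f(\pi^*)] - \gamma\eta_k[f(\pi_k) - f(\pi^*)] + (\mu_d\eta_k + 1){\cD}(\pi_{k+1}, \pi^*) \le {\cD}(\pi_k, \pi^*)$.

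Summing this recursion over $t = 0, \ldots, k-1$ telescopes the distance terms: $\tsum_{t=0}^{k-1}{\cD}(\pi_{t+1}, \pi^*) - \tsum_{t=0}^{k-1}{\cD}(\pi_t, \pi^*)$ collapses to ${\cD}(\pi_k, \pi^*) - {\cD}(\pi_0, \pi^*)$, and the leftover $\tsum_{t=0}^{k-1}\mu_d\eta_t {\cD}(\pi_{t+1}, \pi^*)$ is nonnegative because $\mu_d \ge 0$, so I retain only its last term $\mu_d\eta_{k-1}{\cD}(\pi_k, \pi^*)$ and drop the rest; this recovers exactly \eqnok{eq:PMD_linear_rate} with the coefficient $\mu_d\eta_{k-1} + 1$. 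For the special case $\eta_k = \gamma^{-k}$, I would observe that $\eta_t f(\pi_{t+1}) - \gamma\eta_t f(\pi_t)$, shifted by $f(\pi^*)$, equals $g_{t+1} - g_t$ with $g_t := \gamma^{-(t-1)}[f(\pi_t) - f(\pi^*)]$, so the first summation collapses to $\gamma^{-(k-1)}[f(\pi_k) - f(\pi^*)] - \gamma[f(\pi_0) - f(\pi^*)]$; rearranging and multiplying by $\gamma^{k-1}$ gives the stated bound.

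The main obstacle is keeping the directions of the inequalities straight when passing from the per-state three-point lemma to the expected optimality gap: the bound \eqnok{eq:function_decrease_PMD} must be used as a \emph{lower} bound on $\psi^{\pi_k}(s, \pi_{k+1}(s))$ so that substituting the smaller quantity preserves the $\le$, and \cref{lem:monotonicity} must be applied with the correct sign so that $\bbe_{s\sim\nu^*}[\psi^{\pi_k}(s,\pi^*(s))]$ supplies the factor $-(1-\gamma)$ that ultimately yields the contraction factor $\gamma$. The only other delicate point is the telescoping bookkeeping, specifically using $\mu_d \ge 0$ to discard all but the final term of $\tsum_t \mu_d\eta_t {\cD}(\pi_{t+1},\pi^*)$ and thereby obtain the precise coefficient on ${\cD}(\pi_k,\pi^*)$.
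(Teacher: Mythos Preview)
Your proposal is correct and follows essentially the same approach as the paper's proof. The only cosmetic difference is the order of operations: the paper first multiplies \eqnok{eq:PMD_three_point} by $\eta_t$, sums over $t$ pointwise in $s$ (using $\mu_d\ge 0$ to retain only the final $\mu_d\eta_{k-1}D(\pi_k(s),a)$ term), then sets $a=\pi^*(s)$, applies \eqnok{eq:function_decrease_PMD} and takes $\bbe_{s\sim\nu^*}$; you instead derive a per-iteration recursion in expectation and telescope afterward, but the ingredients (three-point lemma, \eqnok{eq:function_decrease_PMD}, \cref{lem:monotonicity}, and $\mu_d\ge 0$) and the final bookkeeping are identical.
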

\begin{proof}
Multiplying \eqnok{eq:PMD_three_point} by $\eta_k$ and then taking the telescopic sum (in view of $\mu_d \ge 0$), we obtain
\begin{align*}
&\tsum_{t=0}^{k-1} \eta_t \psi^{\pi_t}(s, \pi_{t+1}(s))  + \tsum_{t=0}^{k-1} D(\pi_t(s), \pi_{t+1}(s)) + (\mu_d \eta_{k-1} + 1) D(\pi_{k}(s), a) \\
 &\le \tsum_{t=0}^{k-1} \eta_t \psi^{\pi_t}(s, a) +  D(\pi_0(s), a).
\end{align*}
Setting $a = \pi^*(s)$ in the above inequality, and using the first relation in \eqnok{eq:function_decrease_PMD},
we then have
\begin{align*}
\tsum_{t=0}^{k-1}\eta_t [V^{\pi_{t+1}}(s) - V^{\pi_{t}}(s)]
+ (\mu_d \eta_{k-1} + 1) D(\pi_{k}(s), \pi^*(s)) 
\le \tsum_{t=0}^{k-1} \eta_t \psi^{\pi_t}(s, \pi^*(s)) +  D(\pi_0(s), \pi^*(s)).
\end{align*}
Now taking expectation of the above inequality w.r.t. $s \sim \nu^*$, applying~\cref{lem:monotonicity},
and using the definition of $f$ in \eqnok{eq:MDPNLP}, we then obtain
\begin{align*}
\tsum_{t=0}^{k-1}\eta_t[f(\pi_{t+1}) - f(\pi_{t})]
+ (\mu_d \eta_{k-1} +1) {\cD}(\pi_{k}, \pi^*) 
\le (1-\gamma) \tsum_{t=0}^{k-1}\eta_t [f(\pi^*) - f(\pi_t)] + {\cD}(\pi_0, \pi^*),
\end{align*}
which clearly implies \eqnok{eq:PMD_linear_rate} after rearranging the terms.
\end{proof}

In view of~\cref{the:PMD_linear}, the PMD method will exhibit 
a linear rate of convergence if the advantage function is convex w.r.t. $a \in \cA$.
This result generalizes the previous linear rate of convergence for the PMD method in
the tabular setting to more general problems.
\revise{Our next result supplies convergence guarantees when $\mu_d < 0$, which will be discussed in more detail after the proof.}

\begin{theorem} \label{the:pmd_nonconvex}
Suppose that $\mu_d <0$ and that
$\eta_k = 1/ (2|\mu_d| )$ for any $k \ge 0$. 
Then
for any $s \in \cS$, there exists an iteration index $\bar{k}(s)$ found by running $k$ iterations of the PMD method s.t.~
\begin{align}
0 \leq -\psi^{\pi_{\bar{k}(s)}}(s, \pi_{\bar{k}(s)+1}(s)) 
&\leq \tfrac{1}{k}[V^{\pi_0}(s) - V^{\pi^*}(s)] 
\label{eq:stationary_PMD0} \\
2D\big(\pi_{\bar{k}(s)}(s), \pi_{\bar{k}(s)+1}(s)\big) + D\big(\pi_{\bar{k}(s)+1}(s), \pi_{\bar{k}(s)}(s)\big) 
&\leq
\tfrac{1}{k \vert \mu_d \vert}[V^{\pi_0}(s) - V^{\pi^*}(s)]. \label{eq:stationary_PMD1}
\end{align}
\end{theorem}

\begin{proof}
Taking the telescopic sum of~\eqnok{eq:function_decrease_PMD},
we obtain
\begin{align*}
\tsum_{t=0}^{k-1}[-\psi^{\pi_{t}}(s, \pi_{t+1}(s))]  
\le V^{\pi_{0}}(s) -V^{\pi_{k}}(s) \le V^{\pi_{0}}(s) -V^{\pi^*}(s), \quad \forall s \in \cS.
\end{align*} 
For a state $s \in \cS$, the iteration index $\bar{k}(s)$ in~\eqnok{eq:stationary_PMD0} is given by the one with the smallest value of  $[-\psi^{\pi_t}(s, \pi_{t+1}(s)]$ for $t =0,\ldots,k-1$. The relation~\eqnok{eq:stationary_PMD1} follows directly from \eqnok{eq:function_decrease_PMD}
and~\eqnok{eq:stationary_PMD0}.
\end{proof}

\vgap

\revise{We will now use the theorem to derive a new notion of stationarity under certain regularity conditions. Suppose $\psi^{\pi_{\bar{k}(s)}}(s,\cdot)$ is differentiable and $L$-smooth, the distance generating function for Bregman's distance is $L_\omega$-smooth, and $\cA$ is compact. Let $\epsilon = k^{-1/2}$. By combining these assumptions, $\|\pi_{\bar{k}(s)}(s) - \pi_{\bar{k}(s)+1}(s)\| \leq O(\epsilon)$ (which can be shown by~\eqnok{eq:stationary_PMD1} and~\eqnok{eq:omega_strong_convexity}), and~\cite[Lemma 6.3]{LanBook2020}, one can then derive}
\begin{align} \label{eq:stationary_pt}
    \langle \nabla_a \psi^{\pi_{\bar{k}(s)}}(s, \pi_{\bar{k}(s)}(s)), a - \pi_{\bar{k}(s)}(s) \rangle \geq -O(\epsilon), \quad \forall a \in \cA,
\end{align}
\revise{where the big-O hides dependence on $(1-\gamma)^{-1}$, $\vert \mu_d \vert$, $L$, $L_\omega$, and $\vert \cA \vert$.
We say a policy $\pi_{\bar{k}(s)}$ is at an \textit{$O(\epsilon)$-stationary point} if it satisfies~\eqnok{eq:stationary_pt}. In some cases, convergence guarantees can be derived from~\eqref{eq:stationary_pt}. When the number of actions is a finite number $\nA$, we have $\nabla_a \psi^{\pi}(s,a) = \psi^{\pi}(s) := \{\psi^{\pi}(s,e_a)\}_{a =1}^{\nA} \in \mathbb{R}^{\nA}$, where $e_a$ is an extreme point of $\Delta_\nA$. Then $\pi_{\bar{k}(s)}$ is an approximate solution to the variational inequality (VI),}
\begin{align*}
    \text{Find $\hat{\pi}$ s.t.}\quad \langle \psi^{\hat \pi}(s), a - \hat \pi(s) \rangle \geq 0, \quad \forall a \in \Delta_{\nA}.
\end{align*}
\revise{As seen by~\cref{lem:performance_diff_deter} (see also~\cite{LanPMD2021}), solving the VI across all states simultaneously implies a global solution to~\eqnok{eq:RL0}.
We leave the analysis of stationary points for more general RL problems, including those where the gradient $\nabla \psi^{\pi_t}(s,a)$ may not exist, to important future work. 
Another issue is that~\eqref{eq:stationary_pt} may occur at a different index $\bar{k}(s)$ for each state $s \in \cS$. The strategy of choosing a different iteration index for a different state can possibly be implemented
when function approximation is used in PMD as shown in the next subsection.}

\subsection{Function Approximation in PMD} \label{sec:func_approx_in_pmd}
In an ideal situation for the PMD method, we update a given policy $\pi_k$ by 
\begin{align}
\pi_{k+1}(s)
&= \argmin_{a \in \cA} \{ \underbrace{ Q^{\pi_k}(s, a) -  \tfrac{1}{\eta_k} \langle \nabla \omega(\pi_k(s)), a\rangle}_{L^{\pi_k}(s, a)} +  h^a(s) +  \tfrac{1}{\eta_k} \omega(a)\}, \forall s \in \cS \label{eq:def_ideal_PMD}
\end{align}
for some properly chosen stepsize $\eta_k \ge 0$. 
However, for the general state space $\cS$, it is impossible to 
perform these updates since $\cS$ is possibly infinite. Current practice is
to apply different function approximations for both estimating $Q^{\pi_k}(s, a)$
and computing a parameterized form of $\pi_k(s)$.
While the idea of using a parameterized form of $\pi_k$ appears to be conceptually simple, it often 
results in a few significant limitations. First, it introduces additional approximation error due to the difficulty 
of identifying the optimal policy class.
 Second, it will destroy the convexity of
the action space $\cA$ and make it difficult to ensure the policy's chosen action is feasible, i.e., lies in $\mathcal A$, without further processing.

In order to deal with these issues for RL with general state spaces,
we adopt the idea of using function approximation for the value functions.
However, different from existing methods that require function approximation
for both value function and policy parameterization, 
we will not use policy parameterization in the policy optimization step.
Instead, as shown in~\cref{alg:basic_inf_pmd}, we suggest to compute a function approximation 
for an augmented action-value function $L^{\pi_k}(s, a)$
(rather than $Q^{\pi_k}(s, a)$) used to define the objective function in \eqnok{eq:def_ideal_PMD}.

\begin{algorithm}[H]
\caption{The policy mirror descent method with function approximation}
\begin{algorithmic}
\State {\bf Input:} $\eta_k \ge 0$ and $\pi_0$.
\For {$k =0,1,\ldots,$}
\State Set
\beq \label{eq:aug_value_function}
L^{\pi_k}(s, a)  =  Q^{\pi_k}(s, a) +  \tfrac{1}{\eta_k} \langle \nabla \omega(\pi_k(s)), a\rangle.
\eeq
\State Run a policy evaluation procedure to estimate $L^{\pi_k}(s, a)$ s.t.
\beq \label{eq:policy_eval}
\tilde L(s, a; \theta_k) \approx  L^{\pi_k}(s, a), \forall s \in \cS, \forall a \in \cA.
\eeq  
\State Update the policy to
\beq \label{eq:PMD_step_fa}
\pi_{k+1}(s) = \argmin_{a \in \cA} \{ \tilde L(s, a; \theta_k) + h^a(s) +  \tfrac{1}{\eta_k} \omega(a)\},
\forall s \in \cS.
\eeq
\EndFor
\end{algorithmic} \label{alg:basic_inf_pmd}
\end{algorithm}

In \eqnok{eq:policy_eval}, $\tilde L(s, a; \theta_k)$ represents a function approximation, parameterized by $\theta_k$, of the augmented action-value function  $L^{\pi_k}(s, a)$. 
We assume that $\tilde L(s, a; \theta_k)$ is given in the form akin to~\eqnok{eq:aug_value_function},
\beq \label{eq:app_pmd_continuous}
\tilde L(s, a; \theta_k) = \tilde Q(s, a;\theta_k) + \langle \tilde \nabla \omega(s; \theta_k), a \rangle,
\eeq
where $\tilde Q(s, a;\theta_k)$ and $\tilde \nabla \omega(s; \theta_k)$ approximate $Q^{\pi_k}(s,a)$ and $\nabla \omega(\pi_k(s))/\eta_k$, respectively.
In the simplest form of function approximation of say $Q^{\pi_k}$, we are given  
a set of linearly independent basis functions $q_1(s,a), \ldots, q_d(s,a)$ 
(derived from {feature construction}~\cite{RichardBarto2018} such as random features~\cite{rahimi2007random}).
The goal then is to determine the coefficients $\theta_k \in \bbr^d$ so that 
$\tilde Q(s, a; \theta_k) = \tsum_{i=1}^d \theta_k^i q_i(s,a)$ approximates $Q^{\pi_k}(s, a)$.
More powerful and sophisticated nonlinear function approximation methods, such as Reproducing Kernel Hilbert Space (RKHS) and
neural networks, can also be used to estimate both $Q^{\pi_k}(s, a)$ and $\nabla \omega(\pi_k(s))$. 
Stochastic estimates of $Q^{\pi_k}(s,a)$ can be constructed from samples along a trajectory in the environment~\cite{li2023policy}, which are then used to fit~\eqnok{eq:app_pmd_continuous} based on the following statistical learning/estimation problem,
\begin{align}
\min_{\theta \in \Theta} \big\{\bbe_{s \sim \rho} [\|\tilde Q(s, \cdot; \theta) - Q^{\pi_k}(s,\cdot)\|^2_* + \|\tilde \nabla \omega(s; \theta) -\nabla \omega(\pi_k(s))/\eta_k\|_*^2 ]\big\},\label{eq:pmd_statistical_learning}
\end{align} 
for a given distribution $\rho$ over $\cS$, where $\|\cdot\|_*$ is the dual norm of a given arbitrary norm $\|\cdot\|$ of the action space. 
We do not specify $\rho$ here, since it can take on multiple forms, such as the uniform distribution over states (i.e., off-policy sampling) or the steady state distribution of $\pi_k$ (i.e., on-policy sampling).
We can make a couple observations.
First,~\eqnok{eq:pmd_statistical_learning} is a convex optimization problem if $\tilde Q(s, a; \theta)$ and $\tilde \nabla \omega(s; \theta)$ are linear w.r.t. $\theta$.
Second, observe that there exists two types of uncertainty for the stochastic optimization problem in \eqnok{eq:pmd_statistical_learning}, including: a) the sampling of $s$ from $\cS$ according to $\rho$ and b) the estimation of $Q^{\pi_k}(s,a)$ through the sampling of the Markov Chain induced by $\pi_k$.
We use $\xi_k$ to denote these random samples generated at iteration $k$ to solve problem \eqnok{eq:pmd_statistical_learning},
and $F_k := \{\xi_1, \ldots, \xi_k\}$.
Let $\theta_k^*$ be an optimal solution of \eqnok{eq:pmd_statistical_learning} and $\theta_k$ be an approximate solution
found by using some policy evaluation algorithm.
We will discuss such algorithms in~\cref{sec:appendix_pe}.

It is worth noting that the stepsize $\eta_k$ appears in~\eqnok{eq:pmd_statistical_learning}, which may impact the magnitude of the stochastic first-order information when solving~\eqnok{eq:pmd_statistical_learning} via a stochastic approximation method.
As a result, the sample complexity of solving this statistical learning problem might increase as $\eta_k$ decreases. 
One can possibly alleviate this issue by using a constant stepsize $\eta_k$, but this may worsen the dependence on $1-\gamma$ as noticed in~\cite{LanPMD2021}.
Another possible solution is to separately estimate $\nabla \omega(\pi_k(s))/\eta_k$ and $Q^{\pi_k}$ and use a larger sample size to approximate the former term, since estimating the former can be done faster since it does not involve samples from the Markov chain associated with $\pi_k$.
In the next section, we will propose the policy dual averaging method to avoid estimation of $\nabla \omega(\pi_k(s))\eta_k$.

The equation in \eqnok{eq:PMD_step_fa} tells us how to compute the new policy $\pi_{k+1}$.
In some cases, $\pi_{k+1}(s)$ has an explicit form.
For example, consider RL problems with a finite set of actions $\{\cA_1, \ldots, \cA_{\nA}\}$.
As derived earlier in the text surrounding~\eqnok{eq:def_simplex}, we search for an optimal randomized policy in the probability simplex $\cA = \Delta_\nA$. 
\revise{Thus, the action space becomes a vector space.}
The cost and transition probability then follow the definition from~\eqnok{eq:dis_kernel}.
For a given iterate $\pi_k(s) \in \cA$, we write
\begin{align} 
Q_0^{\pi_k}(s, \cA_i) &:= c_0(s, \cA_i) + h^{\pi_k(s)}(s) +  \gamma \int \cP_0(ds'| s,\cA_i) V^{\pi_k}(s'), i = 1,\ldots, \nA, \label{eq:finite_Q_function}
\end{align}
as the value function associated with action $\cA_i$.
We also denote 
$Q_0^{\pi_k}(s) = (Q_0^{\pi_k}(s, \cA_1), \ldots,Q_0^{\pi_k}(s, \cA_\nA))$ 
for notational convenience.
It then follows that
$Q^{\pi_k}(s, a) = \langle Q_0^{\pi_k}(s), a \rangle$ and therefore $L^{\pi_k}(s,a) = \langle Q_0^{\pi_k}(s) + \frac{1}{\eta_k}\nabla \omega(\pi_k(s)), a \rangle$.
We suggest to approximate $Q_0^{\pi_k}(s, \cA_i)$ by $\tilde Q_0(s, \cA_i; \theta_k)$, $i = 1, \ldots, \nA$, 
where $\tilde Q_0(s, \cA_i; \theta)$ denotes some function approximation parameterized by $\theta$. 
Then the subproblem in~\eqnok{eq:PMD_step_fa} reduces to,
\beq \label{eq:PMD_step_finite}
\pi_{k+1}(s) = \argmin_{a \in \Delta_\nA} \{ \underbrace{\langle \tilde Q_0(s; \theta_k) + \tilde \nabla \omega(s;\theta), a \rangle}_{\tilde L(s,a;\theta_k)} + h^a(s) + \tfrac{1}{\eta_k}\omega(a)\},
\forall s \in \cS,
\eeq
where
\beq \label{eq:approx_Q_vec}
    \tilde Q_0(s;\theta_k) := \big(\tilde{Q}_0(s,\cA_1;\theta_k), \ldots, \tilde{Q}_0(s,\cA_{\nA};\theta_k)\big).
\eeq
The objective function of \eqnok{eq:PMD_step_finite} is strongly convex with modulus $\tilde \mu_d = \mu_h + 1/\eta_k$.
With choice of $\omega(a) =\tsum_{i=1}^\nA a_i \log a_i$ and $h^a = \lambda {\rm KL}(a||\pi_0(s))$ for some $\lambda \ge 0$, it is well-known that the above problem has an explicit solution~\cite{LanPMD2021}.

In a more general setting, such as when $\cA$ is not given as a finite set, $\pi_{k+1}(s)$ is represented as an optimal solution to the problem~\eqnok{eq:PMD_step_fa}.
With a properly chosen $\eta_k$, the objective function of \eqnok{eq:PMD_step_fa}
is strongly convex, and hence the policy $\pi_{k+1}(s)$ is uniquely defined.
Moreover, any efficient numerical procedures (e.g., the Ellipsoid method and accelerated gradient method) can be used to solve
\eqnok{eq:PMD_step_fa}. 
For the sake of simplicity, we assume throughout this paper
that exact solutions can be computed for \eqnok{eq:PMD_step_fa}. However, our analysis can be extended 
to the case when only approximate solutions are available.
See Section~6 of \cite{LanPMD2021} for an illustration.
Observe that,
different from existing policy gradient methods,
we do not actually compute $\pi_{k+1}(s)$ for any $s \in \cS$.
Since the function approximation $L(s, a; \theta_k)$ is given in a parametric form (through $\theta_k$'s),
we can compute $\pi_{k+1}(s)$ using \eqnok{eq:PMD_step_fa} whenever needed, e.g., inside
the policy evaluation procedure. The idea of representing
a policy by an optimization problem has been used in 
multi-stage stochastic optimization.

\subsection{PMD for General State and Continuous/Finite Action Spaces} \label{sec:general_action_PMD}
We now investigate the impact of the approximation error from~\eqnok{eq:def_app_error_finite} on the convergence of the PMD method for solving RL problems.
We still assume that the approximate augmented action-value function $\tilde L(s,a;\theta_k)$ is given in the form~\eqnok{eq:app_pmd_continuous}. 
We further assume that $\tilde Q(s, a;\theta)$ is weakly convex, i.e., 
\beq \label{eq:weakly_convex}
\tilde Q(s, a; \theta) + \mu_{\tilde Q} D(\pi_0(s), a) 
\eeq
is convex w.r.t. $a \in \cA$ for some $\mu_{\tilde Q} \ge 0$, and we denote
$
\tilde \mu_d := \mu_h - \mu_{\tilde Q}.
$
Moreover, we assume that $\tilde Q(s, \cdot; \theta)$ and $Q^\pi(s, \cdot)$ are Lipschitz continuous
s.t.
\begin{align}
|\tilde Q(s, a_1; \theta) - \tilde Q(s, a_2; \theta)| \le M_{\tilde Q} \|a_1 - a_2\|, \forall a_1, a_2 \in \cA, \label{eq:Lip_psi} \\
|Q^\pi(s, a_1) - Q^\pi(s, a_1)| \le M_Q \|a_1 - a_2\|, \forall a_1, a_2 \in \cA. \label{eq:Lip_Q}
\end{align}
We make a couple remarks.
First, we require~\eqnok{eq:Lip_psi} to hold w.r.t.~any $\theta \in \Theta$, where $\Theta$ is some feasible class of parameters.
Second, note that the Lipschitz constant of  $L^{\pi_k}(s,a)$ can be very large,  since $\eta_k$ can possibly be small.
That explains why we need to handle the two terms $ \tilde Q(s, a;\theta_k)$ and $\langle \tilde \nabla \omega(s; \theta_k), a \rangle$  in the definition of $\tilde L$ separately.
Third, the aforementioned assumptions hold if $Q^\pi(s,\cdot)$ is continuous and its approximation $\tilde{Q}(s,\cdot;\theta_k)$ is continuous and Lipschitz smooth for any $s \in \cS$~\cite[Lemma 4.2]{drusvyatskiy2019efficiency}.

In this section, we re-write the approximation error as
\beq \label{eq:def_app_error_finite}
\delta_k(s,a) = \tilde L(s,a; \theta_k) - L^{\pi_k}(s,a) = \delta_k^Q(s,a) + \langle \delta_k^\omega(s), a \rangle,
\eeq
where the individual summands are defined as
\begin{align}
\delta_k^Q(s, a) &:= \tilde Q(s, a; \theta_k) - Q^{\pi_k}(s,a) = \underbrace{\mathbb{E}[\tilde Q(s,a; \theta_k)] - Q^{\pi_k}(s,a)}_{\delta^{Q,det}_k(s,a)} + \underbrace{\tilde Q(s, a; \theta_k) - \mathbb{E}[\tilde Q(s,a; \theta_k)]}_{\delta^{Q,sto}_k(s,a)}, \label{eq:delta_Q_decomp} \\
\delta_k^\omega(s) &:= \tilde \nabla \omega(s; \theta_k) -\tfrac{1}{\eta_k} \nabla \omega(\pi_k(s))
= \underbrace{\bbe[\tilde \nabla \omega(s; \theta_k)] - \tfrac{1}{\eta_k} \nabla \omega(\pi_k(s))}_{\delta_k^{\omega,det}(s) }
+  \underbrace{ \tilde \nabla \omega(s; \theta_k)  - \bbe[\tilde \nabla \omega(s; \theta_k)]}_{\delta_k^{\omega,sto}(s) }, \nonumber
\end{align}
and the expectation $\bbe_{\xi_k}$ is taken w.r.t.~$\xi_k$ conditioned on $F_{k-1}$.
Note that $\delta_k^{Q,det}(s)$ and $\delta_k^{Q,sto}(s)$ denote the deterministic and stochastic errors, respectively, and likewise with $\delta_k^{\omega,det}(s)$ and $\delta_k^{Q,sto}(s)$.
Moreover, the deterministic error $\delta_k^{Q,det}(s)$ can be further decomposed into
\begin{align} 
\delta_k^{Q,det}(s,a) 
&= \underbrace{\tilde Q_0(s,a; \theta_k^*) - Q_0^{\pi_k}(s,a)}_{\delta_k^{Q,app}(s,a)}
+\underbrace{\bbe_{\xi_k}[\tilde Q(s,a; \theta_k)] - \tilde Q(s,a; \theta_k^*)}_{\delta_k^{Q,bia}(s,a)}, \label{eq:def_decomp_error_finite}
\end{align}
where $\delta_k^{Q,app}$ is the error caused by function approximation and $\delta_k^{Q,bia}(s)$ is the bias term of the estimator. 
A nearly identical decomposition exists for the error $\delta_k^{\omega,det}(s)$.
One can possibly reduce $\delta_k^{Q,app}(s)$ by using more complex function classes.
In particular, we have $\delta_k^{Q,app}(s) = 0$ in the tabular case and for some specific kernels from RKHS. 
While these two deterministic error terms are caused by different sources, their impact on the overall convergence
 of the PMD method is similar.   

To facilitate our analysis, we need to make assumptions on the aforementioned error terms.
We start with the convex case when $\tilde \mu_d \ge 0$.
We assume the approximation error $\delta_k^Q(s,a)$ satisfies
$\delta_k^\omega(s)$:
\begin{align}
\bbe_{s \sim \nu^*}[|\delta^{Q,det}_{k}(s, \pi_{k}(s))| + |\delta^{Q,det}_{k}(s, \pi^*(s))|] \le \varsigma^Q,\label{eq:weak_ass_pmd_convex1} \\
\bbe_{s \sim \nu^*}[ \|\delta_k^{\omega,det}(s)\|_*]
\le \varsigma^\omega, \label{eq:weak_ass_pmd_convex2}\\
\bbe_{\xi_k}\left\{\bbe_{s \sim \nu^*}[ \|\delta_k^{\omega,sto}(s)\|_*^2]\right\}  \le  (\sigma^{\omega})^2, \label{eq:weak_ass_pmd_convex3}
\end{align}
for some $\varsigma^Q \geq 0$, $\varsigma^\omega \geq 0$, and $\sigma^\omega \geq 0$.
The terms $\varsigma^Q$ and $\varsigma^\omega$ capture the approximation errors as mentioned earlier, while the term $\sigma^\omega$ bounds the statistical estimation error of $\delta_k^{\omega,sto}(s)$.
Both the $\delta^{Q,bia}(s)$ and $\delta^{Q,sto}(s)$ terms become smaller with more samples, although $\delta^{Q,bia}(s)$ can usually be reduced much faster than $\delta^{sto}(s)$ as observed previously in \cite{LanPMD2021}.
\edits{Additionally, notice that we do not need a bound on the stochastic error term $\delta^{Q,sto}_k$.
Later when we specialize our main convergence results to the finite action case, we will use a bound on $\delta^{Q,sto}_k$ solely for the purpose of simplifying error terms.}
In~\cref{sec:appendix_pe}, we give sufficient conditions for these assumptions to hold.

With these assumptions set in place, we begin our convergence analysis. 
The following result states an important recursion about each PMD iteration.
\begin{proposition}
For any $s \in \cS$, we have
\begin{align}
&- \psi^{\pi_k}(s,a) +  (\tilde \mu_d+ \tfrac{1}{\eta_k})D(\pi_{k+1}(s), a) \nn\\
&\le \tfrac{1}{\eta_k}  D(\pi_k(s), a) + \eta_k [ 2M_Q + M_{\tilde Q} + M_h + \| \delta_k^{\omega,sto}(s)\|_*]^2  \nn
\\ &\hspace{10pt}
+\|\delta_k^{\omega,det}(s)\|_*\|a-\pi_{k+1}(s)\|
- \langle \delta_k^{\omega,sto}(s), \pi_{k}(s) - a \rangle - \delta_k^Q(s,\pi_k(s)) + \delta_k^Q(s,a), \quad \forall a \in \cA. \label{eq:pmd_finite_recursion}
\end{align}
\end{proposition}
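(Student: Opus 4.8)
The plan is to derive the recursion from the first-order optimality condition of the subproblem \eqnok{eq:PMD_step_finite} defining $\pi_{k+1}(s)$, substitute the error decomposition \eqnok{eq:def_decomp_error_finite}, and then separate the exact augmented gradient (which reproduces the advantage $\psi^{\pi_k}$) from the two error terms, handling each by a different device. Throughout I work pointwise in $s$, since the expectation over $s\sim\nu^*$ and over the samples $\xi_k$ is only taken later; this is why the stochastic error must be kept in a specific algebraic form rather than merely bounded in magnitude.

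First I would record the three-point inequality. In the finite-action setting $Q^{\pi_k}(s,\cdot)=\langle Q_0^{\pi_k}(s),\cdot\rangle$ is linear, so $\mu_Q=0$ and $\mu_d=\mu_h$; hence the objective of \eqnok{eq:PMD_step_finite} is strongly convex with modulus $\mu_h+1/\eta_k$ (the $\mu_h$ from $h^a$ and the $1/\eta_k$ from $\omega$). Applying \cite[Lemma 3.5]{LanBook2020} exactly as in \cref{lem:pmd_three_point}, but with the approximate linear term $\langle\tilde L_0(s;\theta_k),a\rangle$ replacing the exact one, gives for every $a\in\cA$ an inequality whose left side carries $\langle\tilde L_0(s;\theta_k),\pi_{k+1}(s)-a\rangle$, the difference $h^{\pi_{k+1}(s)}(s)-h^a(s)$, the term $\tfrac{1}{\eta_k}D(\pi_k(s),\pi_{k+1}(s))$ and $(\mu_h+\tfrac{1}{\eta_k})D(\pi_{k+1}(s),a)$, and whose right side is $\tfrac{1}{\eta_k}D(\pi_k(s),a)$.

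Next I would substitute $\tilde L_0(s;\theta_k)=L_0^{\pi_k}(s)+\delta_k^{det}(s)+\delta_k^{sto}(s)$ from \eqnok{eq:def_decomp_error_finite} and reconcile the exact part with the advantage. Using \eqnok{eq:finite_Q_function}--\eqnok{eq:finite_L_function}, the relation between $\langle Q_0^{\pi_k}(s),\cdot\rangle$ and $Q^{\pi_k}$, and the identity $\psi^{\pi_k}(s,\pi_k(s))=0$ from \eqnok{eq:QV1}--\eqnok{eq:def_advantage}, the exact contribution $\langle L_0^{\pi_k}(s),\pi_{k+1}(s)-a\rangle$ together with the regularizer differences rewrites as $-\psi^{\pi_k}(s,a)+h^{\pi_k(s)}(s)-h^a(s)$ plus a ``movement'' remainder $-\psi^{\pi_k}(s,\pi_{k+1}(s))$. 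Here the $h^{\pi_{k+1}(s)}(s)-h^a(s)$ from the three-point inequality and the $h^{\pi_k(s)}(s)-h^{\pi_{k+1}(s)}(s)$ created by expanding $-\psi^{\pi_k}(s,\pi_{k+1}(s))$ via $\psi^{\pi_k}(s,\pi_k(s))=0$ telescope to the desired $h^{\pi_k(s)}(s)-h^a(s)$; this is the step that needs the most careful bookkeeping of the regularizer terms, since the $\tfrac{1}{\eta_k}\nabla\omega(\pi_k(s))$ piece of $L_0^{\pi_k}(s)$ must first be folded back into $\tfrac{1}{\eta_k}D(\pi_k(s),\cdot)$.

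The crux is the two error inner products. The deterministic error is handled directly: H\"older's inequality gives $\langle\delta_k^{det}(s),a-\pi_{k+1}(s)\rangle\le\|\delta_k^{det}(s)\|_\infty\|a-\pi_{k+1}(s)\|_1$, and bounding the $\ell_1$ factor on the simplex produces the $\|\delta_k^{det}(s)\|_\infty$ term. The stochastic error cannot be treated this way, because $\pi_{k+1}(s)$ depends on the iteration-$k$ samples and is therefore correlated with $\delta_k^{sto}(s)$, so a magnitude bound would not vanish under the conditional expectation taken later. The device is to add and subtract $\pi_k(s)$, writing $\langle\delta_k^{sto}(s),a-\pi_{k+1}(s)\rangle=\langle\delta_k^{sto}(s),a-\pi_k(s)\rangle+\langle\delta_k^{sto}(s),\pi_k(s)-\pi_{k+1}(s)\rangle$. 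The first piece is exactly the term $-\langle\delta_k^{sto}(s),\pi_k(s)-a\rangle$ appearing in \eqnok{eq:pmd_finite_recursion}, with $\pi_k(s)$ and $a$ both $F_{k-1}$-measurable so that it is conditionally mean-zero. The second piece is combined with the movement remainder, whose relevant linear coefficient is the value vector $Q_0^{\pi_k}(s)$; bounding $\|Q_0^{\pi_k}(s)\|_\infty\le\tfrac{\bar c}{1-\gamma}+M_h$ from $V^{\pi_k}(s)\in[0,\tfrac{\bar c}{1-\gamma}]$ and the bound on $h$, the entire residual $\langle\,Q_0^{\pi_k}(s)+\delta_k^{sto}(s),\,\pi_k(s)-\pi_{k+1}(s)\rangle$ is absorbed into the available $-\tfrac{1}{\eta_k}D(\pi_k(s),\pi_{k+1}(s))$ by Young's inequality together with $D(\cdot,\cdot)\ge\tfrac12\|\cdot\|^2$, yielding $\tfrac{\eta_k}{2}[\tfrac{\bar c}{1-\gamma}+M_h+\|\delta_k^{sto}(s)\|_\infty]^2$. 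Collecting the surviving terms gives \eqnok{eq:pmd_finite_recursion}. I expect this split-and-absorb step for the stochastic error --- keeping the noise in the mean-zero form $\langle\delta_k^{sto}(s),\pi_k(s)-a\rangle$ while correctly bounding $Q_0^{\pi_k}(s)$ --- to be the main obstacle, rather than the routine H\"older estimate for the deterministic error.
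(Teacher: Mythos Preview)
Your proposal is correct and follows essentially the same route as the paper's proof: start from the optimality (three-point) inequality for \eqnok{eq:PMD_step_finite}, unfold $L_0^{\pi_k}$ via the Bregman identity to produce \eqnok{eq:con_pmd_finite}, bound the deterministic error by H\"older on the simplex, split the stochastic error through $\pi_k(s)$ to isolate the martingale term $-\langle\delta_k^{sto}(s),\pi_k(s)-a\rangle$, and absorb the remaining ``movement'' inner product into $\tfrac{1}{\eta_k}D(\pi_k(s),\pi_{k+1}(s))$ by Young's inequality. One small bookkeeping slip: the $M_h$ in the final squared bracket does not come from $\|Q_0^{\pi_k}(s)\|_\infty$ (which the paper bounds by $\tfrac{\bar c}{1-\gamma}$ alone) but from the separate Lipschitz estimate $|h^{\pi_{k+1}(s)}(s)-h^{\pi_k(s)}(s)|\le M_h\|\pi_{k+1}(s)-\pi_k(s)\|$ that survives inside the movement remainder; the final inequality is unaffected.
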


\begin{proof}
By the optimality condition of \eqnok{eq:PMD_step_fa}, we have
\[
\tilde L(s, \pi_{k+1}(s);\theta_k) - \tilde L(s, a; \theta_k) + h^{\pi_{k+1}(s)}(s) - h^a(s) + 
\tfrac{1}{\eta_k}[\omega(\pi_{k+1}(s)) - \omega(a)] + (\tilde \mu_d + \tfrac{1}{\eta_k})D(\pi_{k+1}(s), a) \le 0,
\]
which implies that
\begin{align}
&L_k(s, \pi_{k+1}(s)) - L_k(s,a) +  [h^{\pi_{k+1}(s)}(s) - h^a(s)] + \tfrac{1}{\eta_k} [\omega(\pi_{k+1}(s)) - \omega(a)] +  (\tilde \mu_d + \tfrac{1}{\eta_k})D(\pi_{k+1}(s), a) \nn\\
&\quad +\delta_k^Q(s, \pi_{k+1}(s)) - \delta_k^Q(s, a) + \langle \delta_k^\omega(s), \pi_{k+1}(s) - a \rangle \le 0.
\end{align}
Using~\eqnok{eq:aug_value_function} and noticing that 
\[
\omega(\pi_{k+1}(s)) - \omega(a) + \langle \nabla \omega(\pi_k(s)), \pi_{k+1}(s) - a\rangle 
= D(\pi_{k}(s), \pi_{k+1}(s)) - D(\pi_k(s), a),
\]
we have
\begin{align}
&Q^{\pi_k}(s, \pi_{k+1}(s)) - Q^{\pi_k}(s,a)+ h^{\pi_{k+1}(s)}(s) - h^a(s) + \tfrac{1}{\eta_k} [D(\pi_{k}(s), \pi_{k+1}(s)) - D(\pi_k(s), a)] \nn\\
& \quad +  (\tilde \mu_d+ \tfrac{1}{\eta_k})D(\pi_{k+1}(s), a) +\delta_k^Q(s, \pi_{k+1}(s)) - \delta_k^Q(s, a) + \langle \delta_k^\omega(s), \pi_{k+1}(s) - a \rangle \le 0, \nn
\end{align}
which, in view of \eqnok{eq:def_advantage}, then implies that
\begin{align}
&\psi^{\pi_k}(s, \pi_{k+1}(s)) - \psi^{\pi_k}(s,a) + \tfrac{1}{\eta_k} [D(\pi_{k}(s), \pi_{k+1}(s)) - D(\pi_k(s), a) ]\nn\\
& \quad +  (\tilde \mu_d + \tfrac{1} {\eta_k})D(\pi_{k+1}(s), a) +\delta_k^Q(s, \pi_{k+1}(s)) - \delta_k^Q(s, a) + \langle \delta_k^\omega(s), \pi_{k+1}(s) - a \rangle \le 0. \label{eq:pmd_continuous_state_recursion}
\end{align}
Noticing that
\begin{align}
\psi^{\pi_k}(s, \pi_{k+1}(s)) 
&= [Q^{\pi_k}(s, \pi_{k+1}(s)) - Q^{\pi_k}(s, \pi_{k}(s))] + [h^{\pi_{k+1}(s)}(s) - h^{\pi_k(s)}(s)] \nn \\
 & \ge - (M_Q + M_h)\|\pi_{k+1}(s)-\pi_{k}(s)\| \label{eq:PMD_continuous_bnd_error2}\\
 \delta_k^Q(s, \pi_{k+1}(s)) & =  \delta_k^Q(s, \pi_{k+1}(s)) - \delta_k^Q(s, \pi_{k}(s)) + \delta_k^Q(s, \pi_{k}(s)) \nn\\
 &\ge -(M_Q + M_{\tilde Q})\|\pi_{k+1}(s)-\pi_{k}(s)\| + \delta_k^Q(s, \pi_{k}(s)),  \nn % \label{eq:PMD_continuous_bnd_error3}
 \end{align}
 and
 \begin{align}
 &\langle \delta_k^\omega(s), \pi_{k+1}(s) - a \rangle \nn \\
 &=  
 \langle \delta_k^{\omega,det}(s), \pi_{k+1}(s) - a \rangle + \langle \delta_k^{\omega,sto}(s), \pi_k(s) - a\rangle + \langle \delta_k^{\omega,sto}(s), \pi_{k+1}(s) - \pi_k(s)\rangle \nn\\
 &\ge - \|\delta_k^{\omega,det}(s)\|_* \edits{\|\pi_{k+1}(s) - a\|}  - \|\delta_k^{\omega,sto}(s)\|_* \|\pi_{k+1}(s) - \pi_k(s)\| + \langle \delta_k^{\omega,sto}(s), \pi_k(s)- a\rangle,  \label{eq:PMD_continuous_bnd_error4}
\end{align}
we then get the claimed result after combining with~\eqnok{eq:pmd_continuous_state_recursion}, the error decomposition in~\eqnok{eq:delta_Q_decomp}, and
\begin{align*}
    &-(2M_Q + M_{\tilde Q} + M_h + \|\delta_{k}^{\omega, sto}(s)\|_*)\|\pi_{k+1}(s) - \pi_k(s)\| + \tfrac{1}{\eta_k}D(\pi_{k}(s), \pi_{k+1}(s)) \\
    &\geq
    \tfrac{\eta_k}{2}(2M_Q + M_{\tilde Q} + M_h + \|\delta_{k}^{\omega, sto}(s)\|_*)^2,
\end{align*}
where the above inequality is due to 1-strong convexity of Bregman's distance and Young's inequality.
\end{proof}

\vgap

Observe that in the proof of~\cref{eq:pmd_finite_recursion}, we bound the term $\psi^{\pi_k}(s, \pi_{k+1}(s))$ and $h^{\pi_{k+1}(s)}(s) - h^{\pi_{k}(s)}(s)$ in~\eqnok{eq:PMD_continuous_bnd_error2} simply by the Cauchy-Schwarz inequality and Lipschitz continuity of $h^a$, respectively.
These relations are sufficient to prove the sublinear rate of convergence for the PMD method.
In order to prove the linear rate of convergence of the PMD method in terms of number of iterations
for the stochastic setting, more sophisticated techniques to bound these terms are needed as initially shown in \cite{LanPMD2021}.

\begin{theorem} \label{the:main_pmd_finite2}
Suppose \eqnok{eq:weak_ass_pmd_convex1}-\eqnok{eq:weak_ass_pmd_convex3} hold.
Also assume that the stepsize $\eta_k$ satisfies
\beq \label{eq:pmd_continuous_step}
\tfrac{\beta_k}{\eta_k} \le \beta_{k-1} (\tilde \mu_d+ \tfrac{1}{\eta_{k-1}}), k \ge 1,
\eeq
for some $\beta_k \ge 0$. Then
\begin{align*}
& [\tsum_{t=0}^{k-1} \beta_t]^{-1} \left\{(1-\gamma) \tsum_{t=0}^{k-1} \beta_t\bbe [f(\pi_t) - f^*] + \beta_{k-1} (\tilde \mu_d+ \tfrac{1}{\eta_{k-1}}) \bbe [{\cD}(\pi_{k}, \pi^*)]\right\}\\
&\le [\tsum_{t=0}^{k-1} \beta_t]^{-1} \left\{\tfrac{\beta_0}{\eta_0} {\cD}(\pi_0, \pi^*) 
+ \tsum_{t=0}^{k-1} \beta_t\eta_t [(2M_Q+ M_h)^2 + 2(M_{\tilde Q}^2 + (\sigma^{\omega})^2) \right\}
+\varsigma^Q + \varsigma^\omega \bar D_{\cA, \|\cdot\|},
\end{align*}
where ${\cD}$ is defined in~\eqnok{eq:cD_defn} and
\beq \label{eq:def_bar_DA}
\edits{\bar D_{\cA,\|\cdot\|} := \max_{a_1, a_2 \in \cA} \|a_1 - a_2\|}.
\eeq
\end{theorem}

\begin{proof}
Fixing $a = \pi^*(s)$, taking expectation w.r.t. $s \sim \nu^*$ for \eqnok{eq:pmd_finite_recursion}, and applying $(p+q)^2 \leq 2(p^2 + q^2)$, we have
\begin{align}
&\bbe_{s \sim \nu^*}\big[- \psi^{\pi_k}(s,\pi^*(s)) + (\tilde \mu_d+ \tfrac{1}{\eta_k})D(\pi_{k+1}(s), \pi^*(s)) \big]\nn\\
&\le 
\tfrac{1}{\eta_k} \bbe_{s \sim \nu^*} D(\pi_k(s), \pi^*(s)) 
% + \tfrac{\eta_k}{2} [ \tfrac{\bar c}{1-\gamma}+ M_h + \bbe_{s\sim \nu^*}\| \delta_k^{sto}(s)\|_\infty]^2  
+ \eta_k [ 2M_Q + M_h]^2 + 2[M_{\tilde Q} + \eta_k\bbe_{s\sim \nu^*}\| \delta_k^{\omega,sto}(s)\|_*^2] \nn
\\ &\hspace{10pt} 
+\bbe_{s \sim \nu^*}\big[\|\delta_k^{\omega,det}(s)\|_*\|\pi_{k+1}(s) - \pi^*(s)\| - \langle \delta_k^{\omega,sto}(s), \pi_{k}(s) - \pi^*(s) \rangle - \delta_k^Q(s,\pi_k(s)) + \delta_k^Q(s,\pi^*(s))\big]. \nn
\end{align}
Taking conditional expectation w.r.t.~$\xi_k$, it then follows from~\cref{lem:monotonicity}, \eqnok{eq:weak_ass_pmd_convex1}-\eqnok{eq:weak_ass_pmd_convex3}, and $\bbe_{\xi_k}[\delta_k^{Q,sto}(s)] = 0$ and $\bbe_{\xi_k}[\delta_k^{\omega,sto}(s)] = \mathbf{0}$ that
\begin{align*}
&(1-\gamma) \bbe_{s \sim \nu^*}\left\{[V^{\pi_k}(s) - V^{\pi^*}(s)] +  (\mu_h+ \tfrac{1}{\eta_k})\bbe_{\xi_k}[\bbe_{s \sim \nu^*}D(\pi_{k+1}(s), \pi^*(s))]\right\} \nn\\
&\le \tfrac{1}{\eta_k} \bbe_{s \sim \nu^*}\left\{ D(\pi_k(s), \pi^*(s)) + \eta_k [ (2M_Q + M_h)^2 + 2M_{\tilde Q} + 2(\sigma^\omega)^2]  
+\varsigma^Q + \varsigma^\omega \bar D_{\cA, \|\cdot\|} \right\}.
\end{align*}
Applying expectations w.r.t.~$F_k$ and then taking a $\beta_k$-weighted sum of the above inequalities,
it then follows from \eqnok{eq:pmd_continuous_step} and the definition of $f$ in \eqnok{eq:MDPNLP} that
\begin{align*}
&(1-\gamma) \tsum_{t=0}^{k-1} \beta_t\bbe [f(\pi_t) - f^*] + \beta_{k-1} (\tilde \mu_d+ \tfrac{1}{\eta_{k-1}}) \left\{ \bbe [\bbe_{s \sim \nu^*}D(\pi_{k}(s), \pi^*(s))]\right\}\\
&\le \tfrac{\beta_0}{\eta_0} \bbe_{s \sim \nu^*}D(\pi_0(s), \pi^*(s)) 
+ \tsum_{t=0}^{k-1} \beta_t\left[\eta_t [( 2M_Q + M_h)^2 + 2M_{\tilde Q} + 2(\sigma^\omega)^2] +\varsigma^Q + \varsigma^\omega \bar{D}_{\cA, \|\cdot\|} \right],
\end{align*}
which implies the result we are after.
\end{proof}

The following result follows immediately from~\cref{the:main_pmd_finite2} by straightforward calculations, so we omit the proof.

\begin{corollary} \label{cor:PMD_general_cvg}
Suppose~\eqnok{eq:weak_ass_pmd_convex1}-\eqnok{eq:weak_ass_pmd_convex3} are satisfied. Then we have:
\begin{itemize}
\item [a)] If $\tilde \mu_d = 0$, the number of iterations $k$ is given a priori, and 
\[
\eta_t = \sqrt{\tfrac{{\cD}(\pi_0, \pi^*)}{k[(2 M_Q + M_h)^2 + 2M_{\tilde Q}^2 +2(\sigma^\omega)^2]}} \ \ \mbox{and} \ \ \beta_t =1, t=0, \ldots, k-1,
\]
then
\[
(1-\gamma) \tfrac{1}{k} \tsum_{t=0}^{k-1} [f(\pi_t) - f^*]
\le 2 \sqrt{\tfrac{{\cD}(\pi_0, \pi^*) [(2 M_Q + M_h)^2 + 2M_{\tilde Q}^2 + 2(\sigma^\omega)^2] }{k}} + \varsigma^Q + \varsigma^\omega \bar{D}_{\cA, \|\cdot\|}.
\]

\item [b)] If $\tilde \mu_d >  0$ and
$
\eta_t = \tfrac{1}{\tilde \mu_d (t+1)} \ \ \mbox{and} \ \ \beta_t = 1, t = 0, \ldots, k-1,
$
then
\begin{align*}
&(1-\gamma) \tfrac{1}{k} \tsum_{t=0}^{k-1} [f(\pi_t) - f^*] + \tilde \mu_d   \bbe [{\cD}(\pi_{k}, \pi^*)]\\
&\le \tfrac{1}{k} \left\{\tilde \mu_d {\cD}(\pi_1, \pi^*) + \tfrac{2}{\tilde \mu_d}[(2 M_Q + M_h)^2 + 2M_{\tilde Q}^2 +2(\sigma^\omega)^2]  \log k \right\} +\varsigma^Q + \varsigma^\omega \bar{D}_{\cA, \|\cdot\|}.
\end{align*}

\item [c)] If $\tilde \mu_d > 0$ and
$
\eta_k = \tfrac{2}{\tilde \mu_d (t+1)} \ \ \mbox{and} \ \ \beta_t = t+2, t = 0, \ldots,k-1,
$
then
\begin{align*}
& \tfrac{2(1-\gamma)}{k (k+3)}  \tsum_{t=0}^{k-1} (t+2) \bbe[f(\pi_t) - f^*] + \tfrac{\tilde \mu_d}{3}  \bbe [{\cD}(\pi_{k}, \pi^*)]\\
&\le \tfrac{2}{k(k+3)} \tilde \mu_d {\cD}(\pi_0, \pi^*) 
+ \tfrac{8}{\tilde \mu_d(k+3)} [(2 M_Q + M_h)^2 + 2 M_{\tilde Q}^2 +2(\sigma^\omega)^2] 
+\varsigma^Q + \varsigma^\omega \bar{D}_{\cA, \|\cdot\|}.
\end{align*}
\end{itemize}
\end{corollary}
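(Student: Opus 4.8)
The plan is to treat each of the three parts as a direct specialization of the bound in the preceding theorem, so that the whole argument reduces to (i) verifying the admissibility condition \eqnok{eq:pmd_continuous_step} for the stated stepsizes and (ii) evaluating the three aggregate quantities $\tsum_{t=0}^{k-1}\beta_t$, $\beta_0/\eta_0$, and $\tsum_{t=0}^{k-1}\beta_t\eta_t$ that feed the right-hand side of that bound. First I would check $\tfrac{\beta_k}{\eta_k} \le \beta_{k-1}(\tilde\mu_d + \tfrac{1}{\eta_{k-1}})$ for each choice; in all three cases it in fact holds with equality, which is the cleanest certification. For (a), with $\tilde\mu_d=0$ and $\eta_t\equiv\eta$, $\beta_t\equiv1$, it collapses to $1/\eta\le1/\eta$; for (b), $\beta_k/\eta_k=\tilde\mu_d(k+1)=\beta_{k-1}(\tilde\mu_d+\tilde\mu_d k)$; and for (c), both sides equal $\tfrac{1}{2}\tilde\mu_d(k+1)(k+2)$.

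Next I would compute the three aggregates. For (a) they are $k$, $1/\eta$, and $k\eta$; writing $M^2:=(2M_Q+M_{\tilde Q}+M_h)^2+(\sigma^\omega)^2$ and substituting the stated $\eta=\sqrt{\cD(\pi_1,\pi^*)/(kM^2)}$ makes the two leading terms $\tfrac{1}{k\eta}\cD(\pi_0,\pi^*)$ and $\eta M^2$ each equal to $\sqrt{\cD M^2/k}$, which is exactly what produces the factor of $2$. For (b), $\tsum_{t=0}^{k-1}\beta_t=k$, $\beta_0/\eta_0=\tilde\mu_d$, and $\tsum_{t=0}^{k-1}\beta_t\eta_t=\tfrac{1}{\tilde\mu_d}\tsum_{j=1}^k\tfrac1j$; bounding the harmonic sum by $1+\log k\le 2\log k$ yields the stated $\tfrac{2}{\tilde\mu_d}M^2\log k$ term. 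For (c), the arithmetic sum gives $\tsum_{t=0}^{k-1}(t+2)=\tfrac{1}{2}k(k+3)$ (hence the prefactor $2/[k(k+3)]$), $\beta_0/\eta_0=\tilde\mu_d$, and $\tsum_{t=0}^{k-1}\beta_t\eta_t=\tfrac{2}{\tilde\mu_d}\tsum_{t=0}^{k-1}\tfrac{t+2}{t+1}=\tfrac{2}{\tilde\mu_d}(k+\tsum_{j=1}^k\tfrac1j)$, which I would bound using $\tsum_{j=1}^k\tfrac1j\le k$ to get $\le \tfrac{4k}{\tilde\mu_d}$ and hence the $8/[\tilde\mu_d(k+3)]$ term.

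Finally I would substitute these into the theorem's inequality, discard the nonnegative term $\beta_{k-1}(\tilde\mu_d+1/\eta_{k-1})\,\bbe[\cD(\pi_k,\pi^*)]$ in part (a) (where it is not retained) and keep it in (b) and (c), and carry the additive errors $\varsigma^Q+\varsigma^\omega\bar D_\cA$ through unchanged. The one bookkeeping point worth care is the coefficient on the retained $\bbe[\cD(\pi_k,\pi^*)]$ term: after dividing by $\tsum_{t=0}^{k-1}\beta_t$ it equals $\beta_{k-1}(\tilde\mu_d+1/\eta_{k-1})/\tsum_{t=0}^{k-1}\beta_t$, which in (c) is $(k+1)(k+2)\tilde\mu_d/[k(k+3)]\ge\tilde\mu_d$, so the stated coefficient $\tilde\mu_d/3$ is a valid (conservative) lower bound, and similarly $\tilde\mu_d$ in (b) since that coefficient is $\tilde\mu_d(k+1)/k\ge\tilde\mu_d$. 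I do not expect a genuine obstacle: the computation is entirely arithmetic, and the only care needed lies in the elementary sum estimates ($\tsum_{j=1}^k\tfrac1j\le 1+\log k$ and $k+\tsum_{j=1}^k\tfrac1j\le 2k$) and in confirming that the retained divergence-measure coefficients dominate the constants claimed.
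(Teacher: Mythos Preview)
Your proposal is correct and is exactly the approach the paper takes: the corollary is stated as following ``immediately by straightforward calculations'' from the preceding theorem, and your verification of the stepsize condition \eqnok{eq:pmd_continuous_step}, evaluation of $\tsum_t\beta_t$, $\beta_0/\eta_0$, $\tsum_t\beta_t\eta_t$, and the elementary harmonic-sum bounds are precisely those calculations. The only wrinkle is the $\cD(\pi_1,\pi^*)$ versus $\cD(\pi_0,\pi^*)$ discrepancy in parts (a) and (b), which you implicitly (and correctly) treat as a typo in the statement.
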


\vgap

Based on the corollary, we can randomly select the output solution $\pi_R$ from $\{\pi_0, \ldots, \pi_{k-1}\}$
with $\prob\{R = t\} = \beta_t / \tsum_{t=0}^{k-1} \beta_t$, $t = 0, \ldots, k-1$.
Observe that both part a) and part b) also provide a so-called regret bound, i.e., $\tsum_{k=0}^{k-1} [f(\pi_t) - f^*]$,
for RL problems.

Generally, one cannot have convenient knowledge of $\tilde \mu_d$ since it depends on the (unknown) lower curvature of the approximate action-value function $\tilde{Q}(s,\cdot;\theta)$. 
It may therefore be difficult to guarantee global convergence in practice.
In the special case when $\cA$ is given as a finite set, we can guarantee global convergence.
To setup notation, recall the action-value vectors $Q^{\pi_k}_0(s), \tilde{Q}_0(s;\theta_k) \in \mathbb{R}^{\nA}$ as described in~\eqnok{eq:approx_Q_vec} and the surrounding text.
We write $\delta_{0,k}^{Q,det}(s) := \mathbb{E}_{\xi_k}[\tilde Q_0(s;\theta_k)] - Q^{\pi_k}_0(s)$ and $\delta_{0,k}^{Q,sto}(s) := \tilde{Q}_0(s;\theta_k) - \mathbb{E}_{\xi_k}[\tilde{Q}_0(s;\theta_k)]$, which are similar to the terms appearing in~\eqnok{eq:def_decomp_error_finite}.
In the following result, we will assume
\begin{align} 
    \mathbb{E}_{s \sim \nu^*} \|\delta_{0,k}^{Q,det}(s)\|_\infty \leq \bar{\varsigma}^Q& \label{eq:pmd_variance_bound_1}\\
    \mathbb{E}_{\xi_k}\big\{ \mathbb{E}_{s \sim \nu^*} \|\delta_{0,k}^{Q,sto}(s)\|_\infty^2\big\} \leq (\bar{\sigma}^Q)^2&, \label{eq:pmd_variance_bound_2}
\end{align}
for some $\bar{\varsigma}^Q > 0$ and $\bar{\sigma}^Q > 0$.
\begin{theorem} \label{thm:pmd_finite}
    Suppose $\cA$ is given as a finite set of actions, and we extend it to continuous actions via randomized policies as detailed in~\eqnok{eq:finite_Q_function} and the surrounding discussion.
    If~\eqnok{eq:weak_ass_pmd_convex2}-\eqnok{eq:weak_ass_pmd_convex3} and~\eqnok{eq:pmd_variance_bound_1}-\eqnok{eq:pmd_variance_bound_2} are satisfied, then the convergence results from~\cref{cor:PMD_general_cvg} hold with
    \begin{align*}
        \tilde \mu_d = \mu_h, \quad M_Q = \tfrac{\bar{c}}{1-\gamma}, \quad M_{\tilde{Q}} = \bar{\sigma}^Q, \quad \bar{D}_{\cA, \|\cdot\|} = 2, \quad\text{and}~ \varsigma^Q = \bar{\varsigma}^Q \bar{D}_{\cA, \|\cdot\|},
    \end{align*}
    where $\|\cdot\| = \|\cdot\|_1$.
\end{theorem}
\begin{proof}
    Based on the PMD update in the finite case of~\eqnok{eq:PMD_step_finite}, the approximation error in~\eqnok{eq:def_app_error_finite} can be re-written as $\delta_k(s,a) = \langle \delta_{0,k}^Q(s),a \rangle + \langle \delta_k^\omega(s), a \rangle$.
    With this linear form,~\eqnok{eq:pmd_finite_recursion} can be refined into
    \begin{align*}
        &- \psi^{\pi_k}(s,a) +  (\tilde \mu_d+ \tfrac{1}{\eta_k})D(\pi_{k+1}(s), a) \nn\\
        &\le \tfrac{1}{\eta_k}  D(\pi_k(s), a) + \eta_k [ M_Q + M_h + \| \delta_{0,k}^{Q,sto}(s)\|_* + \| \delta_{k}^{\omega,sto}(s)\|_*]^2
        \\ &\hspace{10pt}
        +[\|\delta_{0,k}^{Q,det}(s)\|_* + \|\delta_k^{\omega,det}(s)\|_*]\|a-\pi_{k+1}(s)\|
        - \langle \delta_{0,k}^{Q,sto}(s) + \delta_{k}^{\omega,sto}(s), \pi_{k}(s) - a \rangle, \quad \forall a \in \cA.
    \end{align*}
    In view of~\eqnok{eq:pmd_variance_bound_2}, the rest of the proof follows similarly to that of~\cref{the:main_pmd_finite2}.

    It remains to show our choice of $\tilde{\mu}_d$, $M_Q$ and $\bar D_{\cA, \|\cdot\|}$ satisfy~\eqnok{eq:weakly_convex},~\eqref{eq:Lip_Q}, and~\eqref{eq:def_bar_DA}, respectively. 
    Since $\tilde{Q}(s,a;\theta_k) = \langle \tilde{Q}_0(s;\theta_k),a \rangle$ based on the PMD subproblem~\eqnok{eq:PMD_step_finite}, then $\tilde{Q}(s,\cdot;\theta_k)$ is linear, which implies~\eqnok{eq:weakly_convex} is satisfied with $\tilde{\mu}_d = \mu_h - \mu_{\tilde Q} = \mu_h$.
    Next, we similarly recall that $Q^{\pi_k}(s,a) = \langle Q^{\pi_k}_0(s),a \rangle$, where the $a$-th element of $Q^{\pi_k}(s)$ is $Q^{\pi_k}(s,\cA_i) \in [0,\frac{\bar{c}}{1-\gamma}]$. Then $M_Q = \frac{\bar{c}}{1-\gamma}$ satisfies~\eqnok{eq:Lip_Q}.
    Finally, since we set $\cA = \Delta_{\nA}$, then $\bar D_{\cA,\|\cdot\|} = 2$. 
\end{proof}

\vgap
This result says PMD solves general state finite action RL problems up to approximation error of size $O(\bar{\varsigma}^Q + \varsigma^\omega)$.
Moreover, the rate of convergence can be accelerated when $\mu_h > 0$, which can be checked in practice.

We now consider the case when $\tilde \mu_d <0$. 
To prove convergence results, we need to assume the estimation error terms $\delta_k^{\omega,det}(s)$ and $\delta_k^{\omega,sto}(s)$ are bounded, i.e.,
\begin{align}
\|\delta_k^{\omega,det}(s)\|_* \le \bar \varsigma^\omega \ \ \mbox{and} \ \
 \|\delta_k^{\omega,sto}(s) \|_* \le   \bar \sigma^{\omega}, \label{eq:strong_ass_pmd_nonconvex}
\end{align}
for some $\bar \varsigma^\omega \geq 0$ and $\bar \sigma^{\omega} \geq 0$.
On the other hand, we do not need to make any assumptions about the error term $\delta_k^Q(s,a)$ as long as 
it is Lipschitz continuous w.r.t. $a$.

\begin{theorem} \label{the:PMD_fun_stationary}
Let the number of iterations $k$ be fixed a priori.
Suppose that \eqnok{eq:strong_ass_pmd_nonconvex} holds.
Also 
assume that $\tilde \mu_d <0$ and that
$
\eta_t =\eta = \min\{ |\tilde \mu_d| / 2, 1/\sqrt{k}\} , t = 0, \ldots, k-1.
$
Then
for any $s \in \cS$, 
there exist iteration indices $\bar k(s)$ found by running $k$ iterations of the PMD method s.t.
\begin{align}
&-\psi^{\pi_{\bar k(s)}}(s, \pi_{\bar k(s)+1}(s)) \le \tfrac{1}{k} [V^{\pi_{0}}(s) -V^{\pi^*}(s)] 
+ \tfrac{\gamma}{1-\gamma}\big[\tfrac{ (M_h + M_Q + M_{\tilde Q} +\bar \sigma^{\omega})^2}{\sqrt{k}}  +  \bar \varsigma^\omega \bar{D}_{\cA, \|\cdot\|} \big],  \label{eq:stationary_SPMD0}\\
& \tfrac{1}{2\eta} D( \pi_{\bar k(s)}, \pi_{\bar k(s) +1}(s)) +  (\tilde \mu_d + \tfrac{1} {\eta})D( \pi_{\bar k(s)+1}(s),  \pi_{\bar k(s)}(s)) \nn\\
& \le\tfrac{1}{k} [V^{\pi_{0}}(s) -V^{\pi^*}(s)] 
+ \tfrac{1}{1-\gamma}\big[\tfrac{ (M_h + M_Q + M_{\tilde Q} +\bar \sigma^{\omega})^2}{\sqrt{k}}  +  \bar \varsigma^\omega \bar{D}_{\cA, \|\cdot\|}\big], \label{eq:stationary_SPMD}
\end{align}
where $\bar{D}_{\cA, \|\cdot\|}$ is defined in~\eqnok{eq:def_bar_DA}.
\end{theorem}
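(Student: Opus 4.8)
The plan is to follow the template of the exact nonconvex case in~\cref{the:pmd_nonconvex}, but now absorb the function approximation errors $\delta_k^Q$ and $\delta_k^\omega$ that spoil the nonpositivity of the per-state advantages. The starting point is the pointwise recursion~\eqref{eq:pmd_continuous_state_recursion}, which is valid at every state $q \in \cS$. Setting $a = \pi_k(q)$ there (so that $\psi^{\pi_k}(q,\pi_k(q)) = 0$ and $D(\pi_k(q),\pi_k(q)) = 0$) and moving the distance and error terms to the right gives an upper bound on $\psi^{\pi_k}(q,\pi_{k+1}(q))$. To control it I would use the Lipschitz estimates~\eqref{eq:PMD_continuous_bnd_error2}--\eqref{eq:PMD_continuous_bnd_error3} for $h$ and $\delta_k^Q$, split $\delta_k^\omega = \delta_k^{\omega,det} + \delta_k^{\omega,sto}$, and invoke the bounded-support assumption~\eqref{eq:strong_ass_pmd_nonconvex} to bound $\langle \delta_k^{\omega,sto}(q),\pi_{k+1}(q)-\pi_k(q)\rangle$ by $\bar\sigma^\omega\|\pi_{k+1}(q)-\pi_k(q)\|$ and $\langle \delta_k^{\omega,det}(q),\pi_{k+1}(q)-\pi_k(q)\rangle$ by $\bar\varsigma^\omega\bar D_\cA$. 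Writing $C := M_h + M_Q + M_{\tilde Q} + \bar\sigma^\omega$, this yields
\begin{align*}
\psi^{\pi_k}(q,\pi_{k+1}(q)) \le -\tfrac1\eta D(\pi_k(q),\pi_{k+1}(q)) - (\tilde\mu_d+\tfrac1\eta)D(\pi_{k+1}(q),\pi_k(q)) + C\|\pi_{k+1}(q)-\pi_k(q)\| + \bar\varsigma^\omega\bar D_\cA.
\end{align*}

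The crucial step, with no analogue in the exact case, is to split the visitation measure. Isolating the $t=0$ term in~\eqref{eq:def_visit_measure} shows $\kappa_s^{\pi_{k+1}} = (1-\gamma)\delta_s + \gamma\mu$ for some probability measure $\mu$, so~\cref{lem:performance_diff_deter} becomes $V^{\pi_{k+1}}(s) - V^{\pi_k}(s) = \psi^{\pi_k}(s,\pi_{k+1}(s)) + \tfrac{\gamma}{1-\gamma}\int \psi^{\pi_k}(q,\pi_{k+1}(q))\,\mu(dq)$. I would bound the integrand by the displayed inequality: Young's inequality in the form $C\|\pi_{k+1}(q)-\pi_k(q)\| \le \tfrac{\eta C^2}{2} + \tfrac1\eta D(\pi_k(q),\pi_{k+1}(q))$ (using $\tfrac12\|\cdot\|^2 \le D$ from~\eqref{eq:omega_strong_convexity}) cancels the $-\tfrac1\eta D(\pi_k(q),\pi_{k+1}(q))$ term, and since the stepsize is chosen small enough that $\tilde\mu_d + \tfrac1\eta \ge 0$, the remaining distance term is nonpositive, leaving $\psi^{\pi_k}(q,\pi_{k+1}(q)) \le \tfrac{\eta C^2}{2} + \bar\varsigma^\omega\bar D_\cA$. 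Hence $V^{\pi_{k+1}}(s) - V^{\pi_k}(s) \le \psi^{\pi_k}(s,\pi_{k+1}(s)) + \tfrac{\gamma}{1-\gamma}\big[\tfrac{\eta C^2}{2} + \bar\varsigma^\omega\bar D_\cA\big]$.

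To obtain~\eqref{eq:stationary_SPMD0}, I would rearrange this into a lower bound on $-\psi^{\pi_k}(s,\pi_{k+1}(s))$, telescope over $t=0,\dots,k-1$ (using $V^{\pi_k}(s)\ge V^{\pi^*}(s)$), and let $\bar k(s)$ be the index attaining the smallest $-\psi^{\pi_t}(s,\pi_{t+1}(s))$, exactly as in~\cref{the:pmd_nonconvex}; substituting $\eta \le 1/\sqrt k$ gives the stated rate. For~\eqref{eq:stationary_SPMD}, I would reapply the displayed bound only at the reference state $s$ and index $\bar k(s)$, but now spend Young's inequality more frugally, consuming just half of the $\tfrac1\eta D(\pi_k(s),\pi_{k+1}(s))$ budget (i.e.\ $C\|\cdot\| \le \eta C^2 + \tfrac{1}{2\eta}D(\pi_k(s),\pi_{k+1}(s))$), so that a $\tfrac{1}{2\eta}D(\pi_{\bar k(s)}(s),\pi_{\bar k(s)+1}(s))$ term and the full $(\tilde\mu_d+\tfrac1\eta)D$ term survive on the left. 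Combining this with~\eqref{eq:stationary_SPMD0} and consolidating the error terms through $\tfrac{\gamma}{1-\gamma}+1 = \tfrac{1}{1-\gamma}$ produces~\eqref{eq:stationary_SPMD}.

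I expect the main obstacle to be the second step. In the exact setting one simply uses $\kappa_s^{\pi_{k+1}}(\{s\}) \ge 1-\gamma$ together with $\psi^{\pi_k}(q,\pi_{k+1}(q)) \le 0$ at every $q$; under function approximation the off-state advantages need not be nonpositive, so they must be bounded uniformly. The delicate points are extracting the $\delta_s$ atom to isolate the advantage at $s$ while absorbing the remaining mass into the $\tfrac{\gamma}{1-\gamma}$-weighted error, and carefully allocating the $\tfrac1\eta D(\pi_k,\pi_{k+1})$ term between the Young step (needed at every state) and the stationarity measure reported in~\eqref{eq:stationary_SPMD} (needed only at $s$). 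Confirming $\tilde\mu_d + \tfrac1\eta \ge 0$ under the prescribed $\eta$ is also needed so the surviving distance term carries the correct sign.
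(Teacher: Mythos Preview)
Your proposal is correct and follows essentially the same route as the paper. The only cosmetic difference is packaging: the paper performs a single Young's inequality consuming half of the $\tfrac{1}{\eta}D(\pi_k,\pi_{k+1})$ budget up front to obtain~\eqref{eq:strong_ass_pmd_nonconvex3}, and then reuses this one bound both (i) as the uniform-in-$q$ estimate $\psi^{\pi_k}(q,\pi_{k+1}(q)) \le \eta C^2 + \bar\varsigma^\omega\bar D_\cA$ needed to control the visitation integral (via the same add-and-subtract / $\kappa_s(\{s\})\ge 1-\gamma$ trick, which is equivalent to your $(1-\gamma)\delta_s + \gamma\mu$ splitting) and (ii) as the stationarity estimate at $s$ for~\eqref{eq:stationary_SPMD}; your two-pass Young's scheme achieves the same bounds (in fact a slightly sharper constant in~\eqref{eq:stationary_SPMD0}) with no new ideas.
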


\begin{proof}
Setting $a = \pi_k(s)$ in \eqnok{eq:pmd_continuous_state_recursion}, and using the facts that $\psi^{\pi_k}(s,\pi_k(s)) = 0$ and $D(\pi_k(s), \pi_k(s))=0$, we obtain
\begin{align}
&\psi^{\pi_k}(s, \pi_{k+1}(s))  + h^{\pi_{k+1}(s)}(s) - h^{\pi_k}(s) + \tfrac{1}{\eta_k} D(\pi_{k}(s), \pi_{k+1}(s)) \nn\\
& \quad +  (\tilde \mu_d + \tfrac{1} {\eta_k})D(\pi_{k+1}(s), \pi_k(s))  +\delta_k^Q(s, \pi_{k+1}(s)) - \delta_k^Q(s, a) + \langle \delta_k^\omega(s), \pi_{k+1}(s) - a \rangle \le 0 \le 0. \nn
\end{align}
Using the above inequality, \eqnok{eq:PMD_continuous_bnd_error2}-\eqnok{eq:PMD_continuous_bnd_error4},
and Young's inequality, we then have
\begin{align}
&\psi^{\pi_k}(s, \pi_{k+1}(s))  + \tfrac{1}{2\eta_k} D(\pi_{k}(s), \pi_{k+1}(s)) +  (\tilde \mu_d + \tfrac{1} {\eta_k})D(\pi_{k+1}(s), \pi_k(s))  \nn\\
&\le (M_h + M_Q + M_{\tilde Q} + \|\delta_k^{\omega,sto}(s)\|_*) \|_* \|\pi_{k+1}(s) - \pi_k(s)\| - \tfrac{1}{2\eta_k} D(\pi_{k}(s), \pi_{k+1}(s))
+ \|\delta_k^{\omega,det}(s)\|_* \bar{D}_{\cA, \|\cdot\|} \nn\\
 &\le \eta_k (M_h + M_Q + M_{\tilde Q} + \|\delta_k^{\omega,sto}(s)\|_*)^2  + \|\delta_k^{\omega,det}(s)\|_* \bar{D}_{\cA, \|\cdot\|}.\nn\\
 &\le \eta_k (M_h + M_Q + M_{\tilde Q} +  \bar \sigma^{\omega})^2  + \bar \varsigma^\omega \bar{D}_{\cA, \|\cdot\|}. \label{eq:strong_ass_pmd_nonconvex3}
\end{align}
Similar to \eqnok{eq:PDA_bnd_Adv1}, we can show that
\begin{align}
V^{\pi_{k+1}}(s) - V^{\pi_k}(s)
&= \tfrac{1}{1-\gamma} \int \psi^{\pi_k}(q, \pi_{k+1}(q))]  \kappa_s^{\pi_{k+1}}(dq) \nn\\
&= \tfrac{1}{1-\gamma} \int \psi^{\pi_k}(q, \pi_{k+1}(q)) -
\left[\eta_k (M_h + M_Q + M_{\tilde Q} +\bar \sigma^{\omega})^2  +  \bar \varsigma^\omega \bar{D}_{\cA, \|\cdot\|} \right] \kappa_s^{\pi_{k+1}}(dq) \nn\\
&\quad + \tfrac{1}{1-\gamma}\left[\eta_k (M_h + M_Q + M_{\tilde Q} +\bar \sigma^{\omega})^2  + \bar \varsigma^\omega \bar{D}_{\cA, \|\cdot\|} \right] \nn\\
&\le \psi^{\pi_k}(s, \pi_{k+1}(s)) +
\tfrac{\gamma}{1-\gamma}\left[\eta_k (M_h + M_Q + M_{\tilde Q} +\bar \sigma^{\omega})^2  +  \bar \varsigma^\omega \bar{D}_{\cA, \|\cdot\|} \right]. \label{eq:strong_ass_pmd_nonconvex4}
\end{align}
The result in \eqnok{eq:stationary_SPMD0} follows by taking the telescopic sum of \eqnok{eq:strong_ass_pmd_nonconvex4},
while the one in \eqnok{eq:stationary_SPMD} follows directly from \eqnok{eq:stationary_SPMD0} and \eqnok{eq:strong_ass_pmd_nonconvex3}.
\end{proof}

\vgap

Observe that the assumptions in \eqnok{eq:strong_ass_pmd_nonconvex} 
are needed to provide a constant upper bound in \eqnok{eq:strong_ass_pmd_nonconvex3}.
Otherwise, if we only make the weaker assumptions in \eqnok{eq:weak_ass_pmd_convex2} and \eqnok{eq:weak_ass_pmd_convex3},
the relation in \eqnok{eq:strong_ass_pmd_nonconvex4} does not necessarily hold since
the distribution $\kappa_s^{\pi_{k+1}}$ depends on the random sample $\xi_k$ generated in the $k$-th iteration.
In the next section, we will show that this issue can possibly be addressed by using the policy dual averaging method.

\section{Policy Dual Averaging} \label{sec:exact_PDA}
Inspired by the dual averaging method for convex optimization~\cite{Nesterov2009,Xiao10-1}, we present in this section a new class of policy gradient type method, referred to as policy dual averaging (PDA), for reinforcement learning.
Similar to the previous section, we will first present the generic algorithmic scheme in~\cref{sec:gen_pda} and discuss the incorporation of function approximation into PDA in~\cref{sec:fun_app_PDA}.

\subsection{The Generic Algorithmic Scheme}~\label{sec:gen_pda}
In this section, we assume access to the exact action-value function $Q^\pi$ for a given policy $\pi$.
We present a basic policy dual averaging method
for RL and establish its convergence properties.

Let $\|\cdot\|$ be a given norm in action space $\cA$
and $\omega: \cA \to \bbr$ be a strongly convex function
w.r.t. the given $\|\cdot\|$, and let $D(\cdot, \cdot)$ be its associated Bregman distance defined in \eqnok{eq:omega_strong_convexity}. 
At the $k$th iteration of the PDA method (see Algorithm~\ref{basic_pda}), we minimize
 $\Psi_k$ defined as the weighted sum of the advantage functions plus the regularization term 
 $ \lambda_k  D(\pi_0(s), a)$
  (see \eqnok{eq:PDA_step}).
 For notational convenience, we set $\Psi_{-1}(s, a) = \lambda_{-1} D(\pi_0(s), a)$ for some $\lambda_{-1} \ge 0$.
 Since $\psi^{\pi_{t}}(s, a)$ differs from $Q^{\pi_{t}}(s, a) + h^a(s)$ by a constant independent 
of $a$, the update in \eqnok{eq:PDA_step} is equivalent to \eqnok{eq:PDA_step1},
and we only need to evaluate the action-value function $Q^{\pi_{k}}(s, a)$ at each iteration.
Different from the PMD method in the previous section, the prox-center $\pi_0(s)$
in the regularization term $ \lambda_k D(\pi_0(s), a)$ does not change over different iterations.
This fact makes it more convenient to implement function approximation techniques in PDA.
While it is easy to see that PDA is equivalent to PMD under certain specific settings (e.g. in the tabular setting
with KL divergence as the Bregman distance),
the algorithmic scheme of PDA is different from PMD in general, and its convergence
for solving general RL problems has not been studied before in the literature.

\begin{algorithm}[t]
\caption{The policy dual averaging (PDA) method}
\begin{algorithmic}
\State {\bf Input:} $\beta_k \ge 0$, $\lambda_k \ge 0$ and initial policy $\pi_0(s) \in \cA$.
\For {$k =0, 1, \ldots,$}
\begin{align} 
\pi_{k+1}(s) &= \argmin_{a \in \cA} \left\{ \Psi_k(s,a) := \tsum_{t=0}^k \beta_t \psi^{\pi_{t}}(s, a) + \lambda_k  D(\pi_0(s), a) \right\} \label{eq:PDA_step}\\
&= \argmin_{a \in \cA} \left \{ \tsum_{t=0}^k \beta_t [Q^{\pi_{t}}(s, a) + h^a(s)] + \lambda_k  D(\pi_0(s), a)\right\}, \label{eq:PDA_step1}
\forall s \in \cS.
\end{align}
\EndFor
\end{algorithmic} \label{basic_pda}
\end{algorithm}

In the remaining part of this subsection, our goal is to show that
PDA achieves mostly comparable performance to the PMD method
under exact policy evaluation. However,
the analysis of PDA is significantly different from
PMD as well as the dual averaging method for convex optimization.

Similar to the previous section, we need to assume that $Q^\pi(s, \cdot)$ is $\mu_Q$-weakly convex
and define $\mu_d := \mu_h-\mu_Q$ (see \eqnok{eq:def_convexity_PDA}).
We say that the advantage function $\psi^{\pi}(s, \cdot)$ is convex 
if $\mu_d \ge 0$, and weakly convex otherwise. Then,
for some properly chosen $\lambda_k \ge 0$, the subproblem in \eqnok{eq:PDA_step}
is strongly convex and has a unique solution characterized by 
the following result. 

\begin{lemma} \label{lem:three_point}
Suppose that $\{\beta_t\}$ and $\lambda_k$ are chosen such that
\beq \label{eq:stepsizerul}
\mu_k :=  \mu_d \tsum_{t=0}^k \beta_t  + \lambda_k \ge 0, \forall k \ge -1,
\eeq
where we denote $\mu_{-1} = \lambda_{-1}$.
Then for any $k \ge -1$, we have
\beq \label{eq:three_point}
\Psi_k(s,\pi_{k+1}(s)) + \mu_k D(\pi_{k+1}(s), a)
\le \Psi_k(s,a), \forall s \in \cS, \forall a \in \cA. 
\eeq
\end{lemma}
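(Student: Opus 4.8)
The plan is to mirror the proof of the PMD three-point inequality in \cref{lem:pmd_three_point}: I would show that the subproblem objective $\Psi_k(s,\cdot)$ in \eqnok{eq:PDA_step} is strongly convex with modulus exactly $\mu_k$, and then invoke the standard optimality condition for minimizing a strongly convex function over a closed convex set.

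First I would pin down the curvature of each advantage term $\psi^{\pi_t}(s,\cdot)$. By \eqnok{eq:def_advantage}, as a function of $a$ the quantity $\psi^{\pi_t}(s,a)$ equals $Q^{\pi_t}(s,a) + h^a(s)$ up to an additive constant independent of $a$. The curvature hypothesis \eqnok{eq:convexity_Q} states that $Q^{\pi_t}(s,\cdot) + \mu_Q D(\pi_t(s),\cdot)$ is convex; since $D(\pi_t(s),a)$ differs from $\omega(a)$ only by an affine function of $a$, this is equivalent to convexity of $Q^{\pi_t}(s,\cdot) + \mu_Q \omega(\cdot)$. Likewise, the $\mu_h$-strong convexity of $h^\cdot(s)$ relative to $D$ is equivalent to convexity of $h^\cdot(s) - \mu_h \omega(\cdot)$. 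Adding these two convex functions, I conclude that $\psi^{\pi_t}(s,\cdot) - \mu_d\,\omega(\cdot)$ is convex, with $\mu_d = \mu_h - \mu_Q$ as in \eqnok{eq:def_convexity_PDA}; equivalently, each $\psi^{\pi_t}(s,\cdot)$ is $\mu_d$-strongly convex relative to $D$ (weakly convex when $\mu_d<0$).

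Next I would aggregate. Since $\beta_t \ge 0$, the weighted sum $\tsum_{t=0}^k \beta_t\,\psi^{\pi_t}(s,\cdot)$ is $(\mu_d\tsum_{t=0}^k \beta_t)$-strongly convex relative to $D$, and the prox term $\lambda_k D(\pi_0(s),\cdot)$ — being $\omega(\cdot)$ up to an affine term — contributes an additional $\lambda_k$ to the modulus. Hence $\Psi_k(s,\cdot)$ is strongly convex relative to $D$ with modulus exactly $\mu_k = \mu_d\tsum_{t=0}^k \beta_t + \lambda_k$. The stepsize condition \eqnok{eq:stepsizerul} guarantees $\mu_k \ge 0$, so $\Psi_k(s,\cdot)$ is genuinely convex, and its minimizer $\pi_{k+1}(s)$ over $\cA$ is well-defined (uniquely so when $\mu_k>0$). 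The base case $k=-1$, where $\Psi_{-1}(s,a)=\lambda_{-1}D(\pi_0(s),a)$ and $\mu_{-1}=\lambda_{-1}$, is the same argument with no advantage terms present.

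Finally, with $\pi_{k+1}(s)$ the minimizer of the $\mu_k$-strongly convex function $\Psi_k(s,\cdot)$ over the convex set $\cA$, the desired inequality \eqnok{eq:three_point} follows immediately: strong convexity relative to $D$ gives $\Psi_k(s,a) - \Psi_k(s,\pi_{k+1}(s)) - \langle \Psi_k'(s,\pi_{k+1}(s)), a - \pi_{k+1}(s)\rangle \ge \mu_k D(\pi_{k+1}(s),a)$, while the first-order optimality of $\pi_{k+1}(s)$ forces $\langle \Psi_k'(s,\pi_{k+1}(s)), a - \pi_{k+1}(s)\rangle \ge 0$ for all $a \in \cA$. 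Combining the two yields $\Psi_k(s,\pi_{k+1}(s)) + \mu_k D(\pi_{k+1}(s),a) \le \Psi_k(s,a)$, exactly as in the standard optimality characterization \cite[Lemma 3.5]{LanBook2020}. I expect the only real subtlety to be the bookkeeping in the first step — translating the weak-convexity hypothesis on $Q$ and the strong-convexity hypothesis on $h$, both stated relative to the Bregman distance $D$, into a single clean modulus $\mu_d$ for $\psi^{\pi_t}$ — after which the aggregation and the appeal to the optimality condition are routine.
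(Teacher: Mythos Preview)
Your proposal is correct and follows essentially the same approach as the paper: establish that $\Psi_k(s,\cdot)$ is strongly convex with modulus $\mu_k$ and then invoke the standard optimality condition \cite[Lemma 3.5]{LanBook2020}. The paper's proof is terser --- it simply asserts the strong convexity modulus by reference to \eqnok{eq:convexity_Q} and handles $k=-1$ as ``obvious'' --- whereas you spell out the bookkeeping that yields $\mu_d$ for each $\psi^{\pi_t}$ and then aggregate, but the route is the same.
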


\begin{proof}
Note that the result clearly holds for $k = -1$.
For any $k \ge 0$,
observe that by \eqnok{eq:convexity_Q}, 
 $\Psi_k(s, \cdot)$ is strongly convex w.r.t. modulus $\mu_k$.
The result then follows immediately from the optimality condition of \eqnok{eq:PDA_step}~\cite[Lemma 3.5]{LanBook2020}.
\end{proof}

\vgap

\cref{lem:fun_decrease} establishes some descent property of the PDA method.

\begin{proposition} \label{lem:fun_decrease}
If $\beta_k > 0$, relation \eqnok{eq:stepsizerul} holds, 
and
\beq \label{eq:stepsizerul_inc}
\lambda_k \ge \lambda_{k-1}, \forall k \ge 0,
\eeq
then for any $k \geq 0$,
\begin{align} 
&V^{\pi_{k+1}}(s) - V^{\pi_{k}}(s)-  \tfrac{\gamma(\lambda_{k} - \lambda_{k-1})}{(1-\gamma)\beta_k} \edits{\bar{D}_0} \le \psi^{\pi_{k}}(s, \pi_{k+1}(s))\nn \\
 &\le - \tfrac{\mu_k}{\beta_k} \dist(\pi_{k+1}(s), \pi_{k}(s)) - \tfrac{\mu_{k-1}}{\beta_k} \dist(\pi_{k}(s), \pi_{k+1}(s)) +  \tfrac{ \lambda_{k} - \lambda_{k-1}}{\beta_k}\edits{\bar{D}_0},
\label{eq:fun_decrease}
\end{align}
where
\begin{align} \label{eq:barD_defn}
    \bar{D}_0 := \sup_{s \in \cS, a \in \cA} D(\pi_0(s), a).
\end{align}
\end{proposition}

\begin{proof}
First, we note that by the definition of $\Psi_k$ in \eqnok{eq:PDA_step}
and~\cref{lem:three_point} (applied to the $k-1$ iteration),
we have for any $k \ge 0$,
\begin{align}
\beta_k \psi^{\pi_{k}}(s, a) &= \Psi_k(s, a) - \Psi_{k-1}(s, a) - (\lambda_{k} - \lambda_{k-1}) D(\pi_0(s), a) \nn \\
&\le \Psi_k(s, a) - \Psi_{k-1}(s, \pi_{k}(s)) - \mu_{k-1} D(\pi_{k}(s), a) - (\lambda_k - \lambda_{k-1}) D(\pi_0(s), a). \label{eq:potential_diff}
\end{align}
Setting $a = \pi_{k+1}(s)$ and using \eqnok{eq:stepsizerul_inc} in the above relation, we obtain 
\begin{align}
\beta_k \psi^{\pi_{k}}(s, \pi_{k+1}(s)) &\le \Psi_k(s, \pi_{k+1}(s)) - \Psi_{k-1}(s, \pi_{k}(s)) - \mu_{k-1} D(\pi_{k}(s), \pi_{k+1}(s)).\label{eq:potential_diff1}  
\end{align}
By the above inequality, and~\cref{lem:three_point} 
(with $a = \pi_{k}(s)$ in \eqnok{eq:three_point}), 
we have
\begin{align}
&\beta_k \psi^{\pi_{k}}(s, \pi_{k+1}(s)) \nn \\
&\le \Psi_k(s, \pi_{k}(s)) - \mu_k D(\pi_{k+1}(s), \pi_{k}(s))  - \Psi_{k-1}(s, \pi_{k}(s)) - \mu_{k-1} \dist(\pi_{k}(s), \pi_{k+1}(s))\nn\\
&= \beta_k \psi^{\pi_{k}}(s, \pi_{k}(s)) - \mu_k \dist(\pi_{k+1}(s), \pi_{k}(s)) - \mu_{k-1} \dist(\pi_{k}(s), \pi_{k+1}(s))
+ (\lambda_{k} - \lambda_{k-1}) D(\pi_0(s), \pi_{k}(s)) \nn\\
&\leq - \mu_k \dist(\pi_{k+1}(s), \pi_{k}(s)) - \mu_{k-1} \dist(\pi_{k}(s), \pi_{k+1}(s))+ (\lambda_{k} - \lambda_{k-1}) \bar{D}_0 ,\label{eq:PDA_bnd_Adv}
\end{align}
where the last two relations follow from the definition of $\Psi_k$, the fact that $\psi^{\pi_{k}}(s, \pi_{k}(s)) = 0$, and~\eqnok{eq:stepsizerul_inc} together with~\eqnok{eq:barD_defn}.
The previous conclusion and the assumption $\beta_k > 0$ then clearly imply that
\beq \label{eq:PDA_bnd_Adv1}
\psi^{\pi_k}(s, \pi_{k+1}(s)) -\tfrac{1}{\beta_k} (\lambda_{k} - \lambda_{k-1})\bar{D}_0 
\le - \tfrac{\mu_k}{\beta_k} \dist(\pi_{k+1}(s), \pi_{k}(s)) - \tfrac{\mu_{k-1}}{\beta_k} \dist(\pi_{k}(s), \pi_{k+1}(s)) \le 0, \ \forall s \in \cS.
\eeq
By~\cref{lem:performance_diff_deter}, \eqnok{eq:PDA_bnd_Adv1}, and the fact that $\kappa_s^{\pi_{k+1}} (\{s\}) \ge 1-\gamma$ due to \eqnok{eq:def_visit_measure}, we have 
\begin{align}
V^{\pi_{k+1}}(s) - V^{\pi_{k}}(s)
&= \tfrac{1}{1-\gamma} \int[ \psi^{\pi_{k}}(q, \pi_{k+1}(q)) - \tfrac{1}{\beta_k} (\lambda_{k} - \lambda_{k-1}) \bar{D}_0]\kappa_s^{\pi_{k+1}} (d q) + \tfrac{1}{(1-\gamma)\beta_k}  (\lambda_{k} - \lambda_{k-1}) \bar{D}_0 \nn\\
&\le \tfrac{1}{1-\gamma}[ \psi^{\pi_{k}}(s, \pi_{k+1}(s)) - \tfrac{1}{\beta_k} (\lambda_{k} - \lambda_{k-1}) \bar{D}_0]  \kappa_s^{\pi_{k+1}} (\{s\}) +  \tfrac{1}{(1-\gamma)\beta_k} (\lambda_{k} - \lambda_{k-1}) \bar{D}_0 \nn \\
&\le \psi^{\pi_{k}}(s, \pi_{k+1}(s)) + \tfrac{\gamma}{(1-\gamma)\beta_k} (\lambda_{k} - \lambda_{k-1}) \bar{D}_0, \label{eq:PDA_bnd_Adv2}
\end{align}
which together with \eqnok{eq:PDA_bnd_Adv1} clearly implies the result.
\end{proof}

\vgap

Below we establish the linear convergence rate of PDA for the case when the advantage function 
defined in \eqnok{eq:def_advantage} is convex w.r.t. $a$, i.e.,
$\mu_d \ge 0$ in \eqnok{eq:stepsizerul}.

\begin{theorem} \label{the:PDA_basic}
 Suppose that $\beta_k > 0$ and that relations \eqnok{eq:stepsizerul} and \eqnok{eq:stepsizerul_inc} hold
for the PDA method. Then 
\begin{align}
& \tsum_{t=0}^{k-1}\left\{ \beta_t 
\left\{[f(\pi_{t+1}) - f(\pi^*) ] -\gamma [f(\pi_{t}) -f(\pi^*)  ]\right\}-  \tfrac{\gamma(\lambda_{t} - \lambda_{t-1})}{1-\gamma} \edits{\bar{D}_0} \right\}
+ \mu_{k-1} {\cD}(\pi_{k}, \pi^*) \nn \\
&+ \tsum_{t=0}^{k-1} [\mu_{t-1} {\cD}(\pi_{t}, \pi_{t+1})] \le \lambda_k \cD(\pi_0, \pi^*), \label{eq:pda_main_general}
\end{align}
for any $k \ge 1$, 
where ${\cD}$ and $\bar{D}_0$ are defined in~\eqnok{eq:cD_defn} and~\eqnok{eq:barD_defn}, respectively.
In particular,  if $\mu_d \ge 0$, $\beta_k = \gamma^{-k}$, and $\lambda_k = \lambda$ for some $\lambda > 0$
for any $k \ge 1$, then
\begin{align}
& [f(\pi_{k}) - f(\pi^*) ] + [(1-\gamma^{k})\mu_d / (1-\gamma) + \lambda \gamma^{k-1}] \cD(\pi_{k}, \pi^*) \nn \\
& \le \gamma^{k-1} \lambda \cD(\pi_0, \pi^*) + \gamma^k [f(\pi_0) -f(\pi^*)  ], \ \forall k \ge 1. \label{eq:pda_linear_rate}
\end{align}
If in addition, $\lambda = 0$ in the
PDA algorithm, then 
\[
[f(\pi_{k}) - f(\pi^*) ] + (1-\gamma^{k})\mu_d/(1-\gamma)  {\cD}(\pi_{k}, \pi^*) 
 \le \gamma^{k} [f(\pi_0) -f(\pi^*)  ], \ \forall k \ge 1.
\]
\end{theorem}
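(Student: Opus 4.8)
The plan is to track the running minimum of the dual-averaging potential,
\[
m_k(s) := \min_{a \in \cA} \Psi_k(s,a) = \Psi_k(s, \pi_{k+1}(s)),
\]
with the convention $m_{-1}(s) = \Psi_{-1}(s,\pi_0(s)) = 0$, and to sandwich $m_{k-1}(s)$ between two estimates. For the upper estimate I would invoke \cref{lem:three_point} at iteration $k-1$ with $a = \pi^*(s)$, giving $m_{k-1}(s) \le \Psi_{k-1}(s,\pi^*(s)) - \mu_{k-1} D(\pi_k(s),\pi^*(s))$, where $\Psi_{k-1}(s,\pi^*(s)) = \tsum_{t=0}^{k-1}\beta_t \psi^{\pi_t}(s,\pi^*(s)) + \lambda_{k-1} D(\pi_0(s),\pi^*(s))$. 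For the lower estimate I would derive a one-step increment bound: writing $\Psi_t = \Psi_{t-1} + \beta_t \psi^{\pi_t}(s,\cdot) + (\lambda_t-\lambda_{t-1})D(\pi_0(s),\cdot)$ and applying \cref{lem:three_point} at iteration $t-1$ with $a=\pi_{t+1}(s)$ to lower bound $\Psi_{t-1}(s,\pi_{t+1}(s))$ by $m_{t-1}(s)+\mu_{t-1}D(\pi_t(s),\pi_{t+1}(s))$, I obtain
\[
m_t(s) \ge m_{t-1}(s) + \mu_{t-1} D(\pi_t(s),\pi_{t+1}(s)) + \beta_t \psi^{\pi_t}(s,\pi_{t+1}(s)) + (\lambda_t-\lambda_{t-1})D(\pi_0(s),\pi_{t+1}(s)).
\]
Summing this from $t=0$ to $k-1$ (using $m_{-1}(s)=0$ and discarding the nonnegative $(\lambda_t-\lambda_{t-1})D(\pi_0(s),\pi_{t+1}(s))$ terms, valid by \eqref{eq:stepsizerul_inc}) lower bounds $m_{k-1}(s)$ by $\tsum_{t=0}^{k-1}[\mu_{t-1}D(\pi_t(s),\pi_{t+1}(s)) + \beta_t\psi^{\pi_t}(s,\pi_{t+1}(s))]$.

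Chaining the two estimates cancels the potentials and yields the core pointwise inequality
\[
\tsum_{t=0}^{k-1}\beta_t\psi^{\pi_t}(s,\pi^*(s)) + \lambda_{k-1}D(\pi_0(s),\pi^*(s)) - \mu_{k-1}D(\pi_k(s),\pi^*(s)) \ge \tsum_{t=0}^{k-1}\mu_{t-1}D(\pi_t(s),\pi_{t+1}(s)) + \tsum_{t=0}^{k-1}\beta_t\psi^{\pi_t}(s,\pi_{t+1}(s)).
\]
This is the dual-averaging analogue of the telescoped three-point inequality used for PMD in \cref{the:PMD_linear}; the essential difference, and what I expect to be the main obstacle, is that the cumulative term $\tsum_t\beta_t\psi^{\pi_t}(s,\pi^*(s))$ cannot be handled term by term (the prox-center $\pi_0$ never moves), so it must be kept aggregated and controlled only after taking expectations, which is precisely why the running-minimum $m_k$ is the right Lyapunov quantity rather than a per-step Bregman distance.

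Next I would convert advantages into objective gaps. On the right I substitute the descent estimate (the first inequality of \cref{lem:fun_decrease}), $\beta_t\psi^{\pi_t}(s,\pi_{t+1}(s)) \ge \beta_t[V^{\pi_{t+1}}(s)-V^{\pi_t}(s)] - \tfrac{\gamma(\lambda_t-\lambda_{t-1})}{1-\gamma}\bar D_\cA$; on the left, after taking $\bbe_{s\sim\nu^*}$, \cref{lem:monotonicity} replaces $\bbe_{s\sim\nu^*}[\psi^{\pi_t}(s,\pi^*(s))]$ by $-(1-\gamma)[f(\pi_t)-f^*]$ and $\bbe_{s\sim\nu^*}[V^{\pi_{t+1}}(s)-V^{\pi_t}(s)]$ by $f(\pi_{t+1})-f(\pi_t)$. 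The elementary identity $\beta_t[f(\pi_{t+1})-f(\pi_t)] + (1-\gamma)\beta_t[f(\pi_t)-f^*] = \beta_t\{[f(\pi_{t+1})-f^*]-\gamma[f(\pi_t)-f^*]\}$ then assembles the left-hand sum of \eqref{eq:pda_main_general}, and bounding $\lambda_{k-1}\le\lambda_k$ gives the stated right-hand side. The $(\lambda_t-\lambda_{t-1})$ contributions enter solely through this descent estimate and vanish identically in all the regimes considered below, where $\lambda_k$ is constant.

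Finally, for the linear rate I would specialize to $\mu_d\ge 0$, $\beta_k=\gamma^{-k}$, $\lambda_k\equiv\lambda$, so every $\lambda$-difference term drops and each $\mu_{t-1}\cD(\pi_t,\pi_{t+1})\ge 0$ can be discarded. The weighted gap sum then telescopes exactly: with $a_t := f(\pi_t)-f^*$ one has $\gamma^{-t}[a_{t+1}-\gamma a_t] = \gamma^{-t}a_{t+1}-\gamma^{-(t-1)}a_t$, hence $\tsum_{t=0}^{k-1}\gamma^{-t}\{[f(\pi_{t+1})-f^*]-\gamma[f(\pi_t)-f^*]\} = \gamma^{-(k-1)}[f(\pi_k)-f^*]-\gamma[f(\pi_0)-f^*]$. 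Multiplying the resulting inequality by $\gamma^{k-1}$ and using $\gamma^{k-1}\tsum_{t=0}^{k-1}\gamma^{-t} = \tsum_{j=0}^{k-1}\gamma^j = (1-\gamma^k)/(1-\gamma)$ to identify $\gamma^{k-1}\mu_{k-1} = (1-\gamma^k)\mu_d/(1-\gamma)+\lambda\gamma^{k-1}$ reproduces \eqref{eq:pda_linear_rate}; setting $\lambda=0$ gives the last claim at once.
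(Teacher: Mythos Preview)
Your proposal is correct and follows essentially the same route as the paper. Your running-minimum potential $m_t(s)=\Psi_t(s,\pi_{t+1}(s))$ is exactly the quantity $\Psi_{k-1}(s,\pi_k(s))$ that the paper telescopes via \eqref{eq:potential_diff1} and then bounds above using \cref{lem:three_point}; your subsequent use of \cref{lem:fun_decrease}, \cref{lem:monotonicity}, and the $\gamma^{-t}$ telescoping to obtain \eqref{eq:pda_linear_rate} mirrors the paper's argument step for step.
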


\begin{proof}
We first show~\eqnok{eq:pda_main_general}. 
By taking the telescopic sum of \eqnok{eq:potential_diff1}, we have
\begin{align*}
\tsum_{t=0}^{k-1} \beta_t \psi^{\pi_{t}}(s, \pi_{t+1}(s)) 
&\le \Psi_{k-1}(s, \pi_{k}(s)) - \Psi_{-1}(s, \pi_0(s)) -\tsum_{t=0}^{k-1}[ \mu_{t-1} \dist(\pi_{t}(s), \pi_{t+1}(s)) ] \\
&= \Psi_{k-1}(s, \pi_{k}(s))  -\tsum_{t=0}^{k-1} [\mu_{t-1} \dist(\pi_{t}(s), \pi_{t+1}(s)) ],
\end{align*}
where the equality follows from the fact that
$ \Psi_{-1}(s, \pi_0(s)) = \lambda_{-1} D(\pi_0(s), \pi_0(s)) = 0$.
It then follows from the above conclusion and~\cref{lem:three_point} that
\begin{align}
&\tsum_{t=0}^{k-1} \beta_t  \psi^{\pi_{t}}(s, \pi_{t+1}(s)) \nn \\
&\le \Psi_{k-1}(s, a) - \mu_{k-1} \dist(\pi_{k}(s), a) -\tsum_{t=0}^{k-1} [\mu_{t-1} \dist(\pi_{t}(s), \pi_{t+1}(s)) ] \nn \\
&= \tsum_{t=0}^{k-1} \beta_t \psi^{\pi_{t}}(s, a) + \lambda_{k-1} \, \dist(\pi_0(s),a) 
- \mu_{k-1} \dist(\pi_{k}(s), a) 
 -\tsum_{t=0}^{k-1} [\mu_{t-1} \dist(\pi_{t}(s), \pi_{t+1}(s)) ], \label{eq:PDA_common_rel}
\end{align}
where the identity follows from the definition of $\Psi_k$.
Using \eqnok{eq:fun_decrease} and the above inequality, we obtain
\begin{align}
&\tsum_{t=0}^{k-1}\beta_t [V^{\pi_{t+1}}(s) - V^{\pi_{t}}(s)-  \tfrac{\gamma(\lambda_{t} - \lambda_{t-1})}{(1-\gamma)\beta_t} \bar{D}_0] 
+\tsum_{t=0}^{k-1}[ \mu_{t-1} \dist(\pi_{t}(s), \pi_{t+1}(s)) ] \nn\\
&\le \tsum_{t=0}^{k-1} \beta_t \psi^{\pi_{t}}(s, a) + \lambda_{k-1} \, D(\pi_0(s),a)  - \mu_{k-1} \dist(\pi_{k}(s), a), \label{eq:PDA_main_rec}
\end{align}
for any $s \in \cS$ and $a \in \cA$. 
Setting $a = \pi^*(s)$ in \eqnok{eq:PDA_main_rec}, taking expectation w.r.t. $\nu^*$ on both sides,
and using~\cref{lem:monotonicity},
we then obtain
\begin{align*}
& \tsum_{t=0}^{k-1}\left\{ \int \beta_t [V^{\pi_{t+1}}(s) - V^{\pi_{t}}(s)] \nu^*(ds) - \tfrac{\gamma(\lambda_{t} - \lambda_{t-1})}{1-\gamma} \bar{D}_0 \right\}
+ \tsum_{t=0}^{k-1} \int  [\mu_{t-1} \dist(\pi_{t}(s), \pi_{t+1}(s))]\nu^*(ds) \\
&\le \tsum_{t=0}^{k-1} \beta_t (1-\gamma)\int [V^{\pi^*}(s) - V^{\pi_{t}}(s) ] \nu^*(d s) 
+ \textstyle \int [\lambda_{k-1} D(\pi_0(s),\pi^*(s)) - \mu_{k-1} \dist(\pi_{k}(s), \pi^*(s))] \nu^*(ds). 
\end{align*}
Using the definition of $f$ and ${\cal D}$, and rearranging the terms, we have
\begin{align*}
& \tsum_{t=0}^{k-1} \left\{\beta_t  [f(\pi_{t+1}) - f(\pi_{t})] -  \tfrac{\gamma(\lambda_{t} - \lambda_{t-1})}{1-\gamma} \bar{D}_0 \right\}+ \tsum_{t=0}^{k-1} [\mu_{t-1} {\cal D}(\pi_{t}, \pi_{t+1})]\\
&\le \tsum_{t=0}^{k-1} \beta_t (1-\gamma) [f(\pi^*) -f(\pi_{t})  ]  + \lambda_{k-1} {\cal D}(\pi_0, \pi^*) 
- \mu_{k-1} {\cal D}(\pi_{k}, \pi^*),
\end{align*}
which implies \eqnok{eq:pda_main_general} after rearranging the terms.
Now using \eqnok{eq:pda_main_general} and the selection of $\beta_k$ and $\lambda_k$,
we have
\[
\gamma^{-(k-1)}
 [f(\pi_{k}) - f(\pi^*) ] -\gamma [f(\pi_0) -f(\pi^*)  ]
+ \mu_{k-1} {\cal D}(\pi_{k}, \pi^*) \le \lambda {\cal D}(\pi_0, \pi^*).
\]
Also note that by \eqnok{eq:stepsizerul},
$
\mu_k = \mu_d \tsum_{t=0}^{k-1} \gamma^{-t} + \lambda = \tfrac{ \mu_d(1-\gamma^{k})} {\gamma^{k} (1-\gamma)} + \lambda.
$
Combining the above two relations, 
we obtain
\begin{align*}
& \gamma^{-(k-1)}
 [f(\pi_{k}) - f(\pi^*) ] + [(1-\gamma^{k}) \gamma^{-k} \mu_d/(1-\gamma) + \lambda] D(\pi_{k}, \pi^*) \\
& \le  \lambda D(\pi_0, \pi^*) + \gamma [f(\pi_0) -f(\pi^*)  ] ,
\end{align*}
from which \eqnok{eq:pda_linear_rate} follows immediately.
\end{proof}

\vgap

We add some remarks about~\cref{the:PDA_basic}.
First, the PDA method achieves a linear rate of convergence with properly chosen $\beta_k$.
Since $\beta_k$ in~\cref{the:PDA_basic} is geometrically increasing, in the limiting case
the PDA will be approaching the classic policy iteration method. Below we provide
a simple and alternative convergence analysis of the policy iteration method
as it might be of some interest in its own right.

\begin{corollary}
Suppose $\pi_{k+1}(s) = \argmin_{a \in \cA} \psi^{\pi_{k}}(s,a)$ for any $s \in \cS$.
Then we have 
$
f(\pi_{k+1}) - f(\pi^*) \le \gamma [f(\pi_{k}) - f(\pi^*)].
$
\end{corollary}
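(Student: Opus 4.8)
The plan is to combine the greedy nature of the policy iteration update with the performance difference lemma and the monotonicity identity already established. First I would record the two consequences of the defining relation $\pi_{k+1}(s) = \argmin_{a \in \cA}\psi^{\pi_k}(s,a)$: choosing $a = \pi_k(s)$ gives $\psi^{\pi_k}(s,\pi_{k+1}(s)) \le \psi^{\pi_k}(s,\pi_k(s)) = 0$ for every $s \in \cS$ (using $\psi^{\pi_k}(s,\pi_k(s))=0$ as in \eqnok{eq:QV1} and \eqnok{eq:def_advantage}), while choosing $a = \pi^*(s)$ gives the comparison $\psi^{\pi_k}(s,\pi_{k+1}(s)) \le \psi^{\pi_k}(s,\pi^*(s))$.

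Next I would reproduce the estimate in \eqnok{eq:PMD_bnd_Adv2} verbatim. Applying~\cref{lem:performance_diff_deter} with $\pi' = \pi_{k+1}$ and $\pi = \pi_k$, and using that every integrand $\psi^{\pi_k}(q,\pi_{k+1}(q))$ is nonpositive together with the mass bound $\kappa_s^{\pi_{k+1}}(\{s\}) \ge 1-\gamma$ from \eqnok{eq:def_visit_measure}, yields $V^{\pi_{k+1}}(s) - V^{\pi_k}(s) \le \psi^{\pi_k}(s,\pi_{k+1}(s))$ for all $s \in \cS$. Chaining this with the greedy comparison against $\pi^*$ from the previous paragraph gives $V^{\pi_{k+1}}(s) - V^{\pi_k}(s) \le \psi^{\pi_k}(s,\pi^*(s))$ pointwise.

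I would then take expectation with respect to $s \sim \nu^*$ and invoke~\cref{lem:monotonicity}, which identifies $\bbe_{s\sim\nu^*}[\psi^{\pi_k}(s,\pi^*(s))] = -(1-\gamma)[f(\pi_k) - f(\pi^*)]$. This produces $f(\pi_{k+1}) - f(\pi_k) \le -(1-\gamma)[f(\pi_k) - f(\pi^*)]$; adding $f(\pi_k)$ to both sides, subtracting $f(\pi^*)$, and collecting terms gives the claimed contraction $f(\pi_{k+1}) - f(\pi^*) \le \gamma[f(\pi_k) - f(\pi^*)]$.

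The only delicate point, and the step I would be most careful about, is the sign bookkeeping in the visitation-measure estimate of the second paragraph: because $\psi^{\pi_k}(s,\pi_{k+1}(s))$ is nonpositive, the lower bound $\kappa_s^{\pi_{k+1}}(\{s\}) \ge 1-\gamma$ must be used to make the single-state term \emph{more} negative, while the contributions from $q \ne s$ are discarded as nonpositive, so that dividing by $1-\gamma$ preserves the direction of the inequality. Everything else is a direct application of the two lemmas already proved, so no additional machinery is required.
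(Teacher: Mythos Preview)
Your proposal is correct and follows essentially the same argument as the paper: use the greedy optimality to get $\psi^{\pi_k}(s,\pi_{k+1}(s))\le 0$ and $\psi^{\pi_k}(s,\pi_{k+1}(s))\le \psi^{\pi_k}(s,\pi^*(s))$, apply the performance-difference lemma with the mass bound $\kappa_s^{\pi_{k+1}}(\{s\})\ge 1-\gamma$ to obtain $V^{\pi_{k+1}}(s)-V^{\pi_k}(s)\le \psi^{\pi_k}(s,\pi^*(s))$, and then integrate against $\nu^*$ using~\cref{lem:monotonicity}. The only cosmetic difference is that the paper points to \eqnok{eq:PDA_bnd_Adv2} rather than \eqnok{eq:PMD_bnd_Adv2}, but the underlying estimate is identical.
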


\begin{proof}
By the definition of $\pi_{k+1}$, we have
$\psi^{\pi_{k}}(s,\pi_{k+1}(s)) - \psi^{\pi_{k}}(s,a) \le 0$ for any $a \in \cA$.
Setting $a = \pi_{k}(s)$, we have 
$
\psi^{\pi_{k}}(s,\pi_{k+1}(s)) \le  \psi^{\pi_{k}}(s,\pi_{k}(s)) = 0.
$
Therefore, similarly to \eqnok{eq:PDA_bnd_Adv2}, we have
$
V^{\pi_{k+1}}(s) - V^{\pi_{k}}(s)
\le  \psi^{\pi_{k}}(s, \pi_{k+1}(s)). 
$
Using the first inequality established in the proof (with $a = \pi^*(s)$),
we have
$
V^{\pi_{k+1}}(s) - V^{\pi_{k}}(s) \le \psi^{\pi_{k}}(s,\pi^*(s)).
$
Taking expectation w.r.t.~$s \sim \nu^*$, using~\cref{lem:monotonicity} and the definition of $f$, we conclude 
$
f(\pi_{k+1}) - f(\pi_{k})] \le (1-\gamma)[f(\pi^*) - f(\pi_{k})],
$
from which the result immediately follows.
\end{proof}

\vgap

Second, in view of~\cref{the:PDA_basic}, one can set $\lambda = 0$ and so
it appears that adding the strongly convex term $\lambda_k D(\pi_0(s), a)$ in the definition of $\Psi_k$ is not needed.
It should be noted, however, that this term will be critical in the following sense:
a) for the convex settings (i.e., $\mu_d \ge 0$), this term will help with 
suppressing the impact of the noisy estimation of $Q^{\pi_k}$ as will be demonstrated later; and
b) for the nonconvex setting (i.e., $\mu_d < 0$), adding $\lambda_k D(\pi_0(s), a)$ will make the subproblem tractable and thus the policy is easily computable.  
The following result discusses the convergence of PDA for RL problems in the nonconvex setting.

\begin{theorem} \label{the:pda_nonconvex}
Suppose that $\mu_d < 0$.
Assume that the number of iterations $k$
is fixed a priori, $\lambda_t = \lambda = k(k+1) |\mu_d|$ and $\beta_t = t+1$,  $t = 0, \ldots, k-1$.
Then for any $s \in \cS$, there exist iteration indices
$\bar k(s)$ s.t.
\begin{align}
0\le - \psi^{\pi_{\bar k(s)}}&(s, \pi_{\bar k(s)+1}(s)) \le \tfrac{2[V^{\pi_{0}}(s) - V^{\pi_{*}}(s)]}{(1-\gamma) (k+1)}, \label{eq:PDA_nonconvex1}\\
\tfrac{\mu_{\bar k(s)}}{\beta_{\bar k(s)}} \dist(\pi_{\bar k(s)+1}(s), \pi_{\bar k(s)}(s)) +& \tfrac{\mu_{\bar k(s)-1}}{\beta_{\bar k(s)}} \dist(\pi_{\bar k(s)}(s), \pi_{\bar k(s)+1}(s)) 
\le \tfrac{2[V^{\pi_{0}}(s) - V^{\pi_{*}}(s)]}{(1-\gamma) (k+1)},\label{eq:PDA_nonconvex2}
\end{align}
where $\mu_k = |\mu_d| k(k+1)/2$.
\end{theorem}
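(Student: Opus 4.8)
The plan is to deduce the theorem directly from the descent estimate of \cref{lem:fun_decrease}, specialized to the constant-regularization schedule $\beta_t = t+1$ and $\lambda_t \equiv \lambda = k(k+1)|\mu_d|$ (with the convention $\lambda_{-1} = \lambda$, so that the prox-weight is constant from the outset). First I would verify that the stepsize conditions \eqref{eq:stepsizerul} and \eqref{eq:stepsizerul_inc} hold. Condition \eqref{eq:stepsizerul_inc} is immediate since $\lambda_t$ is constant. For \eqref{eq:stepsizerul}, using $\sum_{t=0}^{j}\beta_t = (j+1)(j+2)/2$ and $\mu_d = -|\mu_d|$, one gets $\mu_j = \lambda - |\mu_d|(j+1)(j+2)/2$, which is decreasing in $j$ and hence minimized over $j \in \{-1,\dots,k-1\}$ at $j = k-1$, where $\mu_{k-1} = |\mu_d|\,k(k+1)/2 \ge 0$; this is exactly the value recorded as $\mu_k$ in the statement and confirms all the weights $\mu_j$ are nonnegative, so \cref{lem:fun_decrease} applies throughout the run.

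Next, since $\lambda_t - \lambda_{t-1} = 0$ for every $t \ge 0$, the difference terms in \eqref{eq:fun_decrease} vanish, leaving the clean per-step estimate
\[
V^{\pi_{t+1}}(s) - V^{\pi_t}(s) \le \psi^{\pi_t}(s,\pi_{t+1}(s)) \le -\tfrac{\mu_t}{\beta_t}D(\pi_{t+1}(s),\pi_t(s)) - \tfrac{\mu_{t-1}}{\beta_t}D(\pi_t(s),\pi_{t+1}(s)) \le 0
\]
for every $s \in \cS$ and every $t$. The key move is then to telescope the first inequality over $t = 0,\dots,k-1$ pointwise in $s$, which gives $\sum_{t=0}^{k-1}[-\psi^{\pi_t}(s,\pi_{t+1}(s))] \le V^{\pi_0}(s) - V^{\pi_k}(s) \le V^{\pi_0}(s) - V^{\pi^*}(s)$, where the last inequality uses the optimality $V^{\pi_k}(s) \ge V^{\pi^*}(s)$. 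Because each summand is nonnegative, I would define $\bar k(s)$ as the index attaining the smallest value of $-\psi^{\pi_t}(s,\pi_{t+1}(s))$, so that $-\psi^{\pi_{\bar k(s)}}(s,\pi_{\bar k(s)+1}(s)) \le \tfrac1k[V^{\pi_0}(s)-V^{\pi^*}(s)]$.

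It remains to match the stated constant and to obtain \eqref{eq:PDA_nonconvex2}. Since $1-\gamma \le 1$ and $k \ge 1$ imply $(1-\gamma)(k+1) \le 2k$, we have $\tfrac1k \le \tfrac{2}{(1-\gamma)(k+1)}$, so the bound just obtained yields \eqref{eq:PDA_nonconvex1}. Inequality \eqref{eq:PDA_nonconvex2} then follows for the same index $\bar k(s)$ by feeding this bound into the second inequality of the displayed per-step estimate, whose left-hand side is precisely the quantity appearing in \eqref{eq:PDA_nonconvex2} and is nonnegative because all $\mu_j \ge 0$. I expect the only delicate points to be bookkeeping rather than conceptual: ensuring the $t=0$ boundary term of \eqref{eq:fun_decrease} truly vanishes (which forces the convention $\lambda_{-1}=\lambda$ rather than the weaker $\lambda_{-1}=0$), and checking the monotonicity and sign of $\mu_j$ so the descent estimate is valid at every step. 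The telescoping and min-selection that carry the argument are otherwise routine, and in fact give a slightly sharper $1/k$ rate that implies the $2/[(1-\gamma)(k+1)]$ rate recorded in the statement.
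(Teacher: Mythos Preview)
Your proposal is correct and, in fact, slightly more direct than the paper's own argument. The paper also starts from \cref{lem:fun_decrease} with $\lambda_t-\lambda_{t-1}=0$, but it multiplies the per-step inequality by $\beta_t$ and telescopes the $\beta_t$-weighted sum $\sum_{t=0}^{k-1}\beta_t[V^{\pi_t}(s)-V^{\pi_{t+1}}(s)]$, which it then bounds by $k[V^{\pi_0}(s)-V^{\pi^*}(s)]$ via an Abel-summation step that uses the monotonicity $V^{\pi_{t+1}}(s)\le V^{\pi_t}(s)$ (cf.~\eqref{eq:bnd_sum_pda_value0}); dividing by $\sum_t\beta_t=k(k+1)/2$ produces exactly the constant $2/[(1-\gamma)(k+1)]$ stated in the theorem. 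You instead telescope the \emph{unweighted} sum $\sum_{t=0}^{k-1}[V^{\pi_t}(s)-V^{\pi_{t+1}}(s)]=V^{\pi_0}(s)-V^{\pi_k}(s)$, which avoids the summation-by-parts step and yields the sharper rate $1/k$; this is precisely the route the paper takes for PMD in \cref{the:pmd_nonconvex}, and it implies the stated bound since $1/k\le 2/[(1-\gamma)(k+1)]$. The bookkeeping point you flag---that one should take $\lambda_{-1}=\lambda$ so the $t=0$ term of \eqref{eq:fun_decrease} has $\lambda_0-\lambda_{-1}=0$---is well spotted and is implicitly used in the paper as well.
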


\begin{proof}
Clearly, the relation \eqnok{eq:stepsizerul} is satisfied. It follows from \eqnok{eq:fun_decrease} that
\begin{equation}
\tfrac{\beta_k}{1-\gamma} [V^{\pi_{k+1}}(s) - V^{\pi_{k}}(s)] \le \beta_k \psi^{\pi_{k}}(s, \pi_{k+1}(s))
\le 0. \label{eq:fun_dec_pda}
\end{equation}
Taking the telescopic sum, we have
$
0\le \tsum_{t=0}^{k-1} [-\beta_t \psi^{\pi_{t}}(s, \pi_{t+1}(s))]  
\le \tfrac{1}{1-\gamma}\tsum_{t=0}^{k-1} \beta_t [V^{\pi_{t}}(s) - V^{\pi_{t+1}}(s)].
$
The indices $\bar k(s)$ are given by the one that minimizes $-\beta_t \psi^{\pi_{t}}(s, \pi_{t+1}(s))$ for any $s \in \cS$.
Also notice that 
\begin{align}
\tsum_{t=0}^{k-1} \beta_t[V^{\pi_{t}}(s) - V^{\pi_{t+1}}(s)] &=  \beta_0 V^{\pi_0(s)} + \tsum_{t=0}^{k-2}(\beta_{t+1} - \beta_{t}) V^{\pi_{t+1}}(s) -\beta_{k-1} V^{\pi_k}(s) \nn \\
&=  \tsum_{t=0}^{k-1} V^{\pi_t}(s) - kV^{\pi_k}(s) 
\le k[V^{\pi_0}(s) - V^{\pi^*}(s) ], \label{eq:bnd_sum_pda_value0}
\end{align}
where the first inequality follows from the monotonicity of $V^{\pi_t}$ due to \eqnok{eq:fun_dec_pda} 
and the second inequality holds since $V^{\pi_k}(s) \ge V^{\pi^*}(s)$.
Combining the above two relations, we obtain the result in \eqnok{eq:PDA_nonconvex1}.
The result in  \eqnok{eq:PDA_nonconvex2} follows immediately from \eqnok{eq:fun_decrease}
and \eqnok{eq:PDA_nonconvex1}.
\end{proof}

\vgap

Comparing~\cref{the:PDA_basic} and~\cref{the:pda_nonconvex} 
with~\cref{the:PMD_linear} and~\cref{the:pmd_nonconvex}, we can see that the
PDA method exhibits similar convergence behavior to the PMD method.
However, a possible limitation for the PDA method is that the relation in \eqnok{eq:PDA_nonconvex2} does not necessarily imply the stationarity of the policy $\pi_{\bar k(s)}(s)$ in the sense of~\eqref{eq:stationary_pt}, while the PMD method can possibly ensure stationarity as discussed after~\cref{the:pmd_nonconvex}.  

\subsection{Function Approximation in PDA} \label{sec:fun_app_PDA}
In this subsection, we discuss how to incorporate function approximation into the PDA method.
Similar to the PMD method, we also adopt the idea of using function approximation for the value functions.
However, different from the PMD method that requires function approximation
for an augmented value function $L^{\pi_k}(s,a)$, 
we will compute a function approximation
$\tilde Q(s, a; \theta_k)$, parameterized by $\theta_k$, only 
for the action-value function $Q^{\pi_k}(s, a)$ in \eqnok{eq:PDA_step1}, i.e.,
\beq \label{eq:policy_evaluation}
\tilde Q(s, a; \theta_k) \approx Q^{\pi_k}(s, a).
\eeq
As a result, the algorithmic parameters $\beta_k$ and $\lambda_k$ do not appear in the quantities of interest that we approximate. 
On the other hand, in PMD, the stepsize $\eta_k$ appears in $L^{\pi_k}$, which may impact the magnitude of the stochastic first-order information of the policy evaluation problem. It also required stronger assumptions in \eqnok{eq:strong_ass_pmd_nonconvex} to guarantee convergence of PMD method when $\tilde \mu_d < 0$.

\begin{algorithm}[H]
\caption{The policy dual averaging method with function approximation}
\begin{algorithmic}
\State {\bf Input:} $\beta_k \ge 0$, $\lambda_k \ge 0$ and initial policy $\pi_0(s) \in \cA$.
\For {$k =0,1,\ldots,$}

\State Define an approximate advantage function
\beq \label{eq:def_tpsi}
\tilde \psi(s, a; \theta_k) := \tilde Q(s, a; \theta_k) - \edits{\tilde Q(s, \pi_k(s); \theta_k) + h^a(s) - h^{\pi_k(s)}(s)}.
\eeq

\State Replace the update formulas in \eqnok{eq:PDA_step}-\eqnok{eq:PDA_step1} by
\begin{align} 
\pi_{k+1}(s) &= \argmin_{a \in \cA} \left\{ \tilde \Psi_k(s,a) := \tsum_{t=0}^k \beta_t \tilde \psi(s, a; \theta_{t}) + \lambda_k D(\pi_0(s), a) \right\} \label{eq:SPDA_step}\\
&= \argmin_{a \in \cA} \left \{ \tsum_{t=0}^k \beta_t [\tilde Q(s, a; \theta_{t}) + h^a(s)] + \lambda_k D(\pi_0(s), a)\right\}, \label{eq:SPDA_step1}
\forall s \in \cS.
\end{align}

\EndFor
\end{algorithmic} \label{alg:basic_inf_pda}
\end{algorithm}

Similar to PMD (c.f.~the discussion in~\cref{sec:func_approx_in_pmd}), we can consider linearly independent basis functions
or more powerful and sophisticated nonlinear function approximation methods, such as RKHS and
neural networks, to estimate $Q^{\pi_k}(s, a)$ via $\tilde Q(s,a;\theta_k)$. 

Also similar to PMD, PDA does not need to save $\pi_{k+1}(s)$ across all states $s$. Rather,~\eqnok{eq:SPDA_step1} tells us how to compute it from the parameters $\theta_{[k]}: =\{\theta_0, \ldots, \theta_{k}\}$ whenever needed, e.g., inside the policy evaluation procedure.
In fact, given $\theta_{[k]}$ and any $s \in \cS$, $\pi_{k+1}(s)$ can be computed by an explicit formula in some important cases. See~\eqnok{eq:PMD_step_finite} and the surrounding discussion for an example.
In a more general setting, $\pi_{k+1}(s)$ is represented as an optimal solution of problem \eqnok{eq:SPDA_step1}.
With a properly chosen $\lambda$, the objective function of \eqnok{eq:SPDA_step1} is strongly convex, and hence the policy $\pi_{k+1}(s)$ is uniquely defined. 
In many cases, the resulting~\eqnok{eq:SPDA_step1} can be solved efficiently using numerical optimization procedures, such as when the dimension of the action space is smaller than that of the state space.
The numerical efficiency can be further enhanced by observing the term
$
\tilde Q^k(s, a; \theta_{[k]}) := \tsum_{t=0}^k \beta_t \tilde Q(s, a; \theta_{t})
$
used in \eqnok{eq:SPDA_step1} admits the recursive formula 
\beq \label{eq:SPDA_recursive1}
\tilde Q^k(s, a; \theta_{[k]})
= \tilde Q^{k-1}(s, a; \theta_{[k-1]}) + \beta_k \tilde Q(s, a; \theta_{k}),
\eeq
with $\tilde Q^0(s, a; \theta_{[-1]}) = 0$.
When $\tilde Q(s, a; \theta)$ is linear w.r.t. $\theta$, the above formula further simplifies to a recursive updating on $\tsum_{t=0}^k \beta_t \theta_{t}$. 
In this way, we can also significantly save the memory required to store the parameters $\theta_{[t]}$.
For the more general case (e.g., $\tilde Q(s, a; \theta)$ is given by a neural network), in order to save both computation and memory, we can possibly modify the policy evaluation step in \eqnok{eq:policy_evaluation} so that we have a single set of parameters $\Theta_k$ satisfying
\beq \label{eq:SPDA_recursive2}
\tilde Q(s, a; \Theta_{k}) \approx \tilde Q(s, a; \Theta_{k-1}) +\beta_k Q^{\pi_{k}}(s, a)
\eeq
at the $k$-th iteration, with $\tilde Q(s, a; \Theta_{-1}) = 0$.
In other words, we would apply function approximation directly for estimating $\tsum_{t=0}^k \beta_t  Q^{\pi_{t}}(s, a)$
in a recursive manner. 
However, this would require us to design a modified policy evaluation procedure and to consider the impact of the algorithmic parameters $\beta_k$ for function approximation.
For the sake of simplicity, in this section we focus on the policy evaluation satisfying~\eqnok{eq:policy_evaluation} and assume exact solutions can be computed for \eqnok{eq:SPDA_step1}. 

\subsection{PDA for General State and Continuous/Finite Action Spaces}  \label{sec:continuous_action_PDA}
We now investigate the impact of the approximation error in~\eqnok{eq:policy_evaluation} on the convergence of the PDA method for solving RL problems. 
Observe that when estimating $Q^{\pi_{k}}$, there exist two possible types of stochasticity including: a) the sampling of $(s,a)$ from $\cS \times \cA$ according to some given initial distribution $\rho_\cS \times \rho_\cA$;
and b) the estimation of $Q^{\pi_{k}}(s, a)$ through the sampling of the Markov Chain induced by $\pi_{k}$. 
Similar to previous discussions, we use $\xi_k$ to denote these random samples generated at iteration $k$ and $F_k := \{\xi_1, \ldots, \xi_k\}$.
Let 
\beq \label{eq:def_app_error_pda}
\delta_{k}(s,a) := \tilde Q(s, a; \theta_{k}) - Q^{\pi_{k}}(s,a)
=
\underbrace{\mathbb{E}_{\xi_k}[\tilde Q(s, a; \theta_{k}) ] - Q^{\pi_k}(s,a)}_{\delta^{det}_k(s,a)} + \underbrace{\tilde Q(s, a; \theta_{k}) - \mathbb{E}_{\xi_k}[\tilde Q(s, a; \theta_{k}) ]}_{\delta^{sto}_k(s,a)}, \eeq
where $\bbe_{\xi_k}$ denotes the expectation w.r.t. $\xi_k$ conditioned on $F_{k-1}$.
We need to make the following assumption on these error terms:
\begin{align}
\bbe_{s \sim \nu^*}\left[|\delta_{k-1}^{det}(s, \pi_{k-1}(s))| + |\delta_{k-1}^{det}(s, \pi^*(s))|\right] \le \varsigma, \label{eq:bnd_error_spda}
\end{align}
for some $\varsigma \geq 0$.
Similar to~\eqnok{eq:weak_ass_pmd_convex1} for PMD, we do not need to explicitly bound the stochastic error term $\delta^{sto}_{k-1}$.
In addition, we need to assume that 
$
\tilde Q(s, a;\theta_k) + \tilde \mu_{Q} D(\pi_0(s), a)
$
is convex for some $\tilde \mu_Q \ge 0$ and denote
$
\tilde \mu_d := \mu_h - \tilde \mu_Q.
$
Note that $\tilde \mu_{Q}$ can be viewed as an estimation of $\mu_Q$.

Moreover, we assume that $\tilde Q(s, \cdot; \theta)$ and $Q^\pi(s, a)$ are Lipschitz continuous, i.e.,~\eqnok{eq:Lip_psi} and~\eqnok{eq:Lip_Q} still hold with constants $M_{\tilde Q}$ and $M_{Q}$, respectively.
We make a couple remarks.
First, note that if $\tilde Q$ is $M_{\tilde Q}$-Lipschitz continuous w.r.t.~$a$, then $\tilde \psi$ is $(M_{\tilde Q}\edits{+M_h})$-Lipschitz continuous.
Moreover, $\delta_t(s,a)$ is $(M_{\tilde Q}+ M_Q)$-Lipschitz continuous w.r.t. $a \in \cA$.
Second,~\cref{sec:appendix_pe} presents sufficient conditions for these assumptions to take place.

\begin{theorem} \label{thm:4-1}
Suppose $\tilde \mu_d \ge 0$ and that $\beta_k$ and $\lambda_k$ satisfy 
\beq \label{eq:stepsizerulSPDA}
\tilde \mu_k := \tilde \mu_d \tsum_{t=0}^k\beta_t + \lambda_k \ge 0 \ \
\mbox{and} \ \
\lambda_{k+1} \ge \lambda_{k}, \forall k \geq -1,
\eeq
where we denote $\tilde \mu_{-1} = \lambda_{-1}$.
Also let us denote $
\bar \beta_k : = \tsum_{t=0}^{k-1} \beta_t.
$ Then we have
\begin{align}
&(1-\gamma) {\bar \beta_k}^{-1} \tsum_{t=0}^{k-1} \beta_t \bbe[ f(\pi_{t}) - f(\pi^*)]  + { \bar \beta_k}^{-1} \tilde \mu_{k-1} \bbe[{\cal D}(\pi_{k}, \pi^*)] \nn\\
&\le \lambda_{k-1} {\bar \beta_k}^{-1} {\cal D}(\pi_0,\pi^*) + (M_{\tilde Q}^2 + M_h^2)  \bar \beta_k^{-1} \tsum_{t=0}^{k-1}(\beta_t^2   /\tilde \mu_{t-1} )
+ \varsigma.\label{eq:SPDA_main} 
\end{align}
where ${\cD}$ is defined in~\eqnok{eq:cD_defn}.
In particular, if $\tilde \mu_d > 0$, $\beta_k = k+1$ and $\lambda_k = \tilde \mu_d$, then 
\begin{align}
\tfrac{2(1-\gamma)}{k(k+1)} \tsum_{t=0}^{k-1} \{(t+1) \bbe[ f(\pi_{t-1}) - f(\pi^*)]\} + \tilde \mu_d \bbe[{\cal D}(\pi_{k}, \pi^*)]
\le \tfrac{2\tilde \mu_d  {\cal D}(\pi_0(s),\pi^*)} {k (k+1)} + \tfrac{4 (M_{\tilde Q}^2 + M_h^2)}{\tilde \mu_d (k+1)} + \varsigma. \label{eq:SPDA_main_strong} 
\end{align}
Moreover, if $\tilde \mu_d = 0$, $\beta_k = k+1$ and $\lambda_k = \lambda (k+1)^{3/2}$ with $\lambda = \sqrt{M_{\tilde Q}^2+M_h^2} / \sqrt{{\cal D}(\pi_0(s),\pi^*)}$, then
\begin{align}
\tfrac{2(1-\gamma)}{k(k+1)} \tsum_{t=0}^{k-1} \{t \bbe[ f(\pi_{t}) - f(\pi^*)]\} \le \tfrac{2 \lambda {\cal D}(\pi_0(s),\pi^*) \sqrt{k}  }{k+1}
+ \tfrac{4 (M_{\tilde Q}^2+M_h^2) \sqrt{k+1}}{\lambda k} + \varsigma.  \label{eq:SPDA_main_weak} 
\end{align}
\end{theorem}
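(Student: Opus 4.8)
The plan is to mirror the argument for the finite-action case that culminates in~\eqnok{eq:SPDA_main_finite}, replacing the linearity of $Q^{\pi_k}(s,\cdot)$ used there by the Lipschitz continuity~\eqnok{eq:Lip_psi_PDA} of the surrogate advantage $\tilde\psi$. First I would record the dual averaging recursion. Since $\tilde Q(s,\cdot;\theta)+\tilde\mu_Q D(\pi_0(s),\cdot)$ is convex and $h^{\cdot}(s)$ is $\mu_h$-strongly convex, each $\tilde\psi(s,\cdot;\theta_t)$ has convexity modulus $\tilde\mu_d=\mu_h-\tilde\mu_Q$, so the potential $\tilde\Psi_k(s,\cdot)$ is strongly convex with modulus $\tilde\mu_k\ge 0$ by~\eqnok{eq:stepsizerulSPDA}. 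Applying the three-point inequality of~\cref{lem:three_point} to $\tilde\Psi_k$ and telescoping exactly as in the proof of~\cref{the:PDA_basic} (the analog of~\eqnok{eq:PDA_common_rel}, i.e.~\eqnok{eq:spda_main_relation_convex}) yields, for every $s$ and $a$, a bound on $\tsum_{t}\beta_t\tilde\psi(s,\pi_{t+1}(s);\theta_t)$ in terms of $\tsum_t\beta_t\tilde\psi(s,a;\theta_t)$, $\lambda_{k-1}D(\pi_0(s),a)$, $-\tilde\mu_{k-1}D(\pi_k(s),a)$, and the accumulated $-\tsum_t\tilde\mu_{t-1}D(\pi_t(s),\pi_{t+1}(s))$ terms.

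Next I would pass from $\tilde\psi$ back to the true advantage through $\tilde\psi(s,a;\theta_t)=\psi^{\pi_t}(s,a)+\delta_t(s,a)$, with $\delta_t$ as in~\eqnok{eq:def_app_error_pda}. The key continuous-action step, which departs from the finite case, is the lower bound on the left-hand side: since $\tilde\psi(s,\cdot;\theta_t)$ is $M_{\tilde Q}$-Lipschitz and $\tilde\psi(s,\pi_t(s);\theta_t)=\delta_t(s,\pi_t(s))$ (because $\psi^{\pi_t}(s,\pi_t(s))=0$ by~\eqnok{eq:QV1} and~\eqnok{eq:def_advantage}), one gets $\tilde\psi(s,\pi_{t+1}(s);\theta_t)\ge \delta_t(s,\pi_t(s))-M_{\tilde Q}\|\pi_{t+1}(s)-\pi_t(s)\|$. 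Crucially, this routes the estimate through the error evaluated at $\pi_t(s)$ only, never at the correlated iterate $\pi_{t+1}(s)$. Setting $a=\pi^*(s)$, moving $\tsum_t\beta_t\psi^{\pi_t}(s,\pi^*(s))$ to the left, and using $D(\pi_t(s),\pi_{t+1}(s))\ge\tfrac12\|\pi_{t+1}(s)-\pi_t(s)\|^2$, each residual $\beta_t M_{\tilde Q}\|\pi_{t+1}(s)-\pi_t(s)\|-\tilde\mu_{t-1}D(\pi_t(s),\pi_{t+1}(s))$ is absorbed by Young's inequality into a term of order $\beta_t^2 M_{\tilde Q}^2/\tilde\mu_{t-1}$.

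I would then take $\bbe_{s\sim\nu^*}$ followed by the full expectation over the filtration. By~\cref{lem:monotonicity}, $-\tsum_t\beta_t\,\bbe_{s\sim\nu^*}[\psi^{\pi_t}(s,\pi^*(s))]$ becomes $(1-\gamma)\tsum_t\beta_t\,\bbe[f(\pi_t)-f^*]$, while $-\tilde\mu_{k-1}D(\pi_k(s),\pi^*(s))$ and $\lambda_{k-1}D(\pi_0(s),\pi^*(s))$ produce the $\tilde\mu_{k-1}{\cal D}(\pi_k,\pi^*)$ and $\lambda_{k-1}{\cal D}(\pi_0,\pi^*)$ terms. Unlike the finite case, no bias/variance splitting is needed: the error contribution $\tsum_t\beta_t[\delta_t(s,\pi^*(s))-\delta_t(s,\pi_t(s))]$ is controlled in absolute value directly by~\eqnok{eq:bnd_error_spda}, producing $\varsigma\,\bar\beta_k$. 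Dividing through by $\bar\beta_k=\tsum_{t=0}^{k-1}\beta_t$ gives~\eqnok{eq:SPDA_main}.

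Finally, the specialized bounds~\eqnok{eq:SPDA_main_strong} and~\eqnok{eq:SPDA_main_weak} follow by inserting the prescribed $\beta_t$ and $\lambda_k$ and estimating $\bar\beta_k=\Theta(k^2)$ together with $\tsum_{t=0}^{k-1}\beta_t^2/\tilde\mu_{t-1}$; in the convex case $\tilde\mu_d=0$ the choice $\lambda=M_{\tilde Q}/\sqrt{{\cal D}(\pi_0,\pi^*)}$ merely balances the $\lambda_{k-1}{\cal D}(\pi_0,\pi^*)/\bar\beta_k$ and $M_{\tilde Q}^2$-dependent contributions. The only genuinely new obstacle relative to the finite-action theorem is the Lipschitz-based lower bound of the second paragraph: because $Q^{\pi_k}$ is no longer linear in the action, one cannot expand the advantage via an inner product, and one must instead pass through the surrogate $\tilde\psi$ and the identity $\tilde\psi(s,\pi_t(s);\theta_t)=\delta_t(s,\pi_t(s))$ so that only $M_{\tilde Q}$ (and not $M_Q$ or $M_h$) enters the final rate; keeping the error evaluations exactly at $\pi_t(s)$ and $\pi^*(s)$ so they match~\eqnok{eq:bnd_error_spda} is the delicate bookkeeping point.
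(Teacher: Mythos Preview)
Your proposal is correct and follows essentially the same route as the paper: both start from the dual-averaging recursion~\eqnok{eq:spda_main_relation_convex}, use the Lipschitz bound $\tilde\psi(s,\pi_{t+1}(s);\theta_t)\ge \delta_t(s,\pi_t(s))-M_{\tilde Q}\|\pi_{t+1}(s)-\pi_t(s)\|$ together with $\psi^{\pi_t}(s,\pi_t(s))=0$, absorb the residual by Young's inequality against the $\tilde\mu_{t-1}D(\pi_t(s),\pi_{t+1}(s))$ term, and then take expectations, invoking~\cref{lem:monotonicity} and~\eqnok{eq:bnd_error_spda}. Your identification of the key bookkeeping point---evaluating the error only at $\pi_t(s)$ and $\pi^*(s)$ so that~\eqnok{eq:bnd_error_spda} applies directly---matches the paper's argument exactly.
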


\begin{proof}
Similar to \eqnok{eq:PDA_common_rel}, we can show that
\begin{align}
\tsum_{t=0}^{k-1} \beta_t  \tilde \psi(s, \pi_{t+1}(s), \theta_{t}) 
&\le \tsum_{t=0}^{k-1} \beta_t \tilde \psi(s, a, \theta_{t}) + \lambda_{k-1} \, \dist(\pi_0(s),a) - \tilde \mu_{k-1} \dist(\pi_{k}(s), a)  \nn\\
& \quad -\tsum_{t=0}^{k-1}\tilde \mu_{t-1} \dist(\pi_{t}(s), \pi_{t+1}(s)). \label{eq:spda_main_relation_convex}
\end{align}
Also note that $\tilde \psi(s, a; \theta_{t}) = \psi^{\pi_{t}}(s, a) + \delta_{t}(s, a)$ and \edits{$\tilde \psi(s,\cdot;\theta_k)$ is $(M_{\tilde Q}+M_h)$-Lipschitz continuous}, and so
\begin{align}
 &\tilde \psi(s, \pi_{t+1}(s), \theta_{t}) - \tilde{\psi}(s,a;\theta_t) \nn \\
 &=  
 -\psi^{\pi_t}(s,a) + \tilde \psi(s, \pi_{t+1}(s), \theta_{t}) -  \tilde \psi(s, \pi_{t}(s), \theta_{t}) +  \tilde \psi(s, \pi_{t}(s), \theta_{t}) \nn -\delta_t(s,a) \\
 &\ge -\psi^{\pi_t}(s,a)- (M_{\tilde Q}+M_h) \|\pi_{t+1}(s) - \pi_{t}(s)\| + \tilde \psi(s, \pi_{t}(s), \theta_{t}) -\delta_t(s,a) \nn \\
 &=  -\psi^{\pi_t}(s,a)- (M_{\tilde Q}+M_h) \|\pi_{t+1}(s) - \pi_{t}(s)\| + \psi^{\pi_{t}}(s, \pi_{t}(s)) + \delta_{t}(s, \pi_{t}(s)) -\delta_t(s,a) \nn \\
 &= -\psi^{\pi_t}(s,a)- (M_{\tilde Q}+M_h) \|\pi_{t+1}(s) - \pi_{t}(s)\| + \delta_{t}(s, \pi_{t}(s)) - \delta_t(s,a). \label{eq:pda_noisy_psi_bound}
\end{align}
Combining these relations and rearranging the terms, we obtain
\begin{align*}
&-\tsum_{t=0}^{k-1} \beta_t \psi^{\pi_{t}}(s, a) + \tilde \mu_{k-1} \dist(\pi_{k}(s), a) 
\\
&\le \lambda_{k-1} \, \dist(\pi_0(s),a) 
+ \tsum_{t=0}^{k-1}
\beta_t [\delta_{t}(s, a) -\delta_{t}(s, \pi_{t}(s)) ] \\
&\quad + \tsum_{t=0}^{k-1}\left[ \beta_t (M_{\tilde Q}+M_h) \|\pi_{t+1}(s) - \pi_{t}(s)\| - \tilde \mu_{t-1} \dist(\pi_{t}(s), \pi_{t+1}(s))\right]\\
&\le \lambda_{k-1} \, \dist(\pi_0(s),a) 
+ \tsum_{t=0}^{k-1}
\beta_t [(\delta_{t}^{det}(s, a) + \delta_{t}^{sto}(s, a)) -(\edits{\delta_{t}^{det}(s, \pi_{t}(s)) + \delta_{t}^{sto}(s, \pi_{t}(s))})] \\
&\quad +  (M_{\tilde Q}+M_h)^2\tsum_{t=0}^{k-1}(\beta_t^2 /(2\tilde \mu_{t-1})).
\end{align*}
\sloppy Setting $a = \pi^*(s)$, taking expectation over $s \sim \nu^*$ and then over $F_k$, noting~$\mathbb{E}_{F_k}[\delta^{sto}_t(s,\pi_t(s))] = \mathbb{E}_{F_k}[\delta^{sto}_t(s,\pi^*(s))] = 0$, and using the assumption in \eqnok{eq:bnd_error_spda} and~\cref{lem:monotonicity},
we then have
\begin{align*}
&(1-\gamma)  \tsum_{t=0}^{k-1} \beta_t \bbe[ f(\pi_{t}) - f(\pi^*)] + \tilde \mu_{k-1} \bbe[{\cal D}(\pi_{k}, \pi^*)] \\
&\le  \lambda_{k-1} \, {\cal D}(\pi_0(s),a) 
+ \tsum_{t=0}^{k-1} \beta_t \varsigma + (M_{\tilde Q}+M_h)^2\tsum_{t=0}^{k-1}(\beta_t^2 /(2\tilde \mu_{t-1})),
\\
&\le  \lambda_{k-1} \, {\cal D}(\pi_0(s),a) 
+ \tsum_{t=0}^{k-1} \beta_t \varsigma + (M_{\tilde Q}^2+M_h^2)\tsum_{t=0}^{k-1}(\beta_t^2 /\tilde \mu_{t-1}),
\end{align*}
where the last line is by $(p+q)^2 \leq 2p^2 + 2q^2$.
This clearly implies \eqnok{eq:SPDA_main}.
The result in \eqnok{eq:SPDA_main_strong}
follows from straightforward calculations, and \eqnok{eq:SPDA_main_weak} 
can be verified similarly in view of the observation that
$
 \tsum_{t=0}^{k-1}(\beta_t^2   /\tilde \mu_{t-1} )=
 \tfrac{1}{\lambda}\tsum_{t=0}^{k-1}\sqrt{t+1} \le   \tfrac{2}{3\lambda} (k+1)^{3/2}.
$
\end{proof}
\vgap
In comparison to the PMD method, PDA does not need to fix the number of iterations $k$ a priori when $\mu_h = 0$.
As a result, the stepsize selection of the PDA method is more adaptive than PMD for this case.
% Note that an alternative way to bound $\psi^{\pi_{t}}(s, \pi_{t+1}(s))$ in \eqnok{eq:bnd_adv_pda} is to modify~\cref{lem:fun_decrease}, similarly to how the linear convergence for the SPMD method was established in \cite{LanPMD2021}.

Similar to~\cref{cor:PMD_general_cvg}, we cannot generally have convenient knowledge of $\tilde \mu_d$ since it depends on the (unknown) lower curvature of $\tilde Q$. In the special case when $\cA$ is given as a finite set of actions, we can guarantee global convergence of PDA, similar to the PMD method in~\cref{thm:pmd_finite}. 
To setup notation, recall that we express $Q^{\pi_k}(s,a) = \langle Q^{\pi_k}_0(s), a \rangle$ and $\tilde Q(s,a;\theta_k) = \langle \tilde Q_0(s; \theta_k), a \rangle$ for some vectors $Q^{\pi_k}_0(s), \tilde{Q}_0(s; \theta_k) \in \mathbb{R}^{\nA}$ as described in~\eqref{eq:PMD_step_finite} and the surrounding discussion.
We write $\delta_{0,k}^{det}(s) := \mathbb{E}_{\xi_k}[\tilde Q_0(s;\theta_k)] - Q^{\pi_k}_0(s)$ and $\delta_{0,k}^{sto}(s) := \tilde{Q}_0(s;\theta_k) - \mathbb{E}_{\xi_k}[\tilde{Q}_0(s;\theta_k)]$, which are similar to the terms appearing in~\eqnok{eq:def_app_error_pda}.
In the following result, we will assume
\begin{align}
    \bbe_{s \sim \nu^*} [ \|\delta_{0,k}^{det}(s)\|_\infty] &\le \bar \varsigma,\label{eq:pda_finite_assum1}\\
    \bbe_{\xi_k}\left\{\bbe_{s \sim \nu^*}[\|\delta_{0,k}^{sto}(s)\|_\infty^2] \right\}&\le \bar{\sigma}^2, \label{eq:pda_finite_assum3}
\end{align}
for some $\bar{\varsigma} > 0$ and $\bar{\sigma} > 0$. These assumptions are identical to~\eqnok{eq:pmd_variance_bound_1}-\eqnok{eq:pmd_variance_bound_2} for PMD.
\begin{theorem} \label{thm:pda_finite}
    Suppose $\cA$ is given as a finite set of actions, and we extend it to continuous actions via randomized policies as detailed in~\eqnok{eq:finite_Q_function} and the surrounding discussion.
    If~\eqnok{eq:pda_finite_assum1}-\eqnok{eq:pda_finite_assum3} is satisfied, then the convergence results from~\cref{thm:4-1} hold with
    \begin{align*}
        \tilde \mu_d = \mu_h, \quad M_{\tilde{Q}} = \sqrt{2}[\tfrac{\bar{c}}{1-\gamma} + \bar{\sigma}], \quad \bar{D}_{\cA, \|\cdot\|} = 2, \quad\text{and}~ \varsigma = \bar{\varsigma} \bar{D}_{\cA, \|\cdot\|},
    \end{align*}
    where $\|\cdot\| = \|\cdot\|_1$.
\end{theorem}
\begin{proof}
    We modify~\eqnok{eq:pda_noisy_psi_bound} into
    \begin{align*}
        &\tilde{\psi}(s,\pi_{t+1}(s);\theta_t) - \tilde{\psi}(s,a;\theta_t)
        \\
        &= 
        -\psi^{\pi_t}(s,a)
        +
        \langle Q^{\pi_t}(s) + \delta_{0,t}^{sto}(s), \pi_{t+1}(s) - \pi_t(s) \rangle + [h^{\pi_{t+1}(s)}(s) - h^{\pi_t(s)}(s)] 
        \\
        &\hspace{10pt}
        + \langle \delta_{0,t}^{det}(s), \pi_{t+1}(s) - a \rangle + \langle \delta_{0,t}^{sto}(s), \pi_{t}(s) - a \rangle \\
        & \geq
        -\psi^{\pi_t}(s,a) - [\|Q^{\pi_t}(s)\|_\infty + \|\delta_{0,t}^{sto}(s)\|_\infty + M_h]\|\pi_{t+1}(s) - \pi_t(s)\| \\
        &\hspace{10pt}
        - \|\delta_{0,t}^{det}(s)\|_\infty \|\pi_{t+1}(s) - a \| + \langle \delta_{0,t}^{sto}(s), \pi_{t}(s) - a \rangle, \quad \forall a \in \cA.
    \end{align*}
    The rest of the proof follows similarly to the proof for~\cref{thm:4-1}.
\end{proof}

\vgap
We now consider the case when $\tilde \mu_d < 0$. 
Unlike the counterpart~\cref{the:PMD_fun_stationary} for PMD, here we only need to assume the Lipschitz continuity of $\tilde Q(s,a;\theta)$ and $Q^\pi(s,a)$
w.r.t.~$a \in \cA$ to establish a convergence guarantee.

\begin{theorem} \label{the:PDA_nonconvex}
Suppose that $\tilde \mu_d < 0$.
Assume that
the number of iterations $k$
is fixed a priori, $\lambda_t = \lambda = k(k+1) |\tilde \mu_d|$, and $\beta_t = t+1$,
$t=0, \ldots, k-1$. then for any $s \in \cS$, there exist iteration indices
$\bar k (s)$ s.t.
\begin{align}
-  \tfrac{(M_Q+M_{\tilde Q})^2}{|\tilde \mu_d|(k+1)} 
\le 
-\psi^{\pi_{\bar k{s}}}(s, \pi_{\bar k(s)+1}(s))
\le  
\tfrac{2[V^{\pi_0}(s)  - V^{\pi^*}(s) ]}{k+1}
 + \tfrac{3 (M_Q+M_{\tilde Q})^2}{(1-\gamma)|\tilde \mu_d| (k+1)}. \label{eq:PDA_nonconex_stoc1}
\end{align}
Moreover, we have
\begin{align}
 &\tfrac{\tilde \mu_{\bar k(s) }}{2\beta_{\bar k(s)}} \dist(\pi_{\bar k(s)+1}(s), \pi_{\bar k(s)}(s))+ \tfrac{\tilde \mu_{\bar k(s)-1}}{2\beta_{\bar k(s)}} \dist(\pi_{\bar k(s)}(s), \pi_{\bar k(s)+1}(s))\nn\\
 &\le \tfrac{2[V^{\pi_0}(s)  - V^{\pi^*}(s) ]}{k+1}
 + \tfrac{3 (M_Q+M_{\tilde Q})^2}{(1-\gamma)|\tilde \mu_d| (k+1)}+
 \tfrac{2 (M_Q+M_{\tilde Q})^2}{|\tilde \mu_d| (k+1)}, \label{eq:PDA_nonconex_stoc2}
\end{align}
where $\tilde \mu_t = [t(t+1) \tilde \mu_d/2+ k(k+1) |\tilde \mu_d|$, $t = 0, \ldots, k-1$.
\end{theorem}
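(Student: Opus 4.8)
The plan is to mirror the proof of the exact nonconvex result~\cref{the:pda_nonconvex} while carrying the approximation error $\delta_t(s,a)=\tilde Q(s,a;\theta_t)-Q^{\pi_t}(s,a)$ through the argument, exploiting that in PDA the prox-center $\pi_0$ is frozen so that $\delta_t$ enters only through its \emph{variation} over the action space. First I would establish a function-approximation analogue of~\eqnok{eq:fun_decrease}. Starting from the optimality condition of the subproblem~\eqnok{eq:SPDA_step} (the three-point inequality of~\cref{lem:three_point} applied to $\tilde\Psi_k$ with modulus $\tilde\mu_k$), the telescoping identity behind~\eqnok{eq:spda_main_relation_convex}, and the facts $\psi^{\pi_k}(s,\pi_k(s))=0$ and $\tilde\psi(s,a;\theta_k)=\psi^{\pi_k}(s,a)+\delta_k(s,a)$, I would obtain, after setting $\lambda_t\equiv\lambda$ so the $D(\pi_0(s),\cdot)$ terms cancel,
\[
\beta_k\psi^{\pi_k}(s,\pi_{k+1}(s)) \le \beta_k[\delta_k(s,\pi_k(s))-\delta_k(s,\pi_{k+1}(s))] - \tilde\mu_k D(\pi_{k+1}(s),\pi_k(s)) - \tilde\mu_{k-1} D(\pi_k(s),\pi_{k+1}(s)).
\]
Crucially, the error survives only as the difference $\delta_k(s,\pi_k(s))-\delta_k(s,\pi_{k+1}(s))$, which by the $(M_Q+M_{\tilde Q})$-Lipschitz continuity of $\delta_k$ is at most $(M_Q+M_{\tilde Q})\|\pi_{k+1}(s)-\pi_k(s)\|$; this is precisely why no bound on the magnitude of $\delta_k$ is needed, in contrast to~\cref{the:PMD_fun_stationary}.

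Writing $M:=M_Q+M_{\tilde Q}$, I would next absorb this Lipschitz term by Young's inequality against half of the strongly-convex distance terms (using $D(\cdot,\cdot)\ge\tfrac12\|\cdot\|^2$), which leaves the residual
\[
\beta_k\psi^{\pi_k}(s,\pi_{k+1}(s)) \le C_k - \tfrac{\tilde\mu_k}{2}D(\pi_{k+1}(s),\pi_k(s)) - \tfrac{\tilde\mu_{k-1}}{2}D(\pi_k(s),\pi_{k+1}(s)), \qquad C_k:=\tfrac{\beta_k^2 M^2}{\tilde\mu_k+\tilde\mu_{k-1}}.
\]
The stepsize choice guarantees $\tilde\mu_t\ge\tfrac12 k(k+1)|\tilde\mu_d|$ for all $t\le k-1$, so $C_t/\beta_t\le M^2/(|\tilde\mu_d|(k+1))$. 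Since the distance terms are nonnegative, discarding them gives $\psi^{\pi_k}(s,\pi_{k+1}(s))\le C_k/\beta_k$, which immediately yields the lower bound on $-\psi$ in~\eqnok{eq:PDA_nonconex_stoc1}; retaining them (divided by $\beta_{\bar k(s)}$) yields the residual estimate~\eqnok{eq:PDA_nonconex_stoc2}.

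For the upper bound on $-\psi$ I would convert the pointwise estimate into a value recursion exactly as in~\eqnok{eq:PDA_bnd_Adv2}: applying~\cref{lem:performance_diff_deter} with $\psi^{\pi_k}(q,\pi_{k+1}(q))\le C_k/\beta_k$ for all $q$ and $\kappa_s^{\pi_{k+1}}(\{s\})\ge1-\gamma$ gives $V^{\pi_{k+1}}(s)-V^{\pi_k}(s)\le\psi^{\pi_k}(s,\pi_{k+1}(s))+\tfrac{\gamma}{1-\gamma}\tfrac{C_k}{\beta_k}$. Taking the $\beta_t$-weighted telescoping sum, invoking the Abel-summation identity~\eqnok{eq:bnd_sum_pda_value0}, choosing $\bar k(s)$ as the index minimizing $-\psi^{\pi_t}(s,\pi_{t+1}(s))$, and bounding this minimum by the $\beta_t$-weighted average (dividing by $\tsum_{t=0}^{k-1}\beta_t=k(k+1)/2$) would produce the right-hand side of~\eqnok{eq:PDA_nonconex_stoc1}.

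The main obstacle, and the only genuine departure from the exact case, is that $V^{\pi_t}$ is no longer monotone: the per-iteration increase can be as large as $C_t/(\beta_t(1-\gamma))$. Hence~\eqnok{eq:bnd_sum_pda_value0}, which in the exact proof used monotonicity to conclude $\tsum_t\beta_t[V^{\pi_t}(s)-V^{\pi_{t+1}}(s)]\le k[V^{\pi_0}(s)-V^{\pi^*}(s)]$, must be corrected by the accumulated error $V^{\pi_j}(s)-V^{\pi_0}(s)\le\tfrac{1}{1-\gamma}\tsum_{\tau<j}C_\tau/\beta_\tau$. I would therefore estimate $\tsum_t C_t/\beta_t=O(M^2/|\tilde\mu_d|)$ and $\tsum_t C_t=O(kM^2/|\tilde\mu_d|)$ and verify that, after dividing by $\tsum_t\beta_t$, these contribute exactly the stated $O(1/(k+1))$ error terms; confirming that the accumulated constants fit inside the factors $3$ and $2$ in~\eqnok{eq:PDA_nonconex_stoc1}--\eqnok{eq:PDA_nonconex_stoc2} is the most delicate bookkeeping.
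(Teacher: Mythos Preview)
Your proposal is correct and follows essentially the same route as the paper: both derive the per-iteration inequality $\beta_k\psi^{\pi_k}(s,\pi_{k+1}(s))\le \beta_k[\delta_k(s,\pi_k(s))-\delta_k(s,\pi_{k+1}(s))]-\tilde\mu_k D-\tilde\mu_{k-1}D$ from the three-point lemma for $\tilde\Psi_k$, absorb the Lipschitz error into half of the distance terms via Young's inequality to obtain~\eqnok{eq:bnd_pda_dist_stationary}, convert to a value recursion as in~\eqnok{eq:PDA_bnd_Adv2}, and then handle the loss of monotonicity of $V^{\pi_t}$ by the uniform bound $V^{\pi_j}(s)\le V^{\pi_0}(s)+\tfrac{(M_Q+M_{\tilde Q})^2}{2(1-\gamma)|\tilde\mu_d|}$ before Abel-summing. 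The only cosmetic difference is that the paper selects $\bar k(s)$ to minimize the \emph{shifted} nonnegative quantity $-[\psi^{\pi_t}(s,\pi_{t+1}(s))-C_t/\beta_t]$ rather than $-\psi^{\pi_t}$ itself, but since the weighted-minimum-versus-average inequality holds without a sign assumption and $C_t/\beta_t\le (M_Q+M_{\tilde Q})^2/(|\tilde\mu_d|(k+1))$ uniformly, your choice yields the same bounds.
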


\begin{proof}
Similar to \eqnok{eq:PDA_bnd_Adv1}, we can show that
\begin{align*}
&\tilde \psi(s, \pi_{k+1}(s), \theta_{k})
-\tfrac{1}{\beta_k} |\lambda_{k+1} - \lambda_{k}| D(\pi_0(s), \pi_k(s))
\\
&\le \tilde \psi(s, \pi_{k}(s), \theta_{k}) - \tfrac{\tilde \mu_k}{\beta_k} \dist(\pi_{k+1}(s), \pi_{k}(s)) - \tfrac{\tilde \mu_{k-1}}{\beta_k} \dist(\pi_{k}(s), \pi_{k+1}(s)),
\end{align*}
which in view of the definition of $\delta_k(s,a)$ in \eqnok{eq:def_app_error_pda} and $\psi^{\pi_{k}}(s, \pi_{k}(s)) =0$ then implies that
\begin{align*}
&\psi^{\pi_{k}}(s, \pi_{k+1}(s)) + \delta_{k}(s,\pi_{k+1}(s)) 
-\tfrac{1}{\beta_k} |\lambda_{k+1} - \lambda_{k}| D(\pi_0(s), \pi_k(s)) \\
&\le   \delta_{k}(s,\pi_{k}(s))  - \tfrac{\tilde \mu_k}{\beta_k} \dist(\pi_{k+1}(s), \pi_{k}(s)) - \tfrac{\tilde \mu_{k-1}}{\beta_k} \dist(\pi_{k}(s), \pi_{k+1}(s))\\
&\le \delta_{k}(s,\pi_{k}(s)) - \tfrac{1}{4 \beta_k}(\tilde \mu_k + \tilde \mu_{k-1}) \|\pi_{k+1}(s) - \pi_{k}(s)\|^2 \\
&\quad - \tfrac{\tilde \mu_k}{2\beta_k} \dist(\pi_{k+1}(s), \pi_{k}(s))- \tfrac{\tilde \mu_{k-1}}{2\beta_k} \dist(\pi_{k}(s), \pi_{k+1}(s)).
\end{align*}
Also note that
\begin{align*}
& \delta_{k}(s,\pi_{k}(s)) -\delta_{k}(s,\pi_{k+1}(s))  - \tfrac{1}{4 \beta_k}(\tilde \mu_k + \tilde \mu_{k-1}) \|\pi_{k+1}(s) - \pi_{k}(s)\|^2\\
&\le (M_Q+M_{\tilde Q}) \|\pi_{k+1}(s) -\pi_{k}(s)\| - \tfrac{1}{4 \beta_k}(\tilde \mu_k + \tilde \mu_{k-1}) \|\pi_{k+1}(s) - \pi_{k}(s)\|^2\\
&\le \tfrac{\beta_k (M_Q+M_{\tilde Q})^2}{\tilde \mu_k + \tilde \mu_{k-1}}.
\end{align*}
Combining these two relations, we have
\begin{align}
\psi^{\pi_{k}}(s, \pi_{k+1}(s)) + \tfrac{\tilde \mu_k}{2\beta_k} \dist(\pi_{k+1}(s), \pi_{k}(s))+ \tfrac{\tilde \mu_{k-1}}{2\beta_k} \dist(\pi_{k}(s), \pi_{k+1}(s)) \nn
\le 
\tfrac{|\lambda_{k+1} - \lambda_{k}| D(\pi_0(s), \pi_k(s))}{\beta_k}  + \tfrac{\beta_k (M_Q+M_{\tilde Q})^2}{\tilde \mu_k + \tilde \mu_{k-1}}. \label{eq:bnd_pda_dist_stationary}
\end{align}
Using the above relation, we can show similarly to \eqnok{eq:PDA_bnd_Adv2} that
\begin{align}
V^{\pi_{k+1}}(s) - V^{\pi_{k}}(s)
&\le \psi^{\pi_{k}}(s, \pi_{k+1}(s)) + \tfrac{\gamma |\lambda_{k+1} - \lambda_{k}| D(\pi_0(s), \pi_k(s))}{(1-\gamma)\beta_k}  
+\tfrac{\gamma \beta_k (M_Q+M_{\tilde Q})^2}{(1-\gamma)(\tilde \mu_k + \tilde \mu_{k-1})}. \label{eq:bnd_pda_dist_stationary1}
\end{align}
Using the fact that $\lambda_k = \lambda_{k-1}$ in the previous two relations, we conclude that
\begin{align*}
\beta_k[V^{\pi_{k+1}}(s) - V^{\pi_{k}}(s)] - \tfrac{\gamma \beta_k^2 (M_Q+M_{\tilde Q})^2}{(1-\gamma)(\tilde \mu_k + \tilde \mu_{k-1})} 
\le \beta_k \psi^{\pi_{k}}(s, \pi_{k+1}(s)) \le \tfrac{\beta_k^2 (M_Q+M_{\tilde Q})^2}{\tilde \mu_k + \tilde \mu_{k-1}}. 
\end{align*}
Rearranging terms in the above inequality, we then have
\begin{align}
0 \le -\beta_k \big[ \psi^{\pi_{k}}(s, \pi_{k+1}(s)) - \tfrac{\beta_k (M_Q+M_{\tilde Q})^2}{\tilde \mu_k + \tilde \mu_{k-1}}\big]
\le \beta_k[V^{\pi_{k}}(s) - V^{\pi_{k+1}}(s) ] + \tfrac{ \beta_k^2 (M_Q+M_{\tilde Q})^2}{(1-\gamma)(\tilde \mu_k + \tilde \mu_{k-1})}. \label{eq:spda_nonconvex}
\end{align}
Moreover, summing up \eqnok{eq:bnd_pda_dist_stationary} and \eqnok{eq:bnd_pda_dist_stationary1}, we have
$
V^{\pi_{k+1}}(s) - V^{\pi_{k}}(s) \le \tfrac{ \beta_k (M_Q+M_{\tilde Q})^2}{(1-\gamma)(\tilde \mu_k + \tilde \mu_{k-1})}$,
which implies that
\[
V^{\pi_{k}}(s) \le V^{\pi_{0}}(s) + \tsum_{t=0}^{k-1} \tfrac{ \beta_t (M_Q+M_{\tilde Q})^2}{(1-\gamma)(\tilde \mu_t + \tilde \mu_{t-1})}
\le V^{\pi_{0}}(s) + \tfrac{(M_Q+M_{\tilde Q})^2}{2(1-\gamma) |\tilde \mu_d| },
\]
where the last inequality follows from $\tilde \mu_t = \tilde \mu_d \tsum_{j=0}^t \beta_j + \lambda_k \ge  |\tilde \mu_d| k(k+1)/2$.
It then follows from the selection of $\beta_t = t+1$ and the previous inequality that 
\begin{align}
\tsum_{t=0}^{k-1} \beta_t[V^{\pi_{t}}(s) - V^{\pi_{t+1}}(s)] 
&=  \tsum_{t=0}^{k-1} V^{\pi_t}(s) - kV^{\pi_k}(s) \nn \\
&\le k[V^{\pi_0}(s) + \tfrac{(M_Q+M_{\tilde Q})^2}{2(1-\gamma) |\tilde \mu_d| } - V^{\pi_k}(s)] \nn\\
&\le k[V^{\pi_0}(s) + \tfrac{(M_Q+M_{\tilde Q})^2}{2(1-\gamma) |\tilde \mu_d| } - V^{\pi^*}(s) ]. \label{eq:bnd_sum_pda_value}
\end{align}
Taking the telescopic sum of \eqnok{eq:spda_nonconvex},
and utilizing the above bound in \eqnok{eq:bnd_sum_pda_value}, we then have
\begin{align*}
- \tsum_{t=0}^{k-1} \beta_k \big[ \psi^{\pi_{k}}(s, \pi_{k+1}(s)) - \tfrac{\beta_k (M_Q+M_{\tilde Q})^2}{\tilde \mu_k + \tilde \mu_{k-1}}\big]
&\le \tsum_{t=0}^{k-1} \beta_t[V^{\pi_{t}}(s) - V^{\pi_{t+1}}(s)] +  \tsum_{t=0}^{k-1} \tfrac{ \beta_k^2 (M_Q+M_{\tilde Q})^2}{(1-\gamma)(\tilde \mu_k + \tilde \mu_{k-1})} \\
&\le k[V^{\pi_0}(s) + \tfrac{(M_Q+M_{\tilde Q})^2}{2(1-\gamma) |\tilde \mu_d| } - V^{\pi^*}(s) ]
+  \tsum_{t=0}^{k-1} \tfrac{ \beta_k^2 (M_Q+M_{\tilde Q})^2}{(1-\gamma)(\tilde \mu_k + \tilde \mu_{k-1})}\\
&\le k\big[V^{\pi_0}(s)  - V^{\pi^*}(s)  +  \tfrac{3(M_Q+M_{\tilde Q})^2}{2(1-\gamma) |\tilde \mu_d| }\big],
\end{align*}
where the last inequality follows from the fact that
$\tsum_{t=0}^{k-1} \tfrac{\beta_t^2}{\tilde \mu_t + \tilde \mu_{t-1}}
\le \tsum_{t=0}^{k-1} \tfrac{(t+1)^2}{|\tilde \mu_d| k(k+1)} \le \tfrac{k}{|\tilde \mu_d|}$.

Noticing the first inequality in \eqnok{eq:spda_nonconvex}, it then follows
from the above inequality by choosing $\bar k(s)$ as the iteration with the smallest 
value of $-\big[ \psi^{\pi_{k}}(s, \pi_{k+1}(s)) - \tfrac{\beta_k (M_Q+M_{\tilde Q})^2}{\tilde \mu_k + \tilde \mu_{k-1}}\big]$ that
\begin{align*}
- \tfrac{k(k+1)}{2} \big[ \psi^{\pi_{\bar k(s)}}(s, \pi_{\bar k(s)+1}(s)) - \tfrac{\beta_{\bar k(s)} (M_Q+M_{\tilde Q})^2}{\tilde \mu_{\bar k(s)} + \tilde \mu_{\bar k(s)-1}}\big]
\le k\big[V^{\pi_0}(s)  - V^{\pi^*}(s)  +  \tfrac{3(M_Q+M_{\tilde Q})^2}{2(1-\gamma) |\tilde \mu_d| }\big].
\end{align*}
The result in \eqnok{eq:PDA_nonconex_stoc1} then clearly follows from the above inequality and the fact that
$
0\le \tfrac{\beta_{\bar k(s)}}{\tilde \mu_{\bar k(s)} + \tilde \mu_{\bar k(s)-1}} \le  \tfrac{1}{|\tilde \mu_d|(k+1)}$.
The result in \eqnok{eq:PDA_nonconex_stoc2}
follows immediately from \eqnok{eq:PDA_nonconex_stoc1} and \eqnok{eq:bnd_pda_dist_stationary}.
\end{proof}

\vgap

We make a few remarks about the above results.
First,~\cref{the:PDA_nonconvex} appears to stronger than the corresponding one in~\cref{the:PMD_fun_stationary} for
PMD in the sense that it requires much weaker assumptions. 
Second, it is unclear if the result above also implies convergence to a stationary point; see the discussion after~\cref{the:pda_nonconvex}.
Third, while the number of iterations $k$ does not need to be fixed a priori when $\tilde \mu_d \ge 0$,~\cref{the:PDA_nonconvex} does require us to fix $k$ when $\tilde \mu_d < 0$, which helps reduce $\lambda_{k+1} -\lambda_k$. 
It will be interesting to see if more adaptive selections of $\lambda_k$ are possible for the PDA method when $\tilde \mu_d < 0$.

\section{Numerical Experiments} \label{sec:numerical_experiments}
We now compare the performance of policy dual averaging (PDA) against several algorithms from Stablebaselines3~\cite{raffin2021stable}, a state-of-the-art RL library.
We implemented PDA over policy mirror descent (PMD) for a couple reasons.
First, we only need to estimate the simpler action-value function rather than the augmented form in~\eqnok{eq:aug_value_function}.
Second, when the Q-function is convex, the PDA stepsize does not need to specify the number of iterations a priori like in PMD.

Let us now provide some implementation details.\footnote{Code available at \url{https://github.com/jucaleb4/RL-general-action-state}.}
We used a discount factor of $\gamma=0.99$. 
For function approximation, we considered (linear) random Fourier features~\cite{rahimi2007random} and neural networks. 
More details about the former can be found in~\cref{sec:appendix_pe}.
For the latter, we employed a fully connected neural network with 2 hidden layers, where each layer consists of an affine function of width 64 with tanh activation function, which is the same architecture used in Stablebaselines3~\cite{raffin2021stable}.
For policy evaluation, we estimated the value function by a simple Monte-Carlo approach while bootstrapping the remaining cost-to-go values.
Then to update the linear function approximation, we used \texttt{sklearn}'s stochastic gradient descent with the default settings and 100 epochs.
To update the neural network, we used the same update scheme as in Stablebaselines3.
\revise{Additionally, we tuned PDA's stepsize to improve the empirical performance. 
We performed a grid search over both the ``convex stepsize'' from~\cref{thm:4-1} and ``nonconvex stepsize'' from~\cref{the:PDA_nonconvex} as well as the stepsize magnitude $\lambda$ therein, and we chose the best one from the tuning phase. 
We also applied a grid search to tune the stepsize for other algorithms that we compared against in our experiments}.
Throughout this section, we refer to ``samples'' as a single timestep in the environment. 

\subsection{GridWorld with Traps}
We first consider a 2D GridWorld example over a 10x10 grid. 
An agent wants to reach a target by moving in one of the four cardinal directions each time step.
The agent suffers a unit cost for each step and an addition cost of 5 when they land on a trap spot; see also~\cite{dann2014policy}.
Once the agent reaches the target, no further cost is incurred.
The state space consists of the (x,y)-coordinates for both the agent and target, which are randomly chosen when the environment is initialized.
Thus, the state space is of size $\vert \mathcal S \vert = 10^4$. 
Ten randomly placed traps (dependent only on the environment seed) are set.
To avoid state enumeration, we use linear function approximation where the strongly convex function $\omega$  from~\eqnok{eq:omega_strong_convexity} is Shannon entropy. 
We also use a neural network where $\omega$ can either be Shannon entropy (so the Bregman distance in~\eqnok{eq:omega_strong_convexity} is the KL-divergence) or negative Tsallis entropy~\cite{tsallis1988possible,li2023policy}.

\cref{fig:gw} compares the various PDA algorithms against proximal policy optimization (PPO)~\cite{schulman2017proximal} and deep Q-network (DQN)~\cite{mnih2015human} as implemented in Stablebaselines3~\cite{raffin2021stable}.
We see the best performing PDA algorithm is with neural networks and Shannon entropy, labeled as \textit{pda-nn-kl}.
On average, it outperforms the closest competitor, PPO, after 50,000 samples.
Meanwhile, PDA with neural networks and Tsallis entropy (labeled \textit{pda-nn-tsallis}) eventually matches PPO's performance on average.
In contrast, both DQN (labeled \textit{dqn}) and PDA with linear function approximation (labeled \textit{pda-lin}) cannot learn a good policy.

\begin{figure}[t]
\begin{minipage}[t]{0.37\linewidth}
    \centering
    \includegraphics[width=\textwidth,valign=b]{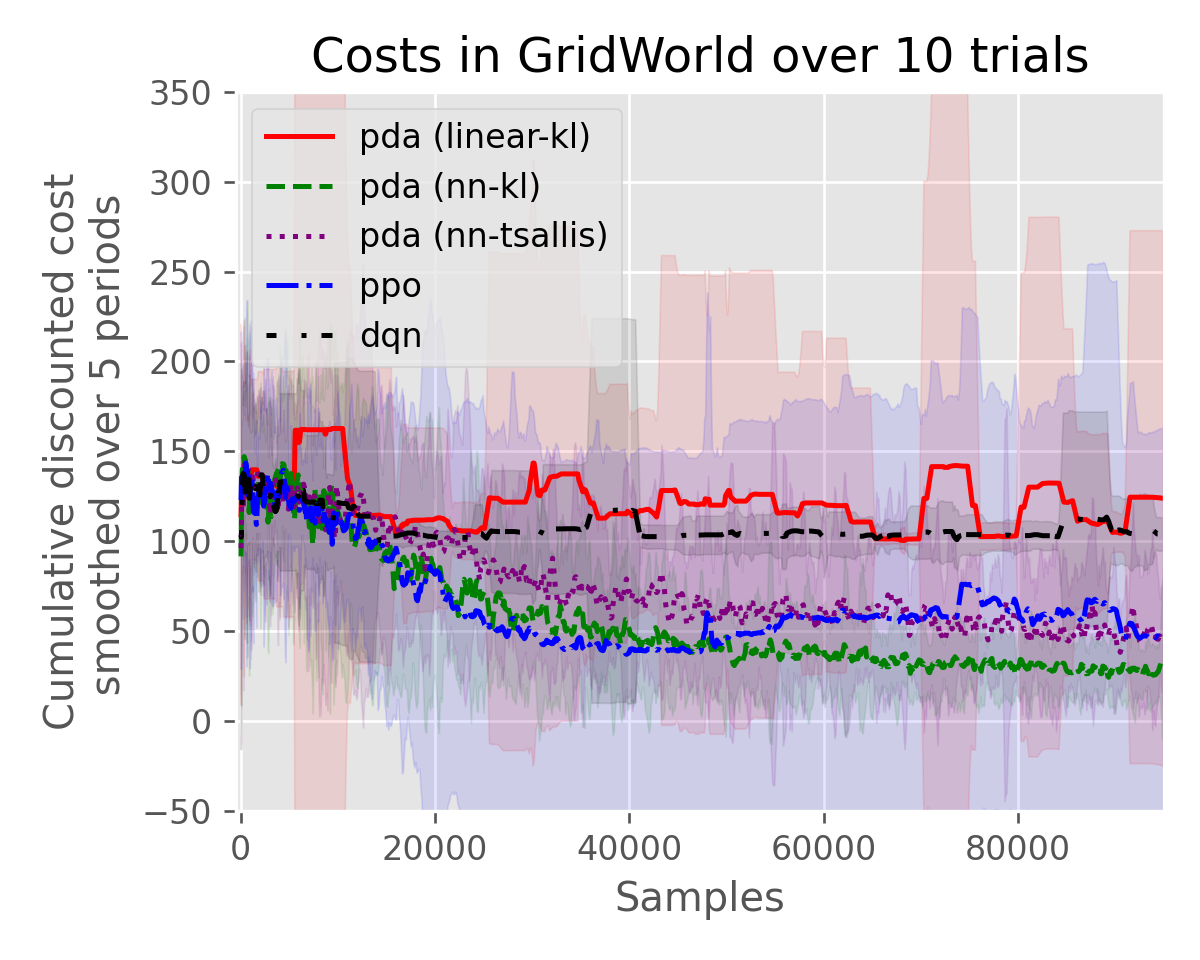}
    \captionof{figure}{Mean score and 95\% confidence interval (shaded region) in GridWorld.}
    \label{fig:gw}
\end{minipage}
\hspace{0.01\textwidth}
\begin{minipage}[t]{0.51\linewidth}
    \centering
    \includegraphics[width=\textwidth,valign=b]{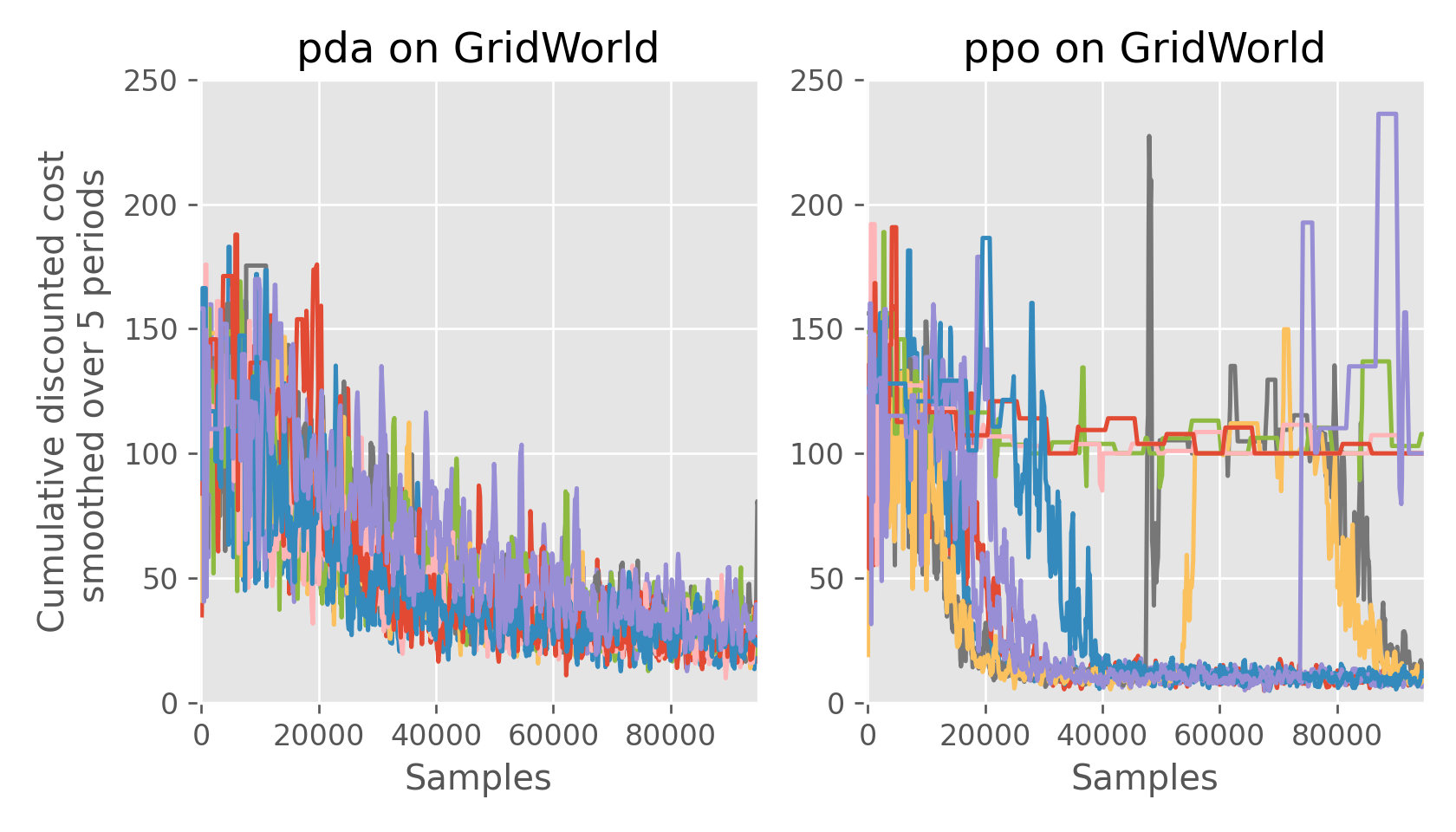}
    \captionof{figure}{Score for each of the ten seeds in GridWorld.}
    \label{fig:gw_h2h}
\end{minipage}
\vspace{-2em}
\end{figure}

Furthermore, PPO also has more variability in performance than PDA.
To better understand this phenomenon, we plot every seed's performance for \textit{pda-nn-kl} and \textit{ppo} in~\cref{fig:gw_h2h}.
Here, we can clearly see PDA has more consistent performance, where every seed obtains a relatively good policy. 
On the other hand, 4 out of 10 seeds in PPO fail to significantly improve after 100,000 samples.

\subsection{Lunar Lander}
Next, we consider the Lunar Lander problem, which is a simplified rocket trajectory optimization problem.
The goal is to land a rocket safely within a pre-defined landing spot using only four actions: fire either the left, right, or main (bottom) engine, otherwise not fire any engines. 
The rocket starts at the top center of the world with a random initial force applied to the center of mass. 
The agent receives various small costs for having the rocket be more stable, being closer to the landing site, and using the engine less often.
Meanwhile, a large negative cost is incurred for safely landing the rocket and a large positive cost for landing at the wrong site.\footnote{More details about the environment can be found at \url{https://gymnasium.farama.org}.}
It is important to note that we are considering a discounted cost.

In~\cref{fig:ll}, we compare the various RL algorithms.
This time, we see linear function approximation performs better in PDA than with neural networks.
Furthermore, PDA's cumulative discounted cost is higher than both PPO and DQN.
In our experience, we found PPO performs better than PDA when the cost is sparse, meaning a large portion of the cost arrives later in the sample trajectory after accomplishing a particular task.
This might be because PPO is designed to perform well on finite-horizon tasks, while PDA is designed for infinite-horizon discounted cost problems.

\begin{figure}[t]
\begin{minipage}[t]{0.33\linewidth}
    \centering
    \includegraphics[width=\textwidth,valign=b]{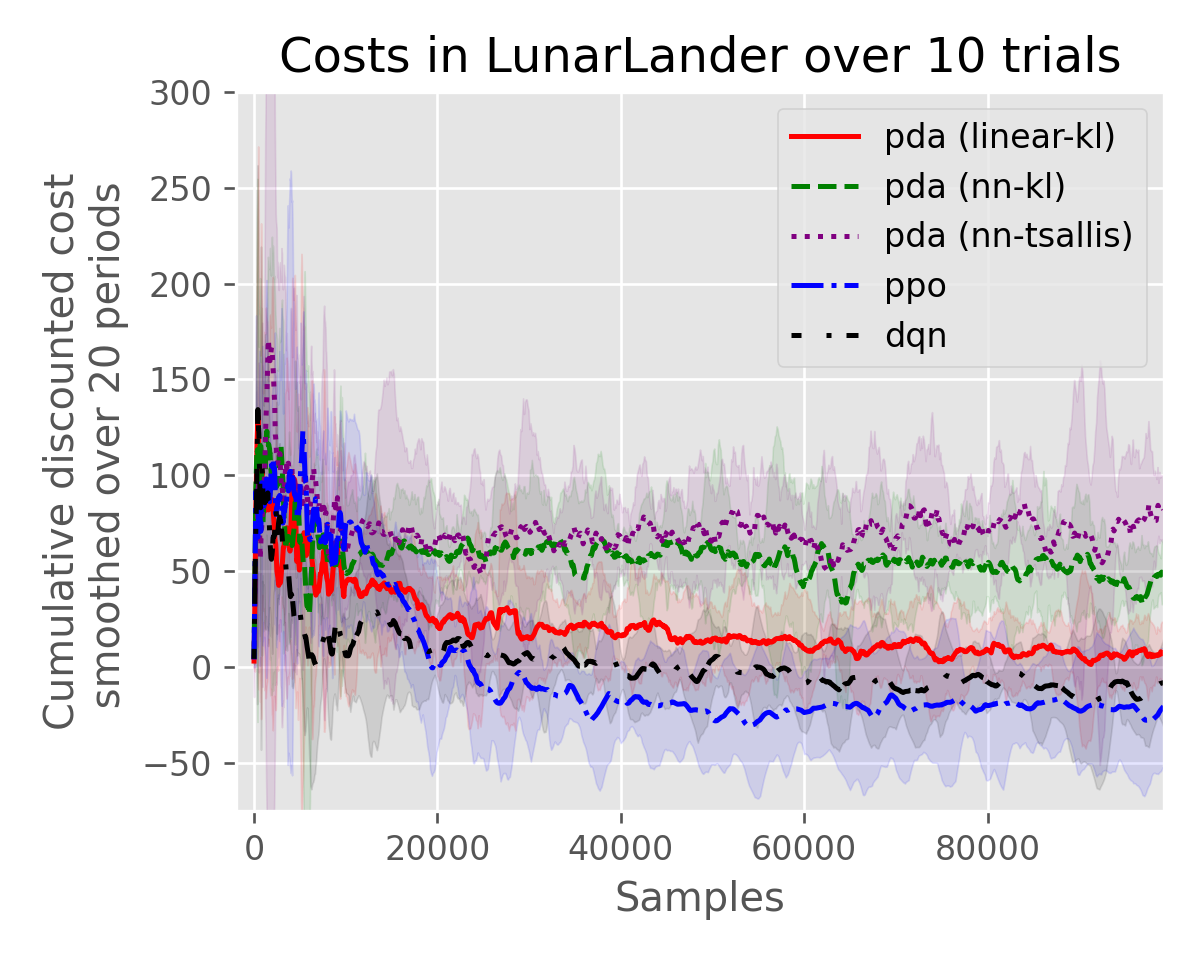}
    \captionof{figure}{Lunar Lander. See~\cref{fig:gw} for more details on the plot.}
    \label{fig:ll}
\end{minipage}
\hspace{0.01\textwidth}
\begin{minipage}[t]{0.3\linewidth}
    \centering
   \includegraphics[width=1\textwidth,valign=b]{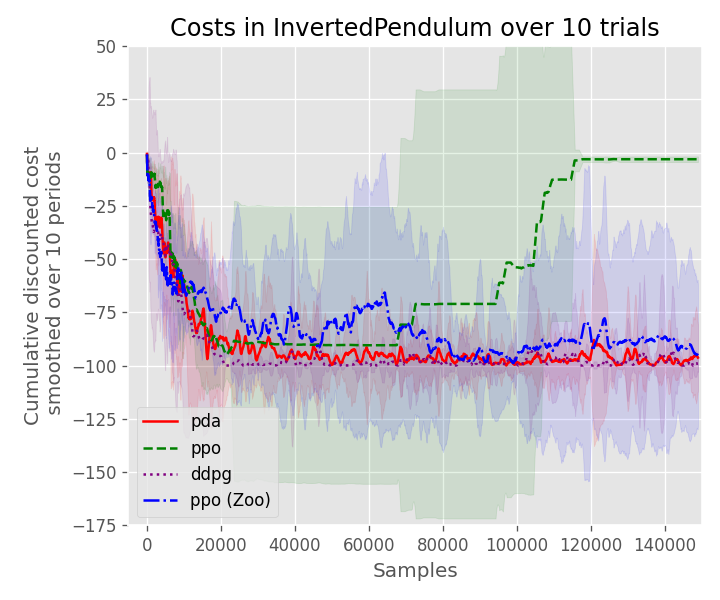}
    \captionof{figure}{Inverted pendulum.}
    \label{fig:invpend}
\end{minipage}
\begin{minipage}[t]{0.3\linewidth}
    \centering
    \includegraphics[width=1\textwidth,valign=b]{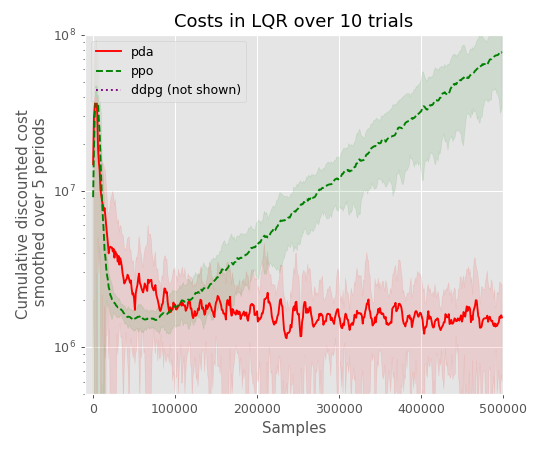}
    \captionof{figure}{Score on LQR. DDPG is not shown since its cost diverges.}
    \label{fig:lqr}
\end{minipage}
\vspace{-2em}
\end{figure}
\subsection{Inverted Pendulum}
Next, we consider Inverted Pendulum~\cite{barto1983neuronlike}.
In this continuous state and action space environment, the goal is to balance a pole fixed to a cart subject to gravitational and inertial forces (whose physics are simulated by MuJoCo).
The agent applies continuous forces (the action space is $\mathcal A = [-3,3]$) to the cart to balance the pole for as long as possible, where a negative cost of -1 is received for every time step the pole remains on the cart.
Clearly, the agent is incentivized to balance the pole for as long as possible.
The environment resets after 1,000 samples if the pole has not fallen already, so the optimal cumulative discounted cost is -100.

Before comparing the RL algorithms, let us discuss some implementation details in PDA.
Since PPO adds Gaussian noise to their policy (i.e., a randomized policy) to encourage action exploration, we similarly add independent zero-mean Gaussian noise whose covariance matrix is the identity matrix scaled by $1/{\sqrt{t}}$, where $t$ is current PDA iteration. 
This ensures the noise gradually decreases over time.
In fact, we found that without such exploration, the policy evaluation is not accurate enough to consistently improve the policy.
Also, the subproblem~\cref{eq:SPDA_step1} is solved using a recently developed universal smooth convex optimization solver~\cite{li2023simple}, which has good empirical performance and does not require knowledge of unknown Lipschitz smoothness parameters. 
In addition to running our tuned variants of PDA, PPO, and DDPG, we also ran PPO with pre-tuned hyperparameters from RL Baselines3 Zoo, labeled \textit{ppo (Zoo)}~\cite{raffin2021stable}. 

The results comparing PDA using neural networks and Stablebaselines3's implementation of PPO and deep deterministic policy gradient (DDPG) -- which is the analogue of DQN for continuous action space -- are shown in~\cref{fig:invpend}.
The plot showcases PDA and DDPG's superior performance over both variants of PPO.
In fact, the PPO that is (heuristically) tuned by our grid search diverges as the algorithm sees more samples.
This may be caused by overfitting its stepsize to the tuning phase, leading to poor performance when given more samples.

\subsection{Linear Quadratic Regulator for the Longitudinal Control of a Wide-Body Aircraft}
Our penultimate experiment is another continuous state and action space problem called linear quadratic regulator (LQR). 
In our instantiation of LQR, we consider a state-feedback control problem of the longitudinal dynamics of a Boeing 747 aircraft. 
Due to space constraints, we omit details, but the full model description can be found in~\cite{ju2022model}.
The main difference between Inverted Pendulum and LQR is the former has a bounded state-action space while the latter has an unbounded space of~$\mathbb{R}^{\nS + \nA}$, where $\nS = 5$ and $\nA = 4$.
Thus, this environment explores how different algorithms choose actions when the state can diverge.
Similar to the Inverted Pendulum experiments, we also applied action exploration to PDA.

We again compare PDA with neural networks against PPO and DDPG, as shown in~\cref{fig:lqr}.
PDA exhibits the best long-term performance, while PPO seems to have good initial performance, but it eventually diverges. 
Meanwhile, DDPG is unable to address the unboundedness as seen by its unbounded cost.

\subsection{Humanoid}
\begin{figure}[t]
\begin{minipage}[t]{0.38\linewidth}
    \centering
    \includegraphics[width=\textwidth,valign=b]{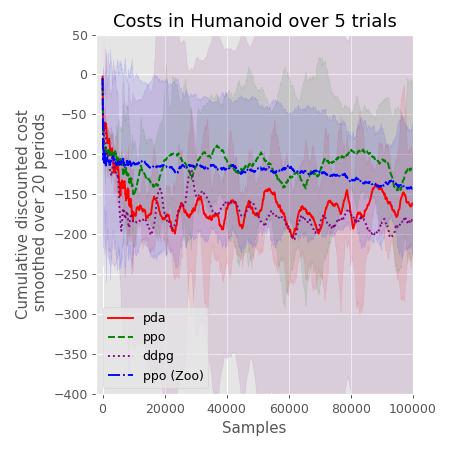}
    \captionof{figure}{Humanoid. See~\cref{fig:gw} for more details on the plot.}
    \label{fig:humanoid}
\end{minipage}
\hspace{0.01\textwidth}
\begin{minipage}[t]{0.60\linewidth}
    \centering
    \includegraphics[width=\textwidth,valign=b]{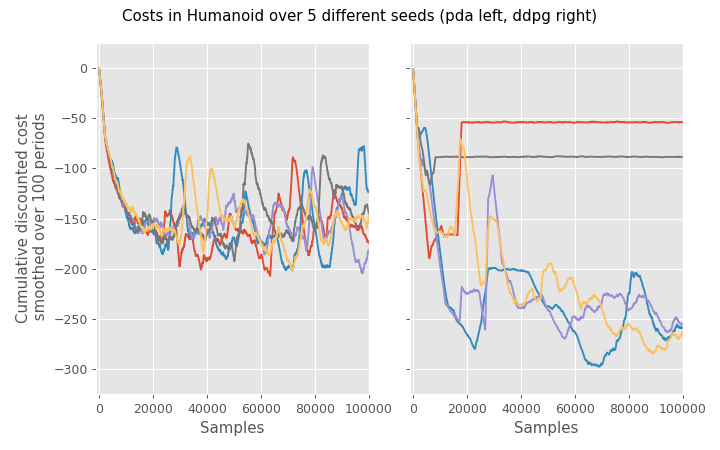}
    \captionof{figure}{Score for each of the five seeds in Humanoid.}
    \label{fig:humanoid_seed}
\end{minipage}
\vspace{-2em}
\end{figure}

We end with a high-dimensional continuous state and action space problem. 
In the Humanoid control problem, the goal is to control a 3D bipedal robot mimicking human movements. 
With a state space of $\mathbb{R}^{348}$ and action space of $[-0.4,0.4]^{17}$, the control problem aims to have the humanoid walk as fast as possible without falling over. 
More specifically, the humanoid receives positive rewards (equivalently, negative costs) for staying upright and moving forward while facing negative rewards (equivalently, positive costs) for large or aggressive movements. 
We refer to~\cite{tassa2012synthesis} for full details of the control problem.
We again applied action exploration to PDA and also ran PPO tuned by RL Baselines3 Zoo.

The results are shown in~\cref{fig:humanoid}. We see PDA on average outperforms both tuned variants of PPO. Meanwhile, PDA on average performs similarly to DDPG in the initial phase, and then DDPG on average improves after 80,000 samples. However, the performance of DDPG has larger variance. To better elucidate this, we plotted each seed's performance in~\cref{fig:humanoid_seed}. Analogous to performance for GridWorld in~\cref{fig:gw_h2h}, we see PDA has more consistent performance across different seeds, while DDPG varies more. In fact, 2 out of the 5 seeds for DDPG converge to a policy with significantly worse performance than from PDA. This sub-optimal performance is likely due to the fact DDPG, which employs function approximation with greedy policy updates, lacks convergence guarantees, while we already established the convergence of PDA.

\section{Concluding Remarks} \label{sec:conclusion}
In this paper, we study new policy optimization methods
for solving Markov decision process and reinforcement learning problems
over general state and action spaces. 
We first present a generalization of the policy mirror descent method
and suggest a novel function approximation scheme into it to deal with general state spaces.
We then introduce a new class of policy dual averaging (PDA) methods for reinforcement learning
that might be more amenable for function approximation in that it only requires
the approximation of the action-value function.
Numerical experiments show PDA exhibits robust and consistent performance across a range of optimal control and planning problems, sometimes outperforming state-of-the-art RL algorithms. 
We support these promising numerical performances with new convergence properties for both methods. 
\revise{Suppose certain curvature conditions are satisfied as described by~\eqnok{eq:def_convexity_PDA} and the surrounding text. In the deterministic case with exact action-value function evaluation, we establish linear convergence to global optimality. 
In the stochastic setting with noisy estimates of the action-value function, we derive a sublinear $O(k^{-1/2})$ (resp.~$O(k^{-1})$) rate of convergence to the neighborhood of global optimality with a general convex (resp.~strongly convex) regularizer, where the neighborhood size is proportional to the bias of the estimator.
On the other hand, when the curvature conditions are violated, both methods exhibit $O(k^{-1})$ convergence rates as defined by~\eqnok{eq:stationary_PMD0} and~\eqnok{eq:stationary_PMD1}. For PMD only, these results may imply convergence to a stationary point in the sense of~\eqnok{eq:stationary_pt}.}

There appears to be connections between such stationary points and  gradient-based local optimality conditions (see the discussion following~\cref{lem:monotonicity}). We leave a detailed investigation of this connection, potentially important for understanding the convergence of reinforcement learning algorithms, for future work.
\revise{Finally, an important future direction is to generalize the policy evaluation approach from~\cref{sec:generalization} to learn from online data, which may have time dependency and thus lack i.i.d.~samples.}

\vgap

\noindent {\bf Acknowledgment:}
The authors sincerely thank the reviewers and Professor Andrzej Ruszczynski for their thoughtful and constructive suggestions, which have significantly improved the quality of this manuscript.

\appendix
\section{Policy Evaluation} \label{sec:appendix_pe}
In this section, we explore when policy evaluation can be accurate.
For notational convenience, let $\cZ := \cS \times \cA$ be the state-action pair space.
Throughout the appendix, let $\|\cdot\|_\cZ$ be the uniform norm. 
\begin{definition}[Continuous stochastic control MDP]
  Consider an MDP $\mathcal M = (\cS, \cA, \cP, c, \gamma)$ with regularizer $h$, where both $c(\cdot,\cdot): \cZ \to [0,1]$ and $h^\cdot(\cdot) : \cZ \to \mathbb{R}$ are continuous in $\cZ$.
  In addition, suppose the transition kernel is defined according to the transfer function $s' = f(s,a,\xi)$, where $\xi$ is an i.i.d.~random variable (i.e., $\cP(s' \vert s,a) = \cP\{\xi \vert s' = f(s,a,\xi)\}$) and $f(\cdot,\cdot,\tilde{\xi})$ is continuous in $\cZ$ for any fixed $\tilde{\xi}$.
\end{definition}

The continuous stochastic control MDP is a special class of the stochastic control environment~\cite{bertsekas1996stochastic}, and it captures stochastic dynamical systems.
Our goal is to show this class of MDPs has action-value functions $Q^{\pi}$ that can be approximated arbitrarily well.
To do so, we will utilize universal kernels, which are dense in the space of continuous functions over a compact set~\cite{micchelli2006universal}.
To start, we will show the action-value $Q^\pi$ (defined in~\cref{sec_prob_formulation}) is continuous, which allows it to be approximated arbitrarily well.
\begin{lemma} \label{lem:continuous_Q}
  Given a continuous stochastic control MPD and a policy $\pi : \cS \to \cA$ that is continuous in $\cS$, then $Q^\pi$ is continuous in $\cZ$.
\end{lemma}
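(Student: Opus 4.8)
The plan is to use the transfer-function form of the dynamics, $s'=f(s,a,\xi)$, to represent $Q^\pi$ as a uniformly convergent series of continuous functions and then invoke the standard fact that a uniform limit of continuous functions is continuous. Fix an i.i.d.\ sequence $\xi_0,\xi_1,\ldots$ and, starting from $(s_0,a_0)=(s,a)$, generate the trajectory by $s_{t+1}=f(s_t,a_t,\xi_t)$ and $a_{t+1}=\pi(s_{t+1})$. First I would show, by induction on $t$, that for each fixed realization $(\xi_0,\ldots,\xi_{t-1})$ the map $(s,a)\mapsto(s_t,a_t)$ is continuous on $\cZ$: the case $t=0$ is the identity, and the inductive step composes the continuous map $f(\cdot,\cdot,\xi_t)$ with the continuous map $(s,a)\mapsto(s_t,a_t)$ and then with the continuous policy $\pi$. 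Since $c$ and $h$ are continuous, the per-stage cost $r_t:=c(s_t,a_t)+h^{a_t}(s_t)$ is therefore, for each fixed noise realization, a continuous function of $(s,a)$ with values in $[0,\bar c]$.

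Second, I would show each term $(s,a)\mapsto\bbe[r_t]$, where the expectation is over $(\xi_0,\ldots,\xi_{t-1})$, is continuous. For a convergent sequence $(s^{(n)},a^{(n)})\to(\bar s,\bar a)$, the continuity established above gives the pointwise convergence $r_t(s^{(n)},a^{(n)};\xi)\to r_t(\bar s,\bar a;\xi)$ for each fixed noise, while the uniform bound $r_t\le\bar c$ furnishes an integrable dominating function; the dominated convergence theorem then yields $\bbe[r_t(s^{(n)},a^{(n)})]\to\bbe[r_t(\bar s,\bar a)]$. Measurability of $\xi\mapsto r_t$ is automatic because $f,\pi,c,h$ are continuous, hence Borel measurable, and the expectations are well-defined under the standing assumptions of the paper.

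Third, because $Q^\pi(s,a)=\tsum_{t=0}^{\infty}\gamma^t\,\bbe[r_t]$ and $0\le\gamma^t\bbe[r_t]\le\gamma^t\bar c$ with $\tsum_{t=0}^{\infty}\gamma^t\bar c=\bar c/(1-\gamma)<\infty$, the Weierstrass M-test shows the series converges uniformly on $\cZ$. Being a uniform limit of the continuous partial sums, $Q^\pi$ is continuous, which is the claim.

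I expect the delicate point to be the interchange of limit and expectation in the second step: continuity of the single-stage expected cost in the initial condition hinges on continuity of the trajectory map in $(s,a)$ for each fixed noise realization together with dominated convergence, after which the infinite horizon is controlled routinely by the geometric tail. An equivalent route that sidesteps the explicit series is a policy-evaluation fixed-point argument: one shows the Bellman evaluation operator $V\mapsto c(\cdot,\pi(\cdot))+h^{\pi(\cdot)}(\cdot)+\gamma\,\bbe_\xi[V(f(\cdot,\pi(\cdot),\xi))]$ maps bounded continuous functions into bounded continuous functions (again by dominated convergence) and is a $\gamma$-contraction on that closed subspace, so its unique fixed point $V^\pi$ is continuous; continuity of $Q^\pi(s,a)=c(s,a)+h^a(s)+\gamma\,\bbe_\xi[V^\pi(f(s,a,\xi))]$ then follows by one further application of dominated convergence.
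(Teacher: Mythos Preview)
Your proposal is correct and follows essentially the same route as the paper: define the finite-horizon truncations (your partial sums $\tsum_{t=0}^{T-1}\gamma^t\bbe[r_t]$ are exactly the paper's $Q^\pi_T$), show they are continuous by induction through the continuous transfer function and policy, and pass to the limit via uniform convergence (your Weierstrass M-test, the paper's uniform limit theorem). Your write-up is in fact more careful than the paper's sketch, since you make explicit the dominated-convergence step needed to push continuity through the expectation over $\xi$, which the paper leaves implicit in the phrase ``a proof by induction shows''; the alternative contraction-mapping argument you outline is a nice bonus but not the approach the paper takes.
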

\begin{proof}
  We sketch the proof.
  Define the $T$-truncated action-value function,
  \begin{align*}
    Q^\pi_T(s,a) :=
    \bbe\left[\tsum_{t=0}^{T-1} \gamma^t [c(s_t, a_t) + h^{a_t}(s_t)] \mid
    s_0 = s, a_0 = a, a_t = \pi(s_t),
    s_{t+1} \sim \cP(\cdot | s_t, a_t)\right],
  \end{align*}
  By the properties of a continuous stochastic control MDP and continuity of $\pi$, a proof by induction shows that $Q^\pi_T$ is continuous on $\cZ$ for all $T \geq 1$.
  Since $\|Q^\pi_T\|_{\cZ}$ is bounded by $\bar{c}/(1-\gamma)$ and $\gamma < 1$, then $Q^\pi_T$ uniformly converges to $Q^\pi$. 
  By the uniform limit theorem, $Q^\pi$ is also continuous in $\cZ$.
\end{proof}

The policy $\pi$ will be set to $\pi_k$ from PDA, which is the solution to an optimization subproblem of~\eqnok{eq:SPDA_step1}.
We assume a solution to the subproblem exists.
We will show $\pi_k$ is continuous, which in view of the previous lemma, verifies $Q^{\pi_k}$ is continuous as well.
A similar result can be shown for PMD from~\eqnok{eq:PMD_step_fa}.
Before stating the result, we say the action set $\cA$ is \textit{polyhedral} when $\mathcal A := \{x \in \mathbb{R}^{\nA} : Ax=b, x \geq \mathbf 0\}$ for some full-rank matrix $A$ and vector $b$.
A polyhedral set $\cA$ generalizes both the finite-action and certain general action space problems.
For the following result, we skip its proof since it is a well-known result from sensitivity analysis; see~\cite[Theorem 5.1]{fiacco1990sensitivity} and references therein.
\begin{corollary} \label{cor:continuous_policy}
  \sloppy If $\cA$ is polyhedral, the function approximation $\tilde Q(\cdot, \cdot; \theta_k)$ and its derivative are continuously differentiable for all states $s$ and weights $\theta_k$, and $\pi_{k+1}(s)$ satisfies the strict complementary slackness condition for any state $s$, 
  then $\pi_{k+1}$ is locally continuously differentiable.
\end{corollary}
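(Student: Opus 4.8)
The plan is to treat the subproblem \eqref{eq:SPDA_step1} as a parametric nonlinear program in which the state $s$ plays the role of the parameter while the action $a$ is the decision variable, and then to apply the implicit function theorem to the associated Karush--Kuhn--Tucker (KKT) system. Writing the objective as $F(s,a) := \tsum_{t=0}^k \beta_t [\tilde Q(s,a;\theta_t) + h^a(s)] + \lambda_k D(\pi_0(s), a)$ and the feasible set as $\cA = \{a \in \bbr^{\nA} : Aa = b,\ a \ge 0\}$, the first step is to record the KKT conditions: there exist multipliers $\mu$ (for $Aa = b$) and $\nu \ge 0$ (for $a \ge 0$) with $\nabla_a F(s,a) + A^\top \mu - \nu = 0$, $Aa = b$, and $\nu_i a_i = 0$ for all $i$. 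Since the term $\lambda_k D(\pi_0(s),\cdot)$ already makes $F(s,\cdot)$ strongly convex in $a$, the minimizer $\pi_{k+1}(s)$ is unique and is characterized \emph{exactly} by this system, so it suffices to study how the KKT solution varies with $s$.

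The second step is to verify the three regularity hypotheses of the standard sensitivity theorem. The second-order sufficient condition follows immediately from strong convexity of $F(s,\cdot)$, which renders the reduced Hessian positive definite on the tangent space of the active constraints. Strict complementary slackness is assumed outright, and it pins down precisely which nonnegativity constraints are active. Finally, the linear independence constraint qualification is where the polyhedral structure enters: at the solution the active constraint gradients are the rows of $A$ together with the vectors $-e_i$ for indices $i$ with $\pi_{k+1,i}(s) = 0$, and full row rank of $A$ combined with strict complementarity ensures these are linearly independent.

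The third and central step is to exploit that, under strict complementary slackness, the active set is locally constant. Hence in a neighborhood of a fixed reference state one may replace each active inequality $a_i \ge 0$ by the equality $a_i = 0$ and discard the inactive multipliers, reducing the KKT conditions to a square system $G(s,a,\mu,\nu) = 0$. The entries of $G$ are $C^1$ in $s$, using that $\tilde Q$ and its derivative are continuously differentiable and that the remaining terms $h^a(s)$ and $D(\pi_0(s),a)$ depend smoothly on $s$ (the latter can be arranged inductively through smoothness of $\pi_0$). The Jacobian of $G$ with respect to $(a,\mu,\nu)$ is the bordered KKT matrix, and the implicit function theorem then delivers locally continuously differentiable maps $s \mapsto (\pi_{k+1}(s),\mu(s),\nu(s))$, which is the claim.

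The main obstacle is establishing nonsingularity of that bordered KKT matrix, since this is the one place where the structural assumptions must be combined carefully: one shows that any vector in its kernel must lie in the tangent space of the active constraints (by the constraint qualification) and simultaneously be annihilated by the positive definite reduced Hessian (by second-order sufficiency), hence must vanish. Everything else is routine bookkeeping. I would also remark that if one only wants continuity of $\pi_{k+1}$ rather than continuous differentiability, strict complementary slackness may be dropped in favor of a weaker Lipschitz-stability result available for strongly convex programs over polyhedra; the strict complementarity hypothesis is exactly what upgrades continuity to local $C^1$ smoothness.
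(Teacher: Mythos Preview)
The paper does not actually prove this corollary: it explicitly skips the proof and refers the reader to the classical parametric sensitivity theorem of Fiacco (Theorem~5.1 in \cite{fiacco1990sensitivity}). Your proposal is precisely a sketch of the proof of that theorem---set up the KKT system for the parametric program \eqref{eq:SPDA_step1}, verify the second-order sufficient condition, strict complementarity, and a constraint qualification, freeze the active set, and then apply the implicit function theorem to the reduced KKT system, whose Jacobian is the nonsingular bordered KKT matrix. So your approach coincides with the paper's (cited) approach, only made explicit rather than invoked as a black box.

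One point deserves correction: your claim that LICQ follows from ``full row rank of $A$ combined with strict complementarity'' is not right in general. At a feasible point with active index set $I = \{i : a_i = 0\}$, LICQ asks that the rows of $A$ together with $\{e_i : i \in I\}$ be linearly independent, which is equivalent to the columns $\{A_{\cdot j} : j \notin I\}$ spanning $\bbr^m$. Strict complementarity only pins down \emph{which} set $I$ is active; it says nothing about this spanning property, and degenerate basic feasible solutions give easy counterexamples. In the paper's intended application---the simplex with KL divergence, where $\pi_{k+1}(s)$ lies in the relative interior---no inequality is active and LICQ is trivially satisfied, but in the general polyhedral case LICQ is a separate hypothesis that Fiacco's theorem assumes and that you should list as an assumption rather than attempt to derive.
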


The assumption on strict complementary slackness seems to be standard in the sensitivity analysis of optimization problems~\cite{fiacco1990sensitivity,bonnans2013perturbation}.
Moreover, it is satisfied in the finite-action case with $D(\cdot,\cdot)$ in~\eqnok{eq:SPDA_step1} set to the KL divergence, which ensures $\pi_{k+1}(s)$ is strictly positive.

\subsection{Generalization Error Bounds} \label{sec:generalization}
Denote $z \equiv [s,a] \in \cZ$ as a state-action pair.
With some abuse of notation, we may write $Q^\pi(z)$ in place of $Q^\pi(s,a)$.
Throughout the section, we make the following assumption about our Markov chain model.
\begin{assumption} \label{asmp:compact_and_ergodic}
  The state-action space $\cZ$ is compact. 
  We also assume a resetting model, where the Markov chain $\{z_t\}_{t \geq 0}$ can be reset to any arbitrary state-action pair $z$ at any time.
\end{assumption}

By the Markov property, resetting allows us to obtain independent trajectories. 
It may be possible to extend our results to the more difficult online (i.e., single trajectory) setting by making additional ergodicity assumptions on the Markov chain.
However, to the best of our knowledge, policy evaluation generalization bounds for general state-action spaces seems to be open even in the resetting model, so our goal is to provide some intuition about this simpler setting here.

Let $\hat{\cA}$ be a finite subset of $\cA$. Later, we will address how to extend our results to the more general $\cA$.
Consider the linear function approximation, 
\begin{align} \label{eq:random_features_approx}
  \tilde{Q}(z;\theta) 
  := 
  \sum\limits_{i=1}^N \theta_i \kappa(z_i, z), \quad \text{where} \quad \kappa(z_i,z) := \frac{1}{D} \sum\limits_{j=1}^D \phi_j(z_i)^T\phi_j (z),
\end{align}
while $\theta \in \mathbb{R}^N$ are the weights to be fitted/learned, and $\{\phi_j: \cZ \to \mathbb{R}^d\}_{j=1}^D$ and $\{z_i\}_{i=1}^N$ are some to-be-determined feature maps and basis points, respectively.
By choosing random Fourier features for $\phi_j$~\cite{rahimi2007random}, 
we ensure $\tilde{Q}(\cdot; \theta)$ is infinitely differentiable,
the gradient $\nabla_a \tilde Q(s,a;\theta)$ is bounded (and hence $\tilde Q(s, \cdot; \theta)$ is Lipschitz continuous) with high probability, 
and $\tilde{Q}(\cdot; \theta)$ approximates a Gaussian kernel, which is a \textit{universal kernel}, i.e., forms a space that is dense in space of continuous functions over a compact domain~\cite{micchelli2006universal}. 

We will discuss how to best fit/learn $\theta$ to the current policy $\pi_k$. 
First, let us fix any error $\varepsilon > 0$ and failure rate $\delta \in (0,1)$.
Next, sample $z_i$ independently and uniformly from $\mathcal S \times \hat{\mathcal A}$.
For each $z_i$, pair it with an independent sample $y_i \sim Y(z_i)$, where $Y(z)$ is some distribution of ``high-quality'' estimates of $Q^{\pi_k}(z)$.
To be more precise, we can sample such a $y_i$ by first gathering $m > 0$ estimates $\{\tilde{y}_i^{(j)}\}_{j=1}^m$, where $\tilde{y}_i^{(j)}$ is an independent and nearly unbiased estimate of $Q^{\pi_k}(z_i)$ with absolute error at most $2\bar{c}/(1-\gamma)$. 
Such a $\tilde{y}_i^{(j)}$ can be constructed with the resetting model and a Monte Carlo estimate~\cite{li2023policy,RichardBarto2018}.
Then, aggregate them into $y_i = m^{-1}\sum_{j=1}^m \tilde{y}_i^{(j)}$. 
By construction, $y_i$ has absolute error at most $2\bar{c}/(1-\gamma)$ with probability 1. 
And by the Azuma-Hoeffding inequality, the absolute error can be improved to be at most $\varepsilon$ with probability $1-\delta$ when we set $m = O\{[\bar{c}/(\varepsilon \cdot (1-\gamma))]^2 \ln(1/\delta)\}$.
In particular, by setting $\delta = [\varepsilon \cdot (1-\gamma)/(2\bar{c})]^2$, then
\begin{align} \label{eq:small_err}
    \mathbb{E}_{z \sim F_\cZ, y \sim Y(z)} (Q^{\pi_k}(z) - y)^2
    \leq
    \varepsilon^2 \cdot (1-\delta) + (2\bar{c}/(1-\gamma))^2 \cdot \delta
    \leq
    2\varepsilon^2,
\end{align}
where $F_\cZ$ is any distribution over $\cZ$.
\revise{Then compute the best fitting weights $\theta_k$ from~\eqnok{eq:random_features_approx} by solving a ridge regression problem $(K + \lambda I)\theta = y$, where the data-label pairs~$\xi_k := \{(z_i,y_i)\}_{i=1}^N$ appear in the kernel matrix $K = \{\kappa(z_i,z_j)\}_{i,j}$ from~\eqnok{eq:random_features_approx} and labels $y$.
Let us briefly derive the computational cost to solve the ridge regression problem.
Since the regularization parameter is $\lambda = N^{-1/2}$ based on~\cite{rudi2017generalization}, and the random Fourier features $\phi_j(\cdot)$ in~\eqnok{eq:random_features_approx} are bounded by $\sqrt{2}$ elementwise~\cite{rahimi2007random}, then one can verify that the smallest and largest eigenvalue satisfy, respectively, $N^{-1/2} \leq \lambda_{N}(K + \lambda I)$ and $\lambda_1(K + \lambda I) \leq 3N$. 
From this, we deduce the iteration complexity for solving the ridge regression problem to error $\epsilon$ with conjugate gradient or the accelerated gradient method is $O(N^{-3/4}\log(1/\epsilon))$~\cite{nesterov2013introductory}.}

We are now ready to bound the generalization error of $\tilde{Q}(\cdot; \theta_k)$.
Recall for any probability measures $\rho$ and $\rho'$ over a measurable space where $\rho$ is absolutely continuous w.r.t~$\rho'$, the Radon-Nikodym derivative is $\frac{d\rho}{d\rho'}$. 
The big-O notation $\tilde{O}(\cdot)$ hides polylogarithmic dependence.
\begin{proposition} \label{prop:error_bounds}
  Let $\rho^*$ be a probability measure over $\cS$.
  If ${Q}^{\pi_k}(\cdot)$ is continuous and bounded, $m = \tilde{O}\{[\bar{c}/(\varepsilon \cdot (1-\gamma))]^2\}$, and $N$ is sufficiently large,
  then there exists a $D = \tilde{O}(\sqrt{N})$ and universal positive constant $c_1$ such that
  \begin{align*}
    \mathbb{E}_{\xi_k} \mathbb{E}_{s \sim \rho^*}[ \max_{a \in \hat{\cA}} (\tilde{Q}(s,a;\theta_k) - {Q}^{\pi_k}(s,a))^2] 
    \leq
    \textstyle 
    16 \big\| \frac{d\rho^*}{dU(\cS)} \big\|_\infty \vert \hat{\cA \vert} (\frac{5c_1}{\sqrt{N}} + \varepsilon^2),
  \end{align*}
  where $D$ and $N$ are from~\eqnok{eq:random_features_approx} and $dU(\cS)$ is the uniform density function over $\cS$.
\end{proposition}
\begin{proof}
  Let $\cH$ be a reproducing kernel Hilbert space (mapping $\cZ$ to reals) defined as the completion of the linear span of a Gaussian (i.e., universal) kernel.
  Define 
  $\mathcal{E}(f) :=  \mathbb{E}_{\xi_k} \mathbb{E}_{z \sim U(\cS \times \hat{\cA}), y \sim Y(z)} (f(z) - y)^2$, where $U(\cS \times \hat{\cA})$ is the uniform distribution over $\cS \times \hat{\cA}$.
  
  Since $\mathcal H$ is defined as a universal kernel over a compact space $\cZ$ and $Q^{\pi_k}$ is continuous and bounded, then for any $\varepsilon > 0$, there exists an $\hat{f} \in \mathcal H$ s.t.~$\|\hat f - Q^{\pi_k}\|_{\cZ} \leq \varepsilon$~\cite{micchelli2006universal}.
  Combining this observation with~\eqnok{eq:small_err}, we derive
      $\mathcal{E}(\hat f) 
      \leq 
      2[\|\hat f - Q^{\pi_k}\|^2_\cZ + \mathcal{E}(Q^{\pi_k})]
      \leq
      2(\varepsilon^2 + 2\varepsilon^2)$.
  Thus, we can conclude
  \begin{align*}
    \mathbb{E}_{\xi_k} \mathbb{E}_{s \sim \rho^*}[ \max_{a \in \hat{\cA}} (\tilde{Q}(s,a;\theta_k) - {Q}^{\pi_k}(s,a))^2] 
    &\leq
    \textstyle \| \frac{d\rho^*}{dU(\cS)} \|_\infty \vert \hat{\cA} \vert \cdot 
    \mathbb{E}_{\xi_k} \mathbb{E}_{z \sim U(\cS \times \hat{\cA})} (\tilde{Q}(z;\theta_k) - {Q}^{\pi_k}(z) )^2
    \\
    &\leq
    2\textstyle \| \frac{d\rho^*}{dU(\cS)} \|_\infty \vert \hat{\cA} \vert \cdot [  \mathcal{E}(\tilde{Q}(\cdot; \theta_k)) +  \mathcal{E}(Q^{\pi_k})]
    \\
    &\leq
    \textstyle 2\| \frac{d\rho^*}{dU(\cS)} \|_\infty \vert \hat{\cA} \vert \cdot [ {40c_1}/{\sqrt{N}} + \inf_{f \in \cH} \mathcal{E}(f) + 2\varepsilon^2 ] \\
    &\leq
    \textstyle {2 \| \frac{d\rho^*}{dU(\cS)} \|_\infty \vert \hat{\cA} \vert } \cdot [40c_1/\sqrt{N} + 8\varepsilon^2],
  \end{align*}
  where the first line is by a change-in-measure argument and compactness of $\cS$, the third line is by~\cite[Corollary 1]{rudi2017generalization} as well as~\eqnok{eq:small_err}, and the last line is with the help of our earlier bound on $\mathcal{E}(\hat f)$ and feasibility of $\hat f$.
\end{proof}

More details about the minimum size of $N$ are in~\cite[Corollary 1]{rudi2017generalization}.
The assumption on $Q^{\pi_k}(\cdot)$ is satisfied based on our arguments in the previous section for the continuous stochastic control MDP.
If the action space is finite (i.e., $Q^{\pi_k}(s,\cdot)$ has a finite domain for any state $s \in \cS$), we can extend it to be continuous by working in the space of randomized policies like in~\eqnok{eq:PMD_step_finite} and the surrounding discussions.
While the above result holds over a finite subset $\hat{\cA} \subset \cA$, 
it can be extended to general action spaces $\cA$ if we further assume both $Q^{\pi_k}(s,\cdot)$ and $\tilde{Q}^{\pi_k}(s,a)$ are Lipschitz continuous w.r.t.~the actions (i.e.~\cref{eq:Lip_psi} and~\cref{eq:Lip_Q}), fix $\hat{\cA}$ to be an $\epsilon$-net of $\cA$, and apply Monte Carlo sampling-type arguments~\cite[Section 5.3]{shapiro2021lectures}. 
Setting $\rho^* = \nu^*$, we can bound both deterministic error terms~\eqnok{eq:pda_finite_assum1} and~\eqnok{eq:bnd_error_spda}.
More specifically, we obtain $N = O(\epsilon^{-4})$ i.i.d.~samples of $z_i$, each of which requires $m = \tilde{O}(\epsilon^{-2})$ samples to reduce the variance, to ensure the mean squared error is at most $\epsilon^2$. 
One can similarly bound~\eqnok{eq:pda_finite_assum3}.
By Jensen's inequality, the aforementioned deterministic error terms are then at most $\epsilon$.

\renewcommand\refname{Reference}

\bibliographystyle{plain}
\bibliography{GeorgeLan}

\end{document}